\newtheorem{theorem}{Theorem}
\newtheorem{corollary}{Corollary}
\newtheorem{lemma}{Lemma}
\newtheorem{definition}{Definition}
\newtheorem{claim}{Claim}
\newtheorem{assumption}{Assumption}
\newtheorem{proposition}{Proposition}
\newtheorem{problem}{Problem}
\newtheorem{remark}{Remark}
\newcommand{\bSigma}{\boldsymbol{\Sigma}}
\newcommand{\cX}{\mathcal{X}}
\newcommand{\PI}{\operatorname{PI}}
\newcommand{\LSI}{\operatorname{LSI}}
\par\vspace{4mm}}
\newcommand{\var}{\mathrm{Var}}
\newcommand{\BlackBox}{\rule{1.5ex}{1.5ex}}  %
\def\QED{~\rule[-1pt]{5pt}{5pt}\par\medskip}
\newtheorem{theorem}{Theorem}[section]
\newtheorem{example}{Example}[section]
\newtheorem{lemma}{Lemma}[section]
\newtheorem{proposition}{Proposition}[section]
\newtheorem{corollary}{Corollary}[section]
\newtheorem{definition}{Definition}[section]
\newcommand{\rbr}[1]{\left(#1\right)}
\newcommand{\cbr}[1]{\left\{#1\right\}}
\DeclarePairedDelimiterX{\inp}[2]{\langle}{\rangle}{#1, #2}
\newcommand*\diff{\mathop{}\!\mathrm{d}}
\newcommand*\lrb[1]{{\left[#1\right]}}
\newcommand*\lrbb[1]{\left\{#1\right\}}
\newcommand*\lrp[1]{\left(#1\right)}
\newcommand*\lrn[1]{{\left\|#1\right\|}}
\newcommand*\lin[1]{\bm{\left\langle} #1 \bm{\right\rangle}}
\newcommand{\Ent}{\operatorname{Ent}}
\newcommand{\FI}{\operatorname{FI}}
\newcommand{\PFI}{\operatorname{PFI}}
\newcommand{\E}{\mathbb{E}}
\newcommand{\ba}{\boldsymbol{a}}
\newcommand{\Id}{\mathbb{I}}
\newcommand{\Var}{\operatorname{Var}}
\title{Fast Conditional Mixing of MCMC Algorithms for Non-log-concave Distributions}
\author{%
 Xiang Cheng\thanks{Alphabetical author order} \\ MIT \\ x.cheng@berkeley.edu \And
 Bohan Wang\footnotemark[1] \\ USTC
 \\
 bhwangfy@gmail.com\AND
 Jingzhao Zhang\footnotemark[1] \\ IIIS, Tsinghua; Shanghai Qizhi Institute \\jingzhaoz@mail.tsinghua.edu.cn \And 
 Yusong Zhu\footnotemark[1] \\ Tsinghua University
 \\zhuys19@mails.tsinghua.edu.cn
}
\begin{document}

\maketitle

\begin{abstract}
MCMC algorithms offer empirically efficient tools for sampling from a target distribution $\pi(x) \propto \exp(-V(x))$. However, on the theory side, MCMC algorithms suffer from slow mixing rate when $\pi(x)$ is non-log-concave. Our work examines this gap and shows that when Poincar\'e-style inequality holds on a subset $\cX$ of the state space, the conditional distribution of MCMC iterates over $\cX$ mixes fast to the true conditional distribution. This fast mixing guarantee can hold in cases when global mixing is provably slow. We formalize the statement and quantify the conditional mixing rate. We further show that conditional mixing can have interesting implications for sampling from mixtures of Gaussians, parameter estimation for Gaussian mixture models and Gibbs-sampling with well-connected local minima.

\end{abstract}

\section{Introduction}
Sampling from a given target distribution of the form $\pi(x) \propto e^{-V(x)}$ plays a central role in many machine learning problems, such as Bayesian inference, optimization, and generative modeling \cite{durmus2019analysis,kalai2006simulated,kingma2021variational}. The Langevin MCMC algorithm in particular has received a lot of recent attention; it makes use of the first-order gradient information $\nabla V(x)$, and can be viewed as the sampling analog of gradient descent.

Langevin MCMC has been shown to converge quickly when $\pi(x)$ is log-concave \cite{dalalyan2017theoretical,durmus2017nonasymptotic}. More recently, similar guarantees have been established for $p(x)$ satisfying weaker conditions such as log-Sobolev inequality (LSI) \cite{vempala2019rapid}, for instance, $\pi(x)\propto e^{-V(x)}$ can have a good LSI constant when $V(x)$ is a small perturbation of a convex function. However, few guarantees exist for general non-log-concave distributions. One simple example is when $p(x)$ is a well-separated mixture of two Gaussian distributions; in this case, one verifies that Langevin MCMC mixes at a rate proportional to the inverse of the exponential of the separation distance.

Many important modern machine-learning problems are highly non-convex, one prominent class being functions arising from neural networks. Though finding the global minimum of a non-convex function can be difficult, gradient descent can often be shown to converge rather quickly to a \emph{local optimum} \cite{balasubramanian2022towards}. This raises the important question:
\begin{displayquote}
\emph{What is the sampling analog of a local minimum? And how can we sample efficiently from such a minimum?}
\end{displayquote}
In \cite{balasubramanian2022towards}, authors provide a partial answer to this question by adopting the view of Langevin Diffusion as the \emph{gradient flow of KL divergence in probability space}. Under this perspective, the \emph{gradient} of KL divergence is given by the Fisher Information (FI). Authors of \cite{balasubramanian2022towards} show that LMC achieves $\epsilon$ error in FI in $O(1/\epsilon^2)$ steps. 





However, one crucial question remains unanswered: how does the local optimality of FI help us sample from non-convex distributions? Intuitively, small FI is useful for characterizing local convergence when $\pi$ is multi-modal. Authors of \cite{balasubramanian2022towards} suggested this connection, but it remains unclear how local mixing is defined and how \emph{small FI} can quantitatively lead to \emph{good local mixing}. Thus motivated, one of the goals of this paper is to provide a useful interpretation of the FI bound.

To this end, we propose a rigorous quantitative measure of "local mixing". We also provide a more general notion of "stationary point" for the sampling problem. Under these definitions, we show that LMC can achieve $\epsilon$ error in our proposed "measure of local mixing", in polynomial time. Finally, we consider discrete-time analog of the aforementioned ideas, and prove local convergence for random walk on a hypercube. Below is a detailed description of our theoretical contributions.

\subsection{Main Contributions}
\begin{enumerate}

    \item We define a notion of \emph{conditional convergence}: Let $\cX \subseteq \Omega$ denote a subset of the state space. We study the convergence of $ \pi_t | \cX$ to  $\pi | \cX$, where $\pi_t$ denotes distributions of LMC iterates and $\pi$ denotes the target distribution. This definition of convergence is much weaker than the standard global convergence, but in exchange, LMC can achieve fast conditional convergence in settings where global convergence is known to be exponentially slow. 
    
    \item We define local Logarithmic Sobolev Inequality and show how to combine it with existing results on the convergence of Fisher information to derive the conditional convergence of LMC.
    \item When local Logarithmic Sobolev Inequality does not hold, we define local Poincaré Inequality and Poincaré Fisher information (which is an analogy of Fisher information). We show the convergence of Poincaré Fisher information assuming strong dissipativity and show the conditional convergence of LMC when local Poincaré Inequality is present.
    \item To showcase the applications of our results, we respectively study sampling from Gaussian mixture model with the same covariance and sampling from the power posterior distribution of symmetric two-component Gaussian mixtures. The global isometric constants of these examples are exponential in dimension and may have slow global convergence. We show that the local isoperimetric constants of these examples are polynomial in dimension, and prove fast conditional convergence for these examples.
    
    \item In Theorem \ref{thm:gibbs-conditional}, we consider an application of our result to Gibbs sampling on discrete state space. We show that fast conditional mixing happens when the spectral gap is not small. We further show in Theorem~\ref{thm:quasi-convex} that a subset has large spectral gap if it contains only one local minimum and is well connected.
    
\end{enumerate}

\section{Related Work}
When the target distribution $\pi$ is strongly log-concave, the entropy $\Ent_{\pi}\left[\frac{\mu}{\pi}\right]$ is known to be strongly convex with respect to $\mu$, and thus Langevin Dynamics converges exponentially fast \cite{bakry2006diffusions}. Such a result is later extended to LMC \cite{dalalyan2012sparse,cheng2018convergence,durmus2019analysis,dalalyan2019user,mousavi2023towards}. Several works further loosen the assumption by using isoperimetric inequalities such as  Logarithmic Sobolev Inequality and Poincaré Inequality instead of strong log-concavity \cite{vempala2010recent,chewi2021analysis,wibisono2019proximal,erdogdu2020convergence}. However, there are few existing works on general non-log-concave sampling. Recently, Balasubramanian \cite{balasubramanian2022towards} defines convergence in relative Fisher information as a kind of "weak convergence" for sampling and proves a polynomial guarantee in general non-log-concave case only assuming global Lipschitz condition. However, this paper doesn't give any rigorous statistical interpretation of this weak convergence; Majka et al. \cite{majka2019nonasymptotic} and Erdogdu et al. \cite{erdogdu2020convergence} study the cases when the target distribution is non-log-concave but has some good tail growth or curvature; Ma et al. \cite{Ma_2019} analyze the situation when the target distribution is non-log-concave inside the region but log-concave outside. Although these works give strict proofs of polynomial time guarantee in their setting, their results only hold for a small branch of non-log-concave distributions. It is still hardly possible to obtain a polynomial guarantee in general non-log-concave cases. Multimodality, as a special case of non-log-concavity, has attracted lots of attention due to its prevalence in applied science. Many modified versions of MCMC were proposed to try to tackle the sampling of these distributions, such as Darting MC \cite{Ahn2013DistributedAA}, Wormhole HMC \cite{lan2014wormhole}, and etc. However, these algorithms require explicit knowledge of the location of the modes. 

\section{Conditional Mixing for MCMC}
In this section, we provide a formal definition of "conditional mixing". Specifically, let $\mu_t$ denote the distribution of a  Markov chain $\{Z_t\}_t$ at time $t$. We assume that the Markov chain dynamics is reversible with a unique stationary distribution $\pi$. Existing analyses mostly focus on understanding the rate of convergence measured by $d(\mu_t, \pi)$ where $d$ is some probability distance. 

However, unless the stationary distribution $\pi$ satisfies certain restrictive properties (e.g., log-concavity), the rate of convergence can be exponentially slow in the problem dimension or the distribution moments even for simple distributions such as the mixture of Gaussians. For this reason, we consider a weaker notion of convergence below. 
\begin{definition}[Conditional mixing]
    Given a distribution $\mu_t$ supported on the state space $\Omega$. We say $\mu_t$ converges conditioned on set $\cX \subseteq \Omega$ with respect to the divergence $d$ if 
    \begin{align*}
        d(\mu_t | \cX, \mu | \cX) \le \epsilon,
    \end{align*}
    where we have the conditional distribution
    \begin{align*}
        \mu_t | \cX  (x) = \tfrac{\mu_t(x)\mathbbm{1}\cbr{x \in \cX} }{\mu_t(\cX)}.
    \end{align*}
\end{definition}

For now we can think of the distance $d$ as the total variation distance. Later we will discuss stronger divergence such as KL-divergence or Chi-squared divergence.

The focus of our work is on identifying several sufficient conditions for fast convergence, and quantitatively bounding the convergence rate under these conditions. We focus on two MCMC algorithms: the Langevin Monte Carlo in continuous space, and the Gibbs sampling algorithm in discrete space. We further discuss the implications of conditional convergence for the two algorithms.

\section{Conditional Mixing  for  Langevin Monte Carlo}
In this section, we study the conditional mixing of the Langevin Monte Carlo (LMC) algorithm. This section is organized as follows: in Subsection \ref{subsec: preliminary}, we first introduce the Langevin Monte Carlo algorithm, Langevin Dynamics, functional inequalities, and the Fisher information; in Subsection \ref{subsec: main results}, we provide our main results characterizing the conditional convergence of LMC; finally, in Subsection \ref{subsec:application}, we showcase two applications of our main results.
\subsection{Preliminaries}
\label{subsec: preliminary}
\textbf{Langevin Monte Carlo.} We are interested in the convergence of Langevin Monte Carlo (LMC), which is a standard algorithm employed to sample from a target probability density $\pi\propto e^{-V}:\mathbb{R}^d\rightarrow \mathbb{R}$, where $V$ is called the \emph{potential function}. The pseudocode of LMC is given in Algorithm \ref{alg: lmc}. 

\begin{algorithm}[H]
    \caption{Langevin Monte Carlo}\label{alg: lmc}
    \hspace*{0.02in} {\bf Input:} Initial parameter $z$, potential function $V$, step size $h$, number of iteration $T$
    \begin{algorithmic}[1]
    \State Initialization $z_0\leftarrow z$
       \State \textbf{For} $t=0\rightarrow T$:
\State ~~~~~Generate Gaussian random vector $\xi_t\sim \mathcal{N}(0,\mathbb{I}_d)$
    \State ~~~~~Update $z_{(t+1)h}\leftarrow z_{th}-h\nabla V(z_{th})+\sqrt{2h}\xi_t$
\State \textbf{EndFor}
    \end{algorithmic}
\end{algorithm}
LMC can be viewed as a time-discretization of Langevin Dynamics (LD), described by the following stochastic differential equation:
\begin{equation}
\label{eq: LD}
    \mathrm{d} Z_t=-\nabla V(Z_t) \diff{t}+\sqrt{2} \mathrm{d} B_t.
\end{equation}
We can interpolate LMC following the similar manner of LD as
\begin{equation}
\label{eq: interpolate_LMC}
    \mathrm{d} Z_t=-\nabla V(Z_{kh}) \diff{t}+\sqrt{2} \mathrm{d} B_t, ~t\in [kh,kh+1),~k\in \mathbb{N}.
\end{equation}
One can easily observe that $\{Z_{kh}\}_{k=0}^{\infty}$ in Eq. (\ref{eq: interpolate_LMC}) has the same joint distribution as $\{z_{kh}\}_{k=0}^{\infty}$ in Algorithm \ref{alg: lmc}, and thus Eq. (\ref{eq: interpolate_LMC}) is a continuous-time interpolation of Algorithm \ref{alg: lmc}.

\textbf{Poincaré Inequality \& Logarithmic Sobolev Inequalities.} The convergence of LMC does not necessarily hold for all potential functions, and quantitative bounds require specific conditions over the potential function $V$. Poincare Inequality (PI) and Logarithmic Sobolev Inequality (LSI) are two commonly used conditions in the analysis of LMC convergence. We present these below:

\begin{definition}[Poincaré Inequality]
A probability measure $\pi$ on $\mathbb{R}^d$
satisfies the Poincaré inequality with
constant $\rho$ > 0 (abbreviated as $\PI(\rho)$), if for all functions $f : \mathbb{R}^d \rightarrow\mathbb{R}$, 
\begin{equation}
\label{eq: pi}
    \int \Vert\nabla f(x) \Vert^2  \diff{\pi(x)} \ge \rho \var_{\pi} [f]  \tag{PI}.
\end{equation}
\end{definition}

\begin{definition}[Logarithmic Sobolev Inequality]
\label{def: lsi}
Given a function $f:\mathbb{R}^d \rightarrow\mathbb{R}^+$ and a probability measure $\pi$ on $\mathbb{R}^d$, define $\Ent_{\pi}(f)\triangleq \int f\log f  \diff{\pi(x)} -\int f \diff{\pi(x)} \left(\log \int f \diff{\pi(x)}\right)$. We say that a distribution $\pi$ on $\mathbb{R}^d$ satisfies the Logarithmic Sobolev Inequality (abbreviated as $\LSI(\alpha)$) with some constant $\alpha$, if for all functions $f:\mathbb{R}^d \rightarrow\mathbb{R}^+$, 
\begin{equation}
\label{eq: lsi}  \tag{LSI}
    \int_{\cX} \frac{\Vert\nabla f(x) \Vert^2}{f}  \diff{\pi(x)} \ge \alpha \Ent_{\pi}(f).
\end{equation}
\end{definition}
Both PI and LSI can imply the convergence of LMC when the step size $h$ is small. \textbf{Specifically, denote $\pi_{th}$ as the distribution of $z_{th}$ in LMC (Algorithm \ref{alg: lmc}) and $\tilde{\pi}_t$ as the distribution of $Z_t$ in Langevin Dynamics (Eq.(\ref{eq: LD})).} We further denote $\pi_t$ as the distribution of $Z_t$ interpolating LMC.  The left-hand-side of Eq.(\ref{eq: pi}) with $f=\frac{\tilde{\pi}_t}{\pi}$ is then the derivative (w.r.t. time) of the entropy of $f$, i.e., $\Ent_{\pi}\left[\frac{\tilde{\pi}_t}{\pi}\right]$, and thus we directly establish the exponential convergence of $\var_p [\frac{\tilde{\pi}_t}{\pi}]$. The convergence of LMC can then be induced by bounding the discretization error between LMC and Langevin Dynamics. The methodology is similar when we have LSI.

\textbf{Fisher information.} It is well-known that if $V$ is either strongly convex or convex with bounded support, then it obeys both PI and LSI, and the convergence of LMC follows immediately according to the arguments above. However, real-world sampling problems usually have non-convex potential functions, and  for these problems we may no longer have either PI or LSI. If we revisit the above methodology to establish the convergence of LMC under LSI, we find that we still have
\small
\begin{equation*}
 \frac{\diff}{\diff t}  \Ent_{\pi} \left[\frac{\tilde{\pi}_t}{\pi}\right]=      -\int \left\Vert\nabla \ln \frac{\tilde{\pi}_t}{\pi}(x) \right\Vert^2  \diff{\pi_t(x)},    \Ent_{\pi} \left[\frac{\tilde{\pi}_0}{\pi}\right]-\Ent_{\pi} \left[\frac{\tilde{\pi}_T}{\pi}\right]=\int_{0}^{T}\int \left\Vert\nabla \ln \frac{\tilde{\pi}_t}{\pi}(x) \right\Vert^2  \diff{\pi_t(x)} \diff{t},
\end{equation*}
\normalsize
and thus $\lim_{T\rightarrow \infty} \min_{t\in [0,T]}\int \left\Vert\nabla \frac{\tilde{\pi}_t}{\pi}(x) \right\Vert^2 \diff{\pi(x)} =0 $. Following this methodology, \cite{balasubramanian2022towards} uses the considers a notion of convergence which is defined using Fisher Information $\operatorname{FI}(\mu||\pi) \triangleq\int \Vert \nabla \ln \mu/\pi\Vert^2 \diff{\mu}$, which they use to analyze LMC convergence. We present the result of \cite{balasubramanian2022towards} for completeness:
\begin{proposition}[Theorem 2, \cite{balasubramanian2022towards}]
\label{prop: lmc_lsi}
 Assume $\nabla V$ is $L$-lipschitz. Then, for any step size $h\in (0,\frac{1}{6L})$,
    \begin{equation*}
        \frac{1}{T h} \int_0^{T h} \FI_{\pi}\left(\pi_t \| \pi\right) \mathrm{d} t \leq \frac{2 \Ent\left(\frac 
        {\pi_0}{\pi}\right)}{T h}+8 L^2 d h .
    \end{equation*}
\end{proposition}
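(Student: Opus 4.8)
The plan is to carry out the standard continuous-time interpolation argument for LMC and to read off the Fisher information as minus the leading term in the time derivative of the KL divergence along the interpolated process~(\ref{eq: interpolate_LMC}). Write $\pi_t$ for the law of $Z_t$ in~(\ref{eq: interpolate_LMC}); on each interval $t\in[kh,(k+1)h)$ this process obeys a Fokker--Planck equation with drift frozen at $Z_{kh}$, namely $\partial_t\pi_t=\nabla\cdot\!\big(\pi_t\,\mathbb{E}\!\left[\nabla V(Z_{kh})\mid Z_t=\cdot\,\right]\big)+\Delta\pi_t$. Differentiating $\Ent(\pi_t/\pi)$, using $\nabla\log\pi=-\nabla V$, integrating by parts, and applying the tower property to the frozen drift, I would obtain
\begin{equation*}
\frac{\diff}{\diff t}\Ent\!\left(\tfrac{\pi_t}{\pi}\right)=-\,\FI_\pi(\pi_t\|\pi)+\mathbb{E}\!\left[\big\langle\nabla V(Z_t)-\nabla V(Z_{kh}),\ \nabla\log\tfrac{\pi_t}{\pi}(Z_t)\big\rangle\right].
\end{equation*}
The first term is exactly the quantity to be time-averaged; the second is the discretization error, and the $L$-Lipschitz hypothesis is precisely what is needed to control it.

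The next step is to bound the cross term by Cauchy--Schwarz followed by Young's inequality, using $\|\nabla V(Z_t)-\nabla V(Z_{kh})\|\le L\|Z_t-Z_{kh}\|$: it is at most $\tfrac12\FI_\pi(\pi_t\|\pi)+\tfrac{L^2}{2}\mathbb{E}\|Z_t-Z_{kh}\|^2$. Moving $\tfrac12\FI_\pi(\pi_t\|\pi)$ to the left gives $\tfrac{\diff}{\diff t}\Ent(\pi_t/\pi)\le-\tfrac12\FI_\pi(\pi_t\|\pi)+\tfrac{L^2}{2}\mathbb{E}\|Z_t-Z_{kh}\|^2$. For the one-step displacement, $Z_t-Z_{kh}=-(t-kh)\nabla V(Z_{kh})+\sqrt2\,(B_t-B_{kh})$, and since the Brownian increment is independent of $Z_{kh}$ and mean zero, $\mathbb{E}\|Z_t-Z_{kh}\|^2=(t-kh)^2\mathbb{E}\|\nabla V(Z_{kh})\|^2+2d(t-kh)\le h^2\mathbb{E}\|\nabla V(Z_{kh})\|^2+2dh$.

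The delicate point, and where I expect the real work to be, is to control $\mathbb{E}\|\nabla V(Z_{kh})\|^2$ without any log-concavity or dissipativity assumption (under such assumptions this second moment would be uniformly bounded essentially for free, but here it is not). The plan is to write $\nabla V=\nabla\log(\pi_{kh}/\pi)-\nabla\log\pi_{kh}$, so that $\mathbb{E}\|\nabla V(Z_{kh})\|^2\le 2\FI_\pi(\pi_{kh}\|\pi)+2\!\int\!\|\nabla\pi_{kh}\|^2/\pi_{kh}$, and then to bound the last (Lebesgue) Fisher-information term by $d/(2h)$, using that $\pi_{kh}$ is a convolution with $\mathcal N(0,2h\mathbb{I}_d)$ together with monotonicity of Fisher information under convolution (the first step $k=0$ handled separately from finiteness of the initial data). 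This yields $\mathbb{E}\|Z_t-Z_{kh}\|^2\le C\big(h^2\FI_\pi(\pi_{kh}\|\pi)+dh\big)$ on each step.

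Finally I would integrate the differential inequality over $[0,Th]$, telescope the $\Ent$ terms, discard the nonnegative $\Ent(\pi_{Th}/\pi)$, and divide by $Th$. The one subtlety remaining is that the error contributes a self-referential term of order $L^2h^2$ times a sum of $\FI_\pi(\pi_{kh}\|\pi)$, which must be compared to $\tfrac12\int_0^{Th}\FI_\pi(\pi_t\|\pi)\,\diff t$ and absorbed into it; this absorption, together with extracting the explicit constant $8$ in $8L^2dh$, is exactly what forces the step-size restriction $h\in(0,\tfrac1{6L})$. So the main obstacle is the combination of the dissipativity-free bound on $\mathbb{E}\|\nabla V(Z_{kh})\|^2$ and the book-keeping that lets the resulting feedback term be reabsorbed under $h<\tfrac1{6L}$; everything else is routine.
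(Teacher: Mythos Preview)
The paper does not prove this proposition at all: it is quoted verbatim as Theorem~2 of \cite{balasubramanian2022towards} and stated ``for completeness,'' with no proof given anywhere in the body or appendix. So there is no ``paper's own proof'' to compare your proposal against.

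That said, your outline is a faithful reconstruction of the argument in \cite{balasubramanian2022towards}. The decomposition of $\tfrac{\diff}{\diff t}\Ent(\pi_t/\pi)$ into $-\FI$ plus a drift-mismatch term, the Cauchy--Schwarz/Young absorption of half the Fisher information, the bound on $\mathbb{E}\|Z_t-Z_{kh}\|^2$, and the crucial dissipativity-free control of $\mathbb{E}\|\nabla V(Z_{kh})\|^2$ via $\nabla V=\nabla\log(\pi_{kh}/\pi)-\nabla\log\pi_{kh}$ together with the convolution bound $\int\|\nabla\pi_{kh}\|^2/\pi_{kh}\le d/(2h)$ are exactly the ingredients used there. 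The feedback term in $L^2h^2\sum_k\FI_\pi(\pi_{kh}\|\pi)$ being absorbed into $\tfrac12\int_0^{Th}\FI_\pi(\pi_t\|\pi)\,\diff t$ under $h<1/(6L)$ is also correctly identified as the source of the step-size restriction. Your plan is sound and essentially reproduces the cited proof.
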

\subsection{Rates of convergence}
\label{subsec: main results}
\subsubsection{Conditional mixing under local Logarithmic Sobolev Inequality} We first show that if the target distribution $\pi$ obeys a local Logarithmic Sobolev Inequality (defined below), then convergence of Fisher information implies conditional convergence.
\begin{definition}[Local Logarithmic Sobolev Inequality]
\label{def: local_lsi}
We say that a distribution $\pi$ on $\mathbb{R}^d$ satisfies the local Logarithmic Sobolev Inequality over a subset $S\subset \mathbb{R}^d$ with some constant $\alpha$ (abbreviated as $\LSI_S(\alpha)$), if $\pi|S$ satisfies $LSI(\alpha)$.
\end{definition}
Definition \ref{def: local_lsi} characterizes the local property of one distribution. It is considerably weaker than global LSI (i.e., Definition \ref{def: lsi}) when $S$ is convex, and recovers LSI when $S=\mathbb{R}^d$. The intuition is that when the distribution is multi-modal, it is quite possible that LSI does not hold (or hold but with an exponentially small constant). In contrast, we can reasonably expect local LSI hold within each mode. In Subsection \ref{subsec:application}, we will show that a Gaussian mixture model with the same covariance satisfies Local Logarithmic Sobolev Inequality. We show in the following lemma that, whenever we have an estimation on the Fisher information between $\mu$ and $\pi$, we can turn it to an estimation of the entropy between $\mu|S$  and $\pi|S$ given $\LSI_S(\alpha)$.

\begin{lemma}
\label{lem: mix_lsi}
    Let $S\subset \cX$ and assume that $\pi$ obeys $\LSI_S(\alpha)$. For any distribution $\mu$ such that $\FI(\mu||\pi)\le \varepsilon$, we have that either $\mu(S)\le \frac{\sqrt{\varepsilon}}{\sqrt \alpha}$, or $  \Ent_{\pi|{S}}[\frac{\mu|{S}}{\pi|{S}}] \le  \frac{\sqrt{\varepsilon}}{\sqrt \alpha}$.
\end{lemma}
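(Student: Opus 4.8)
The plan is to feed the likelihood ratio $f \triangleq \frac{\mu|S}{\pi|S}$ into the hypothesis $\LSI_S(\alpha)$ and to exploit the fact that the (unknown) normalizing constants $\mu(S),\pi(S)$ disappear once one differentiates $\log f$. Concretely, for $x\in S$ one writes out the conditional densities to get $f(x)=\frac{\pi(S)}{\mu(S)}\cdot\frac{d\mu}{d\pi}(x)$, so $f$ agrees with $\frac{d\mu}{d\pi}$ up to a multiplicative constant on $S$, and therefore $\nabla\ln f=\nabla\ln\frac{\mu}{\pi}$ on $S$. Using $\frac{\|\nabla f\|^2}{f}=f\,\|\nabla\ln f\|^2$ and the substitution $f\,d(\pi|S)=\frac{1}{\mu(S)}\mathbbm{1}_S\,d\mu$, the LSI energy for $\pi|S$ becomes
\[
\int_S \frac{\|\nabla f(x)\|^2}{f(x)}\,d(\pi|S)(x)=\int_S f(x)\,\Bigl\|\nabla\ln\tfrac{\mu}{\pi}(x)\Bigr\|^2 d(\pi|S)(x)=\frac{1}{\mu(S)}\int_S\Bigl\|\nabla\ln\tfrac{\mu}{\pi}(x)\Bigr\|^2 d\mu(x).
\]

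Next I would bound the right-hand side by the global Fisher information: since the integrand is nonnegative, $\int_S\|\nabla\ln(\mu/\pi)\|^2\,d\mu\le \int_{\mathbb{R}^d}\|\nabla\ln(\mu/\pi)\|^2\,d\mu=\FI(\mu\|\pi)\le\varepsilon$. Hence the LSI energy for $\pi|S$ is at most $\varepsilon/\mu(S)$, and applying $\LSI_S(\alpha)$, i.e. $\alpha\,\Ent_{\pi|S}(f)\le \int_S \frac{\|\nabla f\|^2}{f}\,d(\pi|S)$, yields
\[
\Ent_{\pi|S}\!\left[\tfrac{\mu|S}{\pi|S}\right]\le\frac{\varepsilon}{\alpha\,\mu(S)}.
\]
Finally a dichotomy on $\mu(S)$ closes the argument: if $\mu(S)<\frac{\sqrt{\varepsilon}}{\sqrt{\alpha}}$ we are in the first alternative; otherwise $\mu(S)\ge\frac{\sqrt{\varepsilon}}{\sqrt{\alpha}}$, and then $\frac{\varepsilon}{\alpha\,\mu(S)}\le\frac{\varepsilon}{\alpha}\cdot\frac{\sqrt{\alpha}}{\sqrt{\varepsilon}}=\frac{\sqrt{\varepsilon}}{\sqrt{\alpha}}$, giving the second alternative.

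I do not expect a genuine obstacle here: the whole content is the cancellation of normalization constants under $\nabla\ln(\cdot)$ together with the elementary case split that converts the $\mu(S)$-dependent estimate into the stated clean bound. The only points requiring a word of care are (i) legitimacy of plugging $f$ into the local LSI — $f$ may be taken smooth on the interior of $S$, and the boundary $\partial S$ is $\pi$-null, so it does not affect either the entropy or the energy integral — and (ii) the trivial monotonicity $\int_S(\cdot)\,d\mu\le\int_{\mathbb{R}^d}(\cdot)\,d\mu$ used to dominate the restricted Fisher energy by $\FI(\mu\|\pi)$.
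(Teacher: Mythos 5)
Your proposal is correct and follows essentially the same route as the paper's proof: both identify that the local LSI energy of $f=\frac{\mu|S}{\pi|S}$ with respect to $\pi|S$ equals $\frac{1}{\mu(S)}\int_S\|\nabla\ln\frac{\mu}{\pi}\|^2\,d\mu$, dominate this by the global Fisher information $\varepsilon$, and close with the same dichotomy on $\mu(S)$ versus $\sqrt{\varepsilon/\alpha}$. The only difference is cosmetic (you work with $\nabla\ln f$ while the paper writes the energy as $\int_S\|\nabla\frac{\mu|S}{\pi|S}\|^2\frac{(\pi|S)^2}{\mu|S}$, and you apply the LSI before rather than after the case split), so nothing further is needed.
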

We defer the proof of Lemma \ref{lem: mix_lsi} to Appendix \ref{appen: local LSI}. Intuitively, Lemma \ref{lem: mix_lsi} states that if the distribution $\pi$ satisfies local LSI over the region $S$ and has small global Fisher information with respect to a distribution $\mu$, one can expect that either the probability mass of $\mu$ over $S$ is minor, or $\mu$ is close to $\pi$ both conditional over $S$. In other words, $\mu$ is close to $\pi$ conditional over $S$ whenever $\mu$ has a moderate mass over $S$. Note that Proposition \ref{prop: lmc_lsi} has already guaranteed global FI of LMC with little assumption, which together with Lemma \ref{lem: mix_lsi} leads to the following corollary:
\begin{corollary}
\label{coro: local_lsi}
     Assume $\nabla V$ is $L$-lipschitz and $S\subset \cX$ satisfies that $\pi$ obeys $\LSI_S(\alpha)$. Define $\bar{\pi}_{Th}=\frac{\int_{0}^{Th}\pi_t \diff t }{Th}$. Choosing step size $h=\frac{1}{\sqrt{T}}$, we have that either $\bar{\pi}_{Th}(S)\le \mathcal{O}\left(\frac{\sqrt{d}\sqrt[4]{\Ent\left(\frac 
        {\pi_0}{\pi}\right)}}{\sqrt{\alpha}\sqrt[4]{T}}\right)$, or
    $
    \Ent_{\pi|S}\left[\frac{\bar{\pi}_{Th}|S}{\pi|S}\right] \leq\mathcal{O}\left(\frac{\sqrt{d}\sqrt[4]{\Ent\left(\frac 
        {\pi_0}{\pi}\right)}}{\sqrt{\alpha}\sqrt[4]{T}}\right).
  $
\end{corollary}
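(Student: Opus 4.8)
The plan is to pass from the \emph{time-averaged} Fisher information bound of Proposition~\ref{prop: lmc_lsi} to a genuine Fisher information bound on the single distribution $\bar\pi_{Th}$, and then invoke Lemma~\ref{lem: mix_lsi}. Proposition~\ref{prop: lmc_lsi} applies since $\nabla V$ is $L$-Lipschitz and $h=1/\sqrt T<1/(6L)$ for $T$ large, and it yields
\begin{equation*}
\frac{1}{Th}\int_0^{Th}\FI_\pi(\pi_t\|\pi)\,\diff t \;\le\; \frac{2\Ent(\pi_0/\pi)}{Th} + 8L^2 dh .
\end{equation*}

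To upgrade the left-hand side I would use convexity of Fisher information in its first argument. Writing $\rho_\mu := \diff\mu/\diff\pi$, one has $\FI_\pi(\mu\|\pi)=\int \|\nabla\rho_\mu\|^2/\rho_\mu\,\diff\pi$, and the integrand $(\rho,\nabla\rho)\mapsto\|\nabla\rho\|^2/\rho$ is jointly convex on $\mathbb{R}_{>0}\times\mathbb{R}^d$ (it is the perspective of $q\mapsto\|q\|^2$). Since $\bar\pi_{Th}$ is the average of the densities $\{\pi_t\}_{t\in[0,Th]}$, both $\rho_{\bar\pi_{Th}}$ and $\nabla\rho_{\bar\pi_{Th}}$ are the corresponding averages of $\rho_{\pi_t}$ and $\nabla\rho_{\pi_t}$, so Jensen's inequality applied pointwise in $x$ and then integrated against $\pi$ gives $\FI_\pi(\bar\pi_{Th}\|\pi)\le \frac{1}{Th}\int_0^{Th}\FI_\pi(\pi_t\|\pi)\,\diff t$. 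Plugging in $h=1/\sqrt T$ (so $Th=\sqrt T$) gives $\FI_\pi(\bar\pi_{Th}\|\pi)\le \varepsilon$ with $\varepsilon = (2\Ent(\pi_0/\pi)+8L^2 d)/\sqrt T$. Now apply Lemma~\ref{lem: mix_lsi} with $\mu=\bar\pi_{Th}$: since $S\subset\cX$ and $\pi$ obeys $\LSI_S(\alpha)$ by hypothesis, either $\bar\pi_{Th}(S)\le\sqrt\varepsilon/\sqrt\alpha$ or $\Ent_{\pi|S}[\bar\pi_{Th}|S\,/\,\pi|S]\le\sqrt\varepsilon/\sqrt\alpha$. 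Substituting $\varepsilon$ and simplifying $\sqrt{2\Ent(\pi_0/\pi)+8L^2 d}$ (treating $L$ as a constant and bounding it by $\mathcal{O}(\sqrt d\,\sqrt[4]{\Ent(\pi_0/\pi)})$ in the regime of interest) reproduces the stated $\mathcal{O}\!\left(\sqrt d\,\sqrt[4]{\Ent(\pi_0/\pi)}/(\sqrt\alpha\,\sqrt[4]{T})\right)$.

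The one step that needs genuine care is the convexity/Jensen argument converting the time-averaged bound into $\FI_\pi(\bar\pi_{Th}\|\pi)\le \frac1{Th}\int_0^{Th}\FI_\pi(\pi_t\|\pi)\,\diff t$: one must justify the joint convexity of $(\rho,\nabla\rho)\mapsto\|\nabla\rho\|^2/\rho$ and then legitimately extend Jensen's inequality from finite mixtures to the continuous average $\bar\pi_{Th}=\frac1{Th}\int_0^{Th}\pi_t\,\diff t$, which requires enough regularity of $t\mapsto\pi_t$ (and of the densities $\rho_{\pi_t}$ and their gradients) to interchange the $t$-integral with differentiation in $x$ and with the convex functional. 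The remaining pieces — verifying $h=1/\sqrt T<1/(6L)$ for $T$ large, substituting into Proposition~\ref{prop: lmc_lsi}, and the final algebraic simplification of $\sqrt\varepsilon/\sqrt\alpha$ — are routine.
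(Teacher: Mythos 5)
Your proof is correct and follows the same route the paper intends: the paper itself gives no written proof of Corollary~\ref{coro: local_lsi}, stating only that it follows by combining Proposition~\ref{prop: lmc_lsi} with Lemma~\ref{lem: mix_lsi}. The step you single out --- passing from the time-averaged bound to $\FI_\pi(\bar\pi_{Th}\|\pi)$ via joint convexity of $(\rho,\nabla\rho)\mapsto\|\nabla\rho\|^2/\rho$ and Jensen --- is exactly the glue the paper leaves implicit, and your final simplification of $\sqrt{2\Ent(\pi_0/\pi)+8L^2d}$ to $\mathcal{O}(\sqrt{d}\,\sqrt[4]{\Ent(\pi_0/\pi)})$ reflects the same implicit regime assumptions the paper's own statement relies on.
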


Corollary \ref{coro: local_lsi} is one of our key results. Intuitively, it shows that LMC can achieve local mixing when only local LSI is assumed. As we discussed under Definition \ref{def: local_lsi}, we can reasonably expect that local LSI holds within each mode when the distribution is multi-modal, in which case Corollary \ref{coro: local_lsi} can be further applied to show local mixing with polynomial rate. Note here the dependence of the local mixing rate is polynomial over the dimension $d$, meaning that considering local mixing helps to get rid of the exponential dependence over the dimension $d$ in the global mixing rate when studying multi-modal distributions.

\subsubsection{Conditional Convergence under local  Poincaré Inequality} We have established the conditional convergence of LMC under local Logarithmic Sobolev Inequality above. However,  there are cases where even local LSI fails to hold. To tackle these cases and inspired by Poincaré Inequality, we introduce a weaker notion than local LSI called local Poincaré Inequality:

\begin{definition}[Local Poincaré Inequality]
\label{def: local_pi}A distribution $\pi$ on $\mathbb{R}^d$ satisfies the local Poincaré Inequality over a subset $S\subset \mathbb{R}^d$ with constant $\rho$ (abbreviated as $\PI_S(\rho)$), if $\pi|S$ satisfies $\PI(\rho)$.
\end{definition}
As Poincaré Inequality is weaker than Logarithmic Sobolev Inequality, one can easily see that local Poincaré Inequality is also weaker than local Logarithmic Sobolev Inequality. Based on local Poincaré Inequality, we have the following Poincaré version of Lemma \ref{lem: mix_lsi}, which converts an estimation of Poincaré Fisher information to an estimation of chi-squared divergence of the conditional distributions.

\begin{lemma}
\label{lem: mix_pi}
   Define Poincaré Fisher information as $    \PFI(\mu||\pi) \triangleq\int \Vert \nabla  (\mu/\pi)\Vert^2 \diff{\mu}$.
Let $S\subset \cX$ and assume that $\pi$ obeys $\PI_S(\rho)$. For any distribution $\mu$ such that $\PFI(\mu||\pi)\le \varepsilon$, we have that either  $  \mu(S)^2\var_{\pi|_{S}}[\frac{\mu|_{S}}{\pi|_{S}}] \le  \frac{\varepsilon\pi(S)}{\rho}$.
\end{lemma}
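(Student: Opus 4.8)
The plan is to carry out the variance/Poincar\'e analogue of the argument behind Lemma~\ref{lem: mix_lsi}. Write $g=\mu/\pi$ for the global density ratio, and assume $\mu(S),\pi(S)>0$ (otherwise the bound is trivial, as its left-hand side carries the factor $\mu(S)^2$). The starting point is the elementary observation that on $S$ the \emph{conditional} density ratio is just a rescaling of $g$: from $\mu|_S(x)=\mu(x)\mathbbm{1}\{x\in S\}/\mu(S)$ and $\pi|_S(x)=\pi(x)\mathbbm{1}\{x\in S\}/\pi(S)$ we get $\tfrac{\mu|_S}{\pi|_S}(x)=\tfrac{\pi(S)}{\mu(S)}\,g(x)$ for $x\in S$, hence $\nabla\!\big(\tfrac{\mu|_S}{\pi|_S}\big)=\tfrac{\pi(S)}{\mu(S)}\,\nabla g$ on $S$.

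Next I would apply $\PI_S(\rho)$ --- the Poincar\'e inequality~\eqref{eq: pi} for the measure $\pi|_S$ --- to the test function $f=\mu|_S/\pi|_S$, which gives $\int_S\Vert\nabla f\Vert^2\,\diff(\pi|_S)\ge\rho\,\var_{\pi|_S}[f]$. For the left-hand side, substituting $\diff(\pi|_S)=\mathbbm{1}_S\,\diff\pi/\pi(S)$ and the gradient identity above collapses the integral to $\tfrac{\pi(S)}{\mu(S)^2}\int_S\Vert\nabla g\Vert^2\,\diff\pi$; enlarging the domain of integration from $S$ to $\mathbb{R}^d$ and recognising the result as $\tfrac{\pi(S)}{\mu(S)^2}\,\PFI(\mu\|\pi)$ bounds it by $\tfrac{\pi(S)}{\mu(S)^2}\,\varepsilon$. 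On the right-hand side nothing further is needed, though it is worth recording that $\int_S g\,\diff\pi=\mu(S)$, so that $\mathbb{E}_{\pi|_S}[f]=1$ and $\var_{\pi|_S}[f]$ is genuinely the $\chi^2$-divergence of $\mu|_S$ from $\pi|_S$. Chaining the two inequalities and multiplying through by $\mu(S)^2/\rho$ yields $\mu(S)^2\,\var_{\pi|_S}[\tfrac{\mu|_S}{\pi|_S}]\le\varepsilon\,\pi(S)/\rho$, which is the assertion.

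Because the argument is only a few lines, there is no deep obstacle; the one thing to keep straight is the normalisation algebra, which is exactly what manufactures the $\mu(S)^2$ and $\pi(S)$ factors in the statement. The single non-mechanical step is the bound $\int_S\Vert\nabla g\Vert^2\,\diff\pi\le\PFI(\mu\|\pi)$, which is immediate once the Poincar\'e--Fisher information is read as the Dirichlet-type energy $\int\Vert\nabla(\mu/\pi)\Vert^2\,\diff\pi$ that controls the $\chi^2$-decay of Langevin dynamics (the Poincar\'e counterpart of the role $\FI$ plays for $\Ent$), and this is the reading I would adopt. Under a weighted variant of the definition one would additionally have to handle the region where $\mu/\pi$ is small --- presumably via an auxiliary clause bounding $\mu(S)$, which would account for the ``either'' in the statement --- but with the unweighted energy the three-line computation above suffices.
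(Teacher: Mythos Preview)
Your proof is correct and essentially identical to the paper's: both rescale the density ratio on $S$, apply $\PI_S(\rho)$ to $f=\mu|_S/\pi|_S$, and dominate the restricted Dirichlet energy by the global $\PFI$. Your reading of $\PFI$ as the unweighted energy $\int\lVert\nabla(\mu/\pi)\rVert^2\,\diff\pi$ (rather than $\diff\mu$ as the statement literally writes) is also the one the paper's own proof uses, and with it the dangling ``either'' is indeed vacuous --- the single inequality holds unconditionally.
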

To derive the conditional convergence of LMC under local Poincaré inequality, we further need to bound Poincaré Fisher information of LMC. Such a result, however, does not exist in the literature to our best knowledge. In analogy to the analysis in \cite{balasubramanian2022towards}, we develop the following lemma to control the Fisher information of LD (recall that $\tilde{\pi}_t$ is the distribution of  Langevin Dynamics in time $t$).

\begin{proposition}
\label{prop: lmc_pi}
   Denote $\bar{\tilde{\pi}}_{Th} = \frac{\int_{0}^{Th} \tilde{\pi}_t \diff{t}}{Th}$. Then, we have the following estimation of the PFI between $\bar{\tilde{\pi}}_{Th}$ and $\pi$.
    \begin{equation*}
      \PFI(\bar{\tilde{\pi}}_{Th}\Vert \pi) \, \diff t
        \le \frac{\var_{\pi}[\frac{\tilde{\pi}_0}{\pi}]}{2Th} .
    \end{equation*}
\end{proposition}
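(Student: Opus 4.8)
The plan is to run, for the $\chi^2$-divergence, the exact analogue of the entropy/Fisher-information dissipation argument behind Proposition~\ref{prop: lmc_lsi} (from \cite{balasubramanian2022towards}). Throughout write $g_t := \tilde{\pi}_t/\pi$, so that $\int g_t\,\diff\pi = 1$, $\var_{\pi}[g_t] = \int (g_t-1)^2\,\diff\pi$ is the $\chi^2$-divergence of $\tilde{\pi}_t$ from $\pi$, and $\PFI(\tilde{\pi}_t\|\pi) = \int\|\nabla g_t\|^2\,\diff\pi$ is the associated Dirichlet energy of $g_t$ (we use the Dirichlet-energy form of the definition throughout). Since $\pi\propto e^{-V}$, the Fokker--Planck equation for Langevin Dynamics \eqref{eq: LD} reads $\partial_t\tilde{\pi}_t = \nabla\cdot(\tilde{\pi}_t\nabla V)+\Delta\tilde{\pi}_t$, which after dividing by $\pi$ becomes $\partial_t g_t = \Delta g_t - \langle\nabla V,\nabla g_t\rangle =: \mathcal{L}g_t$, where $\mathcal{L}$ is the reversible Langevin generator: it is self-adjoint in $L^2(\pi)$ with $\int\phi(-\mathcal{L}\psi)\,\diff\pi = \int\langle\nabla\phi,\nabla\psi\rangle\,\diff\pi$, and $\mathcal{L}\mathbf{1}=0$.

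First I would establish the dissipation identity. Differentiating in time, then using $\partial_t g_t = \mathcal{L}g_t$, self-adjointness, and $\int\mathcal{L}g_t\,\diff\pi = 0$,
\begin{equation*}
\frac{\diff}{\diff t}\var_{\pi}[g_t] \;=\; 2\int (g_t-1)\,\mathcal{L}g_t\,\diff\pi \;=\; 2\int g_t\,\mathcal{L}g_t\,\diff\pi \;=\; -2\int\|\nabla g_t\|^2\,\diff\pi \;=\; -2\,\PFI(\tilde{\pi}_t\|\pi).
\end{equation*}
Integrating over $t\in[0,Th]$ and discarding the nonnegative terminal term $\var_{\pi}[g_{Th}]$ gives
\begin{equation*}
\int_0^{Th}\PFI(\tilde{\pi}_t\|\pi)\,\diff t \;=\; \tfrac12\big(\var_{\pi}[g_0]-\var_{\pi}[g_{Th}]\big) \;\le\; \tfrac12\,\var_{\pi}\!\left[\tfrac{\tilde{\pi}_0}{\pi}\right].
\end{equation*}

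Next I would pass from the time-average of the $\PFI$'s to the $\PFI$ of the time-averaged law. Writing $\bar g := \bar{\tilde{\pi}}_{Th}/\pi = \tfrac{1}{Th}\int_0^{Th}g_t\,\diff t$, so that $\nabla\bar g = \tfrac{1}{Th}\int_0^{Th}\nabla g_t\,\diff t$, Jensen's inequality applied pointwise in $x$ to the probability measure $\tfrac{1}{Th}\,\diff t$ on $[0,Th]$ (equivalently, convexity of $\mu\mapsto\PFI(\mu\|\pi)$, which is a quadratic functional of the $\mu$-linear object $\mu/\pi$) yields
\begin{equation*}
\PFI(\bar{\tilde{\pi}}_{Th}\|\pi) = \int\Big\|\tfrac{1}{Th}\!\int_0^{Th}\!\nabla g_t\,\diff t\Big\|^2\diff\pi \;\le\; \tfrac{1}{Th}\!\int_0^{Th}\!\PFI(\tilde{\pi}_t\|\pi)\,\diff t \;\le\; \frac{\var_{\pi}[\tilde{\pi}_0/\pi]}{2Th},
\end{equation*}
which is the asserted bound.

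The step I expect to be the main obstacle is not the algebra above but making the two integration-by-parts manipulations rigorous: differentiating $\var_{\pi}[g_t]$ under the integral sign, and $\int g_t\mathcal{L}g_t\,\diff\pi = -\int\|\nabla g_t\|^2\,\diff\pi$ with no boundary contribution at infinity. This requires parabolic regularity and spatial decay of $\tilde{\pi}_t$, $g_t$, and $\nabla g_t$; under the paper's standing assumption that $\nabla V$ is $L$-Lipschitz (so $V$ has at most quadratic growth and the semigroup generated by $\mathcal{L}$ smooths), I would obtain these from a standard truncation/approximation argument — cut $g_t$ off smoothly at large radius, integrate by parts, and send the radius to infinity, controlling the error by finiteness of $\var_{\pi}[g_0]$ (hence of $\var_{\pi}[g_t]$ for all $t$, by monotonicity) together with a second-moment bound on $\tilde\pi_t$. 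This is precisely the technical point glossed over in the corresponding continuous-time step behind Proposition~\ref{prop: lmc_lsi}; the telescoping in time and the convexity/Jensen step are routine.
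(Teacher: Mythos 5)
Your proof follows essentially the same route as the paper's: the dissipation identity $\frac{\diff}{\diff t}\var_{\pi}[\tilde{\pi}_t/\pi] = -2\,\PFI(\tilde{\pi}_t\|\pi)$, integration in time with the terminal variance discarded, and Jensen/H\"older to pass from the time-average of the $\PFI$'s to the $\PFI$ of the time-averaged law. Your write-up is in fact slightly more careful than the paper's on two points the paper glosses over --- the $1/(Th)$ normalization in the averaging step and the regularity needed to justify the integration by parts --- but these are refinements of the same argument, not a different one.
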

The proof can be directly derived by taking the derivative of $\var_{\pi}[\frac{\tilde{\pi}_t}{\pi}]$ with respect to $t$. We defer the formal proof to Appendix \ref{appen: proof_pi}. 
Combining Lemma \ref{lem: mix_pi}, Proposition \ref{prop: lmc_pi}, and recent advances in discrete error analysis of LD \cite{mou2022improved}, we obtain the following result showing conditional convergence under local Poincaré inequality. 

\begin{corollary}
\label{coro: local_pi}
      Assume $ \nabla V$ and $\nabla^2 V$ is $L$-lipschtiz, $ V$ satisfies strong dissipativity, i.e.,  $\langle \nabla V(x),x\rangle \ge m\Vert x\Vert^2-b$, and $S\subset \cX$ satisfies that $\pi|_S$ obeys $\PI(\alpha)$. Initialize $z_0\sim \mathcal{N}(0,\sigma^2\mathbb{I})$ and select $h=\tilde{\Theta}\left(\frac{\var_{\pi}[\frac{\tilde{\pi}_0}{\pi}]^{\frac{1}{3}}}{T^{\frac{2}{3}}\rho^{\frac{1}{3}}d^{\frac{2}{3}}}\right)$. Recall that $\bar{\pi}_{Th}=\frac{\int_{0}^{Th}\pi_t \diff t }{Th}$. Then either $\bar{\pi}_{Th}(S)=\mathcal{O}(\frac{d^{\frac{1}{6}}\var_{\pi}[\frac{\tilde{\pi}_0}{\pi}]^{\frac{1}{6}}}{\rho^{\frac{1}{6}}T^{\frac{1}{12}}})$, or
$    D_{TV}\left[{\bar{\pi}_{Th}|{S}}||{\pi|{S}}\right] = \mathcal{O}(\frac{d^{\frac{1}{6}}\var_{\pi}[\frac{\tilde{\pi}_0}{\pi}]^{\frac{1}{6}}}{\rho^{\frac{1}{6}}T^{\frac{1}{12}}})$.
\end{corollary}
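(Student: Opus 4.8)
The plan is to combine the three ingredients flagged before the statement. In outline: (i) Proposition~\ref{prop: lmc_pi} controls the Poincar\'e Fisher information of the \emph{continuous} averaged law $\bar{\tilde\pi}_{Th}$; (ii) a discretization argument, drawing on \cite{mou2022improved}, upgrades this to a bound $\PFI(\bar\pi_{Th}\,\|\,\pi)\le\varepsilon$ for the interpolated LMC average, after the stated choice of $h$; (iii) Lemma~\ref{lem: mix_pi} turns this into a bound on $\var_{\pi|_S}[\bar\pi_{Th}|_S/\pi|_S]=\chi^2(\bar\pi_{Th}|_S\,\|\,\pi|_S)$, which is converted to total variation and dichotomized on the mass $\bar\pi_{Th}(S)$. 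The point of separating (i) from (ii) is that Proposition~\ref{prop: lmc_pi} is stated only for Langevin Dynamics, so the discretization has to be supplied here.

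For (i), recall that the proof of Proposition~\ref{prop: lmc_pi} integrates the dissipation identity $\frac{d}{dt}\var_\pi[\tilde\pi_t/\pi]=-2\,\PFI(\tilde\pi_t\|\pi)$ over $[0,Th]$ and invokes Jensen's inequality (convexity of $\mu\mapsto\PFI(\mu\|\pi)$), giving $\PFI(\bar{\tilde\pi}_{Th}\|\pi)\le\var_\pi[\tilde\pi_0/\pi]/(2Th)$. Step (ii) is the crux. I would redo this energy computation along the interpolated process~\eqref{eq: interpolate_LMC}: differentiating $t\mapsto\var_\pi[\pi_t/\pi]$ reproduces the $-\PFI(\pi_t\|\pi)$ dissipation (after a Young's inequality) plus a cross term generated by the frozen drift, of size controlled by $\mathbb{E}\big[\|\nabla V(Z_t)-\nabla V(Z_{\lfloor t/h\rfloor h})\|^2(\pi_t/\pi)^2(Z_t)\big]$. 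The Lipschitzness of $\nabla V$ (and of $\nabla^2 V$ for the finer estimates) bounds $\|\nabla V(Z_t)-\nabla V(Z_{kh})\|$ by $L$ times the one-step displacement, while strong dissipativity $\langle\nabla V(x),x\rangle\ge m\|x\|^2-b$ together with the Gaussian initialization $z_0\sim\mathcal N(0,\sigma^2 I)$ furnishes the uniform-in-time moment control of $Z_{kh}$, of $\nabla V(Z_{kh})$, and of the density ratio needed to close the estimate — this is precisely where the discretization toolkit of \cite{mou2022improved} enters. The outcome is
\[
\PFI(\bar\pi_{Th}\,\|\,\pi)\;\le\;\frac{\var_\pi[\tilde\pi_0/\pi]}{2Th}+\mathcal E(L,d,\rho,h),
\]
with $\mathcal E$ increasing in $h$; choosing $h=\tilde\Theta\big(\var_\pi[\tilde\pi_0/\pi]^{1/3}T^{-2/3}\rho^{-1/3}d^{-2/3}\big)$ makes $\mathcal E$ comparable to the first term, so that $\PFI(\bar\pi_{Th}\|\pi)\le\varepsilon:=\tilde{\mathcal O}\big(d^{2/3}\var_\pi[\tilde\pi_0/\pi]^{2/3}\rho^{1/3}T^{-1/3}\big)$.

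Step (iii) is then short. Applying Lemma~\ref{lem: mix_pi} with $\mu=\bar\pi_{Th}$ and the set $S$ on which $\pi|_S$ satisfies $\PI(\rho)$ gives $\bar\pi_{Th}(S)^2\,\chi^2(\bar\pi_{Th}|_S\,\|\,\pi|_S)\le\varepsilon\,\pi(S)/\rho\le\varepsilon/\rho$, using $\var_{\pi|_S}[\bar\pi_{Th}|_S/\pi|_S]=\chi^2(\bar\pi_{Th}|_S\|\pi|_S)$. By Cauchy--Schwarz, $D_{TV}(\bar\pi_{Th}|_S,\pi|_S)\le\tfrac12\sqrt{\chi^2(\bar\pi_{Th}|_S\|\pi|_S)}\le\sqrt{\varepsilon}/\big(2\sqrt\rho\,\bar\pi_{Th}(S)\big)$. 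Now dichotomize on the mass: if $\bar\pi_{Th}(S)\le(\varepsilon/\rho)^{1/4}$ we are in the first alternative; otherwise the previous display yields $D_{TV}(\bar\pi_{Th}|_S,\pi|_S)\le\tfrac12(\varepsilon/\rho)^{1/4}$. Substituting the value of $\varepsilon$ from step (ii), $(\varepsilon/\rho)^{1/4}=\tilde{\mathcal O}\big(d^{1/6}\var_\pi[\tilde\pi_0/\pi]^{1/6}\rho^{-1/6}T^{-1/12}\big)$, which matches the claimed rate in both alternatives.

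The main obstacle is step (ii): no discretization bound for Poincar\'e Fisher information exists in the literature, and, unlike the entropic analysis of \cite{balasubramanian2022towards}, controlling the cross term forces one to handle second moments of the density ratio $\pi_t/\pi$ along the trajectory — which is exactly why strong dissipativity is imposed and why the sharper moment and one-step estimates of \cite{mou2022improved} are needed, both to bound the discretization term and to fix the tradeoff that determines $h$. Steps (i) and (iii) are routine once (ii) is in place.
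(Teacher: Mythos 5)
Your overall architecture (PFI bound $\to$ Lemma~\ref{lem: mix_pi} $\to$ dichotomy on the mass of $S$), and your step (iii) in particular, match the paper's endgame, and your exponent bookkeeping is consistent. But your step (ii) — establishing $\PFI(\bar{\pi}_{Th}\,\|\,\pi)\le\varepsilon$ for the \emph{discretized} averaged law by redoing the dissipation computation along the interpolated process — is precisely the step you do not carry out, and it is a genuine gap, not a routine verification. Differentiating $\var_{\pi}[\pi_t/\pi]$ along the interpolation \eqref{eq: interpolate_LMC} produces a cross term involving the conditional expectation of the frozen drift, $\mathbb{E}[\nabla V(Z_{kh})\mid Z_t=x]$, weighted by second moments of the density ratio $\pi_t/\pi$; controlling this is exactly the known difficulty with chi-squared-type Lyapunov functionals for LMC (the paper itself points to \cite{erdogdu2022convergence} on this), and Lipschitzness plus strong dissipativity do not obviously close the estimate. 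Writing ``the outcome is $\PFI(\bar{\pi}_{Th}\|\pi)\le \var_{\pi}[\tilde{\pi}_0/\pi]/(2Th)+\mathcal{E}(L,d,\rho,h)$'' asserts the hardest part of the argument rather than proving it.

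The paper avoids this entirely by never bounding the PFI of the discrete process. It applies Proposition~\ref{prop: lmc_pi} and Lemma~\ref{lem: mix_pi} only to the continuous Langevin Dynamics average $\bar{\tilde{\pi}}_{Th}$, converts the resulting conditional variance bound to a conditional $D_{TV}$ bound, and then transfers to the LMC average $\bar{\pi}_{Th}$ purely at the total-variation level: Proposition~\ref{prop: wenlong_restated} (Theorem 2 of \cite{mou2022improved}) gives $\operatorname{KL}(\tilde{\pi}_t\|\pi_t)=\mathcal{O}(h^2d^2T)$, hence $D_{TV}[\tilde{\pi}_t\|\pi_t]=\mathcal{O}(hd\sqrt{T})$ by Pinsker, and by convexity of TV the same bound holds between $\bar{\tilde{\pi}}_{Th}$ and $\bar{\pi}_{Th}$. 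The dichotomy is then run on $\bar{\tilde{\pi}}_{Th}(S)$ with a free threshold $A$: in one branch $\bar{\pi}_{Th}(S)\le A+\mathcal{O}(hd\sqrt{T})$, and in the other the conditional TV error transfers at cost $\mathcal{O}(hd\sqrt{T}/A)$; optimizing $h$ and $A$ gives the stated rate. This is why the hypotheses include $L$-Lipschitzness of $\nabla^2 V$ and the Gaussian initialization — they are the conditions of the imported KL discretization bound, not ingredients of a new PFI discretization estimate. If you want to salvage your route, you would need to either prove the chi-squared discretization bound from scratch or, more economically, adopt the paper's TV-level transfer.
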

Corollary \ref{coro: local_pi} shows that conditional mixing can be established when local PI holds, which is a weaker assumption than local LSI. However, the assumption of Corollary \ref{coro: local_pi} is stronger than its LSI counterpart, and the mixing rate depends on $\var_{\pi}[\frac{\tilde{\pi}_0}{\pi}]$ instead of ${\Ent\left(\frac 
        {\tilde{\pi}_0}{\pi}\right)}$, which can be considerably large. These drawbacks are mainly due to that the dynamics of chi-square distance are much harder than that of KL divergence, even when global PI holds \cite{erdogdu2022convergence}.

\textbf{Extension to Rényi divergence.} In some prior works, e.g. \cite{chewi2021analysis, vempala2019rapid}, Rényi divergence is considered a natural “measure” of convergence, due to the analytical simplicity of the resulting expressions. One may wonder if our methodology above can be extended to establish a local mixing rate of Rényi divergence. Unfortunately, there are technical difficulties which for now we have not find a way to tackle or bypass, which we list in  Appendix \ref{appen: renyi}.

\subsection{Applications}
\label{subsec:application}
In this subsection, we apply Corollary \ref{coro: local_lsi} and Corollary \ref{coro: local_pi} to two concrete examples to demonstrate their usage in obtaining local mixing rates. We start by analyzing the example of sampling from Gaussian Mixture Models with uniform covariance and then turn to the example of sampling from the power posterior distribution of symmetric two-component Gaussian mixtures.
\subsubsection{Case Study: Sampling from Gaussian Mixture Model with the uniform covariance}
\textbf{Target distribution and potential function.} We are interested in the gaussian mixture model formed by gaussians with the same variance but different means. Specifically, the target distribution is defined as $\pi=\sum_{i=1}^n w_i p_i\in \Delta(\mathbb{R}^d)$, where $w_i>0$, $\sum_{i=1}^n w_i=1$, $p_i\sim \mathcal{N}(\mu_i,\bSigma)$ and  $\bSigma \succ \sigma^2 \mathbb{I}_d$. The potential function is then defined as $V(x)=-\log (\sum_{i=1}^n w_i p_i(x))$.

\textbf{Partition of Space.} We divide the space according to the sub-level set. Specifically, we define $S_i\triangleq \{x:p_i(x)\ge p_j (x), \forall j\ne i\}$. One can easily verify that $\cup_{i=1}^n S_i=\mathbb{R}^d$. Furthermore, $S_i$ is convex since by the definition of $p_i$, we have
\begin{align*}
    S_i=& \{x: (x-\mu_i)^\top\bSigma^{-1}(x-\mu_i)\le (x-\mu_j)^\top\bSigma^{-1}(x-\mu_j), \forall j\ne i\}.
\end{align*}
As $\bSigma$ is positive definite and symmetric, we can decompose $\bSigma$ into $UU$, where $U$ is also  positive definite and symmetric. We then obtain
\begin{equation*}
    U^{-1} S_i= \{x: (x-U^{-1}\mu_i)^\top(x-U^{-1}\mu_i)\le (x-U^{-1}\mu_j)^\top(x-U^{-1}\mu_j), \forall j\ne i\},
\end{equation*}
and thus $ U^{-1} S_i$ is one region of the Voronoi diagrams generated by $\{U^{-1}\mu_i\}_{i=1}^n$, which is convex. As $S_i$ can be obtained by performing linear transformation to  $ U^{-1} S_i$, we obtain that $S_i$ is also convex.

\textbf{Verification of local Logarithmic Sobolev Inequality.} We prove the local Logarithmic Sobolev Inequality of $\pi$ over each partition $S_i$ as follows.
\begin{lemma}
\label{lem: multi_local_lsi}
For all $i\in \{1,\cdots,n\}$, $\pi|S_i$ obeys $\LSI(\frac{\sigma^{-2} \min_{i\in [n]} w_i}{\max_{i\in [n]} w_i})$.
\end{lemma}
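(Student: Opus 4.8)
The plan is to deduce the local log‑Sobolev inequality for the mixture on $S_i$ by comparing $\pi|_{S_i}$, through a bounded‑density‑ratio (Holley--Stroock) perturbation, to a single Gaussian truncated to $S_i$, whose LSI is classical.

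First I would use the definition of $S_i$ together with its convexity (established above). For every $x\in S_i$ we have $p_j(x)\le p_i(x)$ for all $j$, hence on $S_i$
\[
 w_i\, p_i(x)\;\le\;\pi(x)=\sum_{j=1}^n w_j p_j(x)\;\le\; p_i(x);
\]
integrating over $S_i$ gives the analogous sandwich $w_i\, p_i(S_i)\le\pi(S_i)\le p_i(S_i)$ for the masses. Therefore $\pi|_{S_i}$ is absolutely continuous with respect to $p_i|_{S_i}$ with
\[
 \frac{\diff(\pi|_{S_i})}{\diff(p_i|_{S_i})}(x)=\frac{\pi(x)}{p_i(x)}\cdot\frac{p_i(S_i)}{\pi(S_i)},
\]
a density bounded above and below by positive constants that depend only on the weights; equivalently, its logarithm has finite oscillation on $S_i$ controlled by the ratios $w_j/w_k$.

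Next I would recall that the reference measure $p_i|_{S_i}$, i.e.\ the law $\mathcal N(\mu_i,\bSigma)$ conditioned to the convex set $S_i$, is $\sigma^{-2}$‑strongly log‑concave: its potential is the quadratic $\tfrac12(x-\mu_i)^\top\bSigma^{-1}(x-\mu_i)$, with Hessian $\bSigma^{-1}$, restricted to the convex set $S_i$, which (under the hypothesis on $\bSigma$) is a $\sigma^{-2}$‑strongly convex extended‑real‑valued function. Strong log‑concavity with constant $\sigma^{-2}$ implies $\LSI(\sigma^{-2})$ by the Bakry--\'Emery criterion; here only convexity of $S_i$, not boundedness, is used.

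Finally I would combine the two ingredients through the Holley--Stroock perturbation lemma: multiplying the density of an $\LSI(\alpha)$ measure by a function of oscillation at most $R$ yields again an LSI measure, with constant at least $\alpha e^{-R}$. Applied with $\alpha=\sigma^{-2}$ and the bounded density ratio from the first step, this gives $\LSI\!\big(\sigma^{-2}\cdot\min_j w_j/\max_j w_j\big)$ for $\pi|_{S_i}$; since $i\in[n]$ was arbitrary, the lemma follows. I expect the main obstacle to be precisely the constant in this last step: the oscillation of $\log\!\big(\diff(\pi|_{S_i})/\diff(p_i|_{S_i})\big)$ splits into a pointwise factor $\pi(x)/p_i(x)\in[w_i,1]$ and a constant normalization $p_i(S_i)/\pi(S_i)\in[1,1/w_i]$, and this estimate must be carried out carefully enough to produce exactly the stated $\min_j w_j/\max_j w_j$ rather than a looser weight‑dependent factor; one should also reconcile the normalization of the LSI constant in Definition~\ref{def: lsi} (the $f\ge 0$ form) with the $f=g^2$ form behind Bakry--\'Emery. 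The remaining ingredients --- convexity of $S_i$, strong log‑concavity of the truncated Gaussian, and Holley--Stroock --- are standard.
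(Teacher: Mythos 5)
Your proposal follows essentially the same route as the paper: sandwich $w_i p_i \le \pi \le p_i$ on $S_i$ using the defining inequality $p_j \le p_i$ there, apply the Bakry--\'Emery criterion to the Gaussian restricted to the convex set $S_i$ to get $\LSI(\sigma^{-2})$, and then transfer this to $\pi|_{S_i}$ via Holley--Stroock. Your extra care about the normalization factor $p_i(S_i)/\pi(S_i)$ is harmless (it does not affect the oscillation of the log-density ratio), and both your argument and the paper's in fact yield the slightly stronger constant $\sigma^{-2}\min_j w_j$, which dominates the stated $\sigma^{-2}\min_j w_j/\max_j w_j$.
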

Lemma \ref{lem: multi_local_lsi} is proved by first showing $p_i|S_i$ obeys $\LSI(\sigma^{-2})$ through Bakry-Émery criterion due to the convexity of $S_i$ and $p_i$ is strongly convex, and then applying Holley-Stroock perturbation principle by viewing $\pi$ as a perturbation of $p_i$ over $S_i$. The concrete proof is deferred to Appendix \ref{appen: gaussian_mixture}. 

\textbf{Verification of Lipschitz gradient.} By direct calculation, we derive the following bound on the Lipschitz constant of $\nabla V$.
\begin{lemma}
\label{lem: multi-smooth}
    $\forall x\in \mathbb{R}^d$, $\Vert\nabla^2 V(x) \Vert \le \frac{\max_{i,j}\Vert\mu_i-\mu_j\Vert^2 }{\sigma^4}+\sigma^{-2}$.
\end{lemma}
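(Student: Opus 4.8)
The plan is to write $\nabla^2 V$ in closed form, observe that it equals $\bSigma^{-1}$ minus a positive semidefinite ``weighted covariance of the means,'' and then bound each piece with elementary inequalities.

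First I would introduce the posterior weights $\lambda_i(x) \triangleq \frac{w_i p_i(x)}{\sum_j w_j p_j(x)}$, so that $\lambda_i(x)\ge 0$ and $\sum_i \lambda_i(x)=1$, and set $\bar\mu(x)\triangleq \sum_i \lambda_i(x)\mu_i$. Using $\nabla p_i(x) = -p_i(x)\bSigma^{-1}(x-\mu_i)$ one gets $\nabla V(x) = -\nabla\log\pi(x) = \bSigma^{-1}\bigl(x-\bar\mu(x)\bigr)$. Differentiating once more, via $\nabla^2\log\pi = \frac{\nabla^2\pi}{\pi} - \bigl(\frac{\nabla\pi}{\pi}\bigr)\bigl(\frac{\nabla\pi}{\pi}\bigr)^\top$ together with $\nabla^2 p_i(x) = p_i(x)\bigl[\bSigma^{-1}(x-\mu_i)(x-\mu_i)^\top\bSigma^{-1} - \bSigma^{-1}\bigr]$, the terms that are linear and quadratic in $x$ cancel and one is left with
\[
\nabla^2 V(x) = \bSigma^{-1} - \bSigma^{-1}M(x)\bSigma^{-1},\qquad M(x) \triangleq \sum_{i} \lambda_i(x)\bigl(\mu_i-\bar\mu(x)\bigr)\bigl(\mu_i-\bar\mu(x)\bigr)^\top .
\]
The one identity that needs care is the variance (parallel-axis) decomposition $\sum_i\lambda_i(x-\mu_i)(x-\mu_i)^\top - (x-\bar\mu)(x-\bar\mu)^\top = \sum_i\lambda_i(\mu_i-\bar\mu)(\mu_i-\bar\mu)^\top$, which is what produces the $M(x)$ term; the remaining Hessian algebra is routine.

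Then I would bound the two summands. From $\bSigma \succ \sigma^2\mathbb{I}_d$ we have $0\prec \bSigma^{-1}\preceq \sigma^{-2}\mathbb{I}_d$, hence $\Vert\bSigma^{-1}\Vert\le \sigma^{-2}$ and $\Vert\bSigma^{-1}M(x)\bSigma^{-1}\Vert\le \sigma^{-4}\Vert M(x)\Vert$. Since $M(x)$ is a convex combination of the rank-one PSD matrices $(\mu_i-\bar\mu(x))(\mu_i-\bar\mu(x))^\top$, $\Vert M(x)\Vert\le \max_i\Vert \mu_i-\bar\mu(x)\Vert^2$; and because $\bar\mu(x)$ is itself a convex combination of the $\mu_j$, Jensen gives $\Vert \mu_i-\bar\mu(x)\Vert = \bigl\Vert\sum_j\lambda_j(x)(\mu_i-\mu_j)\bigr\Vert \le \max_j\Vert\mu_i-\mu_j\Vert \le \max_{i,j}\Vert\mu_i-\mu_j\Vert$. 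Since moreover $M(x)\succeq 0$, the closed form gives $-\sigma^{-4}\max_{i,j}\Vert\mu_i-\mu_j\Vert^2\,\mathbb{I}_d \preceq \nabla^2 V(x)\preceq \sigma^{-2}\mathbb{I}_d$, and the triangle inequality (or reading off eigenvalues) yields $\Vert\nabla^2 V(x)\Vert\le \sigma^{-2}+\sigma^{-4}\max_{i,j}\Vert\mu_i-\mu_j\Vert^2$, which is the claim. I do not expect a genuine obstacle here: the only slightly delicate step is getting the Hessian cancellation right and recognizing the leftover as the $\lambda(x)$-weighted covariance of $\{\mu_i\}$; everything afterward is Jensen's inequality plus the assumption $\bSigma\succ\sigma^2\mathbb{I}_d$.
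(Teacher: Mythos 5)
Your proposal is correct and follows essentially the same route as the paper: both derive the closed form $\nabla^2 V = \bSigma^{-1} - \bSigma^{-1}M(x)\bSigma^{-1}$ with $M(x)\succeq 0$ a weighted covariance of the means, and then sandwich the Hessian between $-\sigma^{-4}\max_{i,j}\Vert\mu_i-\mu_j\Vert^2\,\mathbb{I}_d$ and $\sigma^{-2}\mathbb{I}_d$. The only cosmetic difference is that the paper writes $M(x)$ in the symmetrized pairwise form $\tfrac{1}{2}\sum_{i,j}\lambda_i\lambda_j(\mu_i-\mu_j)(\mu_i-\mu_j)^{\top}$, which coincides with your centered form by the standard variance identity you invoke.
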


As a conclusion, we obtain the following guarantee of  running LMC over this example.
\begin{corollary}
\label{coro: mult-gaussian}
    Let the stepsize of LMC be $h= \Theta(\frac{1}{\sqrt{T}})$ . Assume $T>\Theta(\frac{d^2}{\varepsilon^4}\frac{\max_{i\in [n]} w_i^2}{\sigma^{-4} \min_{i\in [n]} w_i^2})$. Then, for every $i\in [n]$, either $\bar{\pi}_{Th}(S_i)\le \varepsilon$, or
$
    \Ent_{\pi|S_i}\left[\frac{\bar{\pi}_{Th}|S_i}{\pi|S_i}\right] \leq\varepsilon.
$
\end{corollary}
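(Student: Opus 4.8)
The plan is to read off Corollary \ref{coro: mult-gaussian} as a direct instantiation of Corollary \ref{coro: local_lsi}, feeding in the two structural lemmas already proved for this example. First I would fix an arbitrary index $i\in[n]$ and set $\cX=\mathbb{R}^d$, $S=S_i$. The hypotheses of Corollary \ref{coro: local_lsi} are exactly: (a) $\nabla V$ is $L$-Lipschitz, which is Lemma \ref{lem: multi-smooth} with $L\le \frac{\max_{i,j}\|\mu_i-\mu_j\|^2}{\sigma^4}+\sigma^{-2}<\infty$; and (b) $\pi$ obeys $\LSI_{S_i}(\alpha)$, which is Lemma \ref{lem: multi_local_lsi} with $\alpha=\frac{\sigma^{-2}\min_{i\in[n]}w_i}{\max_{i\in[n]}w_i}$. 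Applying Corollary \ref{coro: local_lsi} with step size $h=1/\sqrt{T}$ then yields the dichotomy: either $\bar{\pi}_{Th}(S_i)\le\mathcal{O}\!\left(\frac{\sqrt{d}\,\Ent(\pi_0/\pi)^{1/4}}{\sqrt{\alpha}\,T^{1/4}}\right)$ or $\Ent_{\pi|S_i}\!\left[\frac{\bar{\pi}_{Th}|S_i}{\pi|S_i}\right]\le\mathcal{O}\!\left(\frac{\sqrt{d}\,\Ent(\pi_0/\pi)^{1/4}}{\sqrt{\alpha}\,T^{1/4}}\right)$.

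Next I would convert the convergence rate into the claimed threshold on $T$. Substituting $\alpha=\sigma^{-2}\min_i w_i/\max_i w_i$, so that $\frac{1}{\sqrt{\alpha}}=\sqrt{\tfrac{\max_i w_i}{\sigma^{-2}\min_i w_i}}$, the right-hand side becomes $\mathcal{O}\!\left(\frac{\sqrt{d}\,\Ent(\pi_0/\pi)^{1/4}}{T^{1/4}}\sqrt{\tfrac{\max_i w_i}{\sigma^{-2}\min_i w_i}}\right)$; requiring this to be $\le\varepsilon$ and raising to the fourth power shows it suffices to take $T\ge\Theta\!\left(\frac{d^2\,\Ent(\pi_0/\pi)}{\varepsilon^4}\cdot\frac{\max_{i\in[n]}w_i^2}{\sigma^{-4}\min_{i\in[n]}w_i^2}\right)$, which is precisely the stated hypothesis once $\Ent(\pi_0/\pi)$ is treated as an $\mathcal{O}(1)$ constant absorbed by $\Theta(\cdot)$ (it is finite for any reasonable initialization). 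Since $i$ was arbitrary and Corollary \ref{coro: local_lsi} is invoked independently for each cell $S_i$, the conclusion holds for every $i\in[n]$ with no union-bound loss.

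I do not anticipate a genuine obstacle: the substantive content --- convexity of the Voronoi cells $S_i$, the Bakry-Émery / Holley-Stroock argument for local LSI, and the Hessian bound for $V$ --- is already carried out in Lemmas \ref{lem: multi_local_lsi} and \ref{lem: multi-smooth}, so this corollary is essentially bookkeeping. The two minor points to check are that $h=1/\sqrt{T}$ is small enough to satisfy the step-size restriction $h<1/(6L)$ inherited from Proposition \ref{prop: lmc_lsi} (automatic once $T>36L^2$, hence for $T$ above the stated threshold), and that the initial entropy $\Ent(\pi_0/\pi)$ is handled either by an explicit bound for a concrete initialization or by absorbing it into the $\Theta(\cdot)$ in the hypothesis.
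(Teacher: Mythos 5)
Your proposal is correct and is exactly the route the paper intends: the paper gives no separate proof of Corollary~\ref{coro: mult-gaussian}, presenting it as the direct instantiation of Corollary~\ref{coro: local_lsi} with $S=S_i$, $\alpha$ from Lemma~\ref{lem: multi_local_lsi}, and $L$ from Lemma~\ref{lem: multi-smooth}, followed by solving $\sqrt{d}\,\Ent(\pi_0/\pi)^{1/4}/(\sqrt{\alpha}\,T^{1/4})\le\varepsilon$ for $T$. Your two caveats (the step-size constraint $h<1/(6L)$ and the absorption of $\Ent(\pi_0/\pi)$ into the $\Theta(\cdot)$) are real but are likewise glossed over in the paper itself, so nothing is missing relative to the paper's argument.
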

Corollary \ref{coro: mult-gaussian} indicates that running LMC over gaussian mixture distribution (with the same covariance) can ensure local mixing with cost polynomial over the dimension and independent of the distance between the means. By contrast, it is a folk-tale that even the global mixing of gaussian mixture with 2 components has a global mixing rate exponential over the distance, which suggests local mixing analysis provide a more concrete analysis of what is going on over each component.

\subsubsection{Sampling from the power posterior
distribution of symmetric Gaussian mixtures}
\label{subsubsec: posterio}
\textbf{Target distribution and potential function.}
The symmetric Gaussian mixture model is given as 
\begin{equation*}
    f_{\theta_0}(x)\triangleq \frac{1}{2} \varphi(x;\theta_0, \Id_d)+\frac{1}{2}\varphi(x;-\theta_0, \Id_d).
\end{equation*}
Here $
   \varphi(x;\theta_0, \Id_d)$  denotes the multivariate Gaussian distribution with location parameter $\theta_0\in \mathbb{R}^d$
and covariance matrix $\Id_d$. Without loss of generality, we assume $\theta_0\in \operatorname{span}\{e_1\}$, i.e., all coordinates of $\theta_0$ except the first is zero, and $\theta_0=\Vert \theta_0 \Vert e_1$. The power posterior distribution, or the target distribution, is defined as
$
    \pi_{n, \beta / n}\left(\theta \mid\left\{X_i\right\}_{i=1}^n\right):=\frac{\prod_{i=1}^n\left(f_\theta\left(X_i\right)\right)^{\beta / n} \lambda(\theta)}{\int \prod_{i=1}^n\left(f_u\left(X_i\right)\right)^{\beta / n} \lambda(u) d u}.
$
We set $\lambda \equiv 1$ for simplicity. When no ambiguity is possible, we abbreviate $  \pi_{n, \beta / n}\left(\theta \mid\left\{X_i\right\}_{i=1}^n\right)$  as $\pi(\theta)$. The potential function is then given as
$
    V(\theta):=\frac{\beta}{n} \sum_{i=1}^n \log \left(\frac{1}{2} \varphi\left(\theta-X_i\right)+\frac{1}{2} \varphi\left(\theta+X_i\right)\right)+\log \lambda(\theta) $.

\textbf{Partition of Space.} With an accuracy budget $\varepsilon$,  we define $R_1=[0,A]\times  \mathcal{B}(0,M)$, $R_2 =[-A,0]\times  \mathcal{B}(0,M)$, and $R_3=(R_1\cup R_2)^c$, where $A$ and $M$ are $\varepsilon$-dependent hyperparameter specified latter.

\textbf{Verification of local Poincaré Inequality over $R_1$ and $R_2$.} We have the following characterization for the local Poincaré Inequality over $R_1$ and $R_2$:
\begin{lemma}
\label{lem: sampling_local_pi}
    If $A,M\ge \Vert \theta_0\Vert +1$ and $n\ge \tilde{\Theta}((A+M)^2d\log (1/\delta))$, then with probability at least $1-\delta$ with respect to the sampling of $\{X_i\}_{i=1}^n$, we have that $\pi$ obeys $\PI_{R_1} (\Theta(1/(A^4M^2)))$ and $\PI_{R_2} (\Theta(1/(A^4M^2)))$.
\end{lemma}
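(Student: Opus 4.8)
By the reflection $\theta\mapsto-\theta$, which fixes $f_{\theta_0}$ and interchanges $R_1$ and $R_2$, it suffices to prove the bound for $R_1$. Since $\lambda\equiv 1$ and $\tfrac12\varphi(\theta-X_i)+\tfrac12\varphi(\theta+X_i)=(2\pi)^{-d/2}e^{-(\|\theta\|^2+\|X_i\|^2)/2}\cosh(\langle\theta,X_i\rangle)$, up to an additive constant
\begin{gather*}
  U(\theta):=-\log\pi(\theta)=\frac{\beta}{2}\|\theta\|^2-h(\theta),\quad h(\theta):=\frac{\beta}{n}\sum_{i=1}^n\log\cosh\langle\theta,X_i\rangle,\\
  \nabla^2U(\theta)=\beta\Id_d-\frac{\beta}{n}\sum_{i=1}^n\operatorname{sech}^2(\langle\theta,X_i\rangle)\,X_iX_i^\top .
\end{gather*}
Thus $U$ is a $\beta$-strongly-convex quadratic minus the convex function $h$, with $0\preceq\nabla^2h\preceq\beta\widehat\Sigma$, $\widehat\Sigma:=\tfrac1n\sum_iX_iX_i^\top$. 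The plan has three steps: (i) pass to a high-probability event on which the data are well concentrated; (ii) write $\pi|_{R_1}$ as a bounded perturbation of a log-concave measure on the convex body $R_1$; (iii) conclude via Holley--Stroock together with the fact that a log-concave probability measure supported on a convex set of diameter $D$ satisfies $\PI(\Omega(1/D^2))$ (Payne--Weinberger / localization), here with $\operatorname{diam}(R_1)^2=A^2+4M^2$.

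For (i): the $X_i\sim\tfrac12\mathcal N(\theta_0,\Id_d)+\tfrac12\mathcal N(-\theta_0,\Id_d)$ are sub-Gaussian with $\mathbb E X_iX_i^\top=\theta_0\theta_0^\top+\Id_d$, so with probability at least $1-\delta$ one has $\|\widehat\Sigma-(\theta_0\theta_0^\top+\Id_d)\|\le\eps_0$, $\max_i\|X_i\|\le R_0:=C(\|\theta_0\|+\sqrt d+\sqrt{\log(n/\delta)})$, and, after a covering/union bound over $R_1$, $U$ and $\nabla U$ are uniformly $\eps_0$-close on $R_1$ to their population counterparts. The hypothesis $n\ge\widetilde{\Theta}((A+M)^2d\log(1/\delta))$ is precisely what makes these empirical fluctuations negligible at the scale $\operatorname{diam}(R_1)^2\lesssim(A+M)^2$ at which $\widehat\Sigma$ enters the estimates in step (ii). Condition on this event from now on.

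For (ii), I would write $U=\Util+(U-\Util)$ with $\Util$ convex and $\operatorname{osc}_{R_1}(U-\Util)\le\omega$ for a constant $\omega$ depending only on $\beta$. The only obstruction to convexity of $U$ is near the origin: $\nabla^2U(0)=\beta(\Id_d-\widehat\Sigma)\approx-\beta\theta_0\theta_0^\top$ has an eigenvalue $\approx-\beta\|\theta_0\|^2$. But this negative curvature lives on a thin slab through the origin: once the component of $\theta$ along $\theta_0$ exceeds $\Theta(\log\|\theta_0\|/\|\theta_0\|)$ we have $|\langle\theta,X_i\rangle|\gtrsim 1$ for a $1-o(1)$ fraction of the weight of $\widehat\Sigma$, so the $\operatorname{sech}^2$ factors collapse and $\nabla^2U\succeq-c\eps_0\Id_d$ there; orthogonally to $\theta_0$ the curvature defect is only $O(\beta\eps_0)$ and confined to $\|\theta\|=O(1)$. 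Moreover $U$ restricted to any ray from the origin is unimodal, since $\tfrac{d^2}{dt^2}U(tv)=\beta-\tfrac\beta n\sum\langle v,X_i\rangle^2\operatorname{sech}^2(t\langle v,X_i\rangle)$ is increasing in $t$; hence $U$ strictly decreases along the ray until the mode $\widehat\theta_+\approx\theta_0\in\operatorname{int}R_1$. Consequently the convex envelope $\Util$ of $U$ over $R_1$ agrees with $U$ outside the slab, and inside it the gap is at most the deficit of $U$ below a tangent line drawn from the descending, already-convex branch; although the curvature in the slab is as negative as $-\Theta(\beta\|\theta_0\|^2)$, the slab has width $O(\log\|\theta_0\|/\|\theta_0\|)$, so this deficit stays $O(\beta)$. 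This gives $\operatorname{osc}_{R_1}(U-\Util)=O(\beta)$, and then step (iii) yields that $\pi|_{R_1}$ satisfies $\PI\!\big(\Omega(e^{-\omega}/(A^2+M^2))\big)$, which since $\omega=O(\beta)$ is in particular $\Omega(1/(A^4M^2))$; the bound for $R_2$ follows by symmetry. (Pinning down the precise constant in $1/(A^4M^2)$ is then routine bookkeeping of the covering and Holley--Stroock constants.)

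The step I expect to be the main obstacle is the oscillation bound $\operatorname{osc}_{R_1}(U-\Util)=O(\beta)$. A naive argument only sees that $U$ is $\Theta(\beta\|\theta_0\|^2)$-semiconvex on $R_1$ while $\operatorname{diam}(R_1)\gtrsim\|\theta_0\|$, so the off-the-shelf "semiconvex on a bounded convex set'' estimate — equivalently, the crude smoothing $\log\cosh(t)\mapsto\sqrt{t^2+\eps^2}$ with $\eps\asymp\|\theta_0\|^2$ — loses a factor $e^{-\Theta(\beta\|\theta_0\|^2\operatorname{diam}(R_1)^2)}$, which is far smaller than any polynomial in $A,M$. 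The entire gain rests on the anisotropy of $\nabla^2U$ near the origin (a single large negative direction, over a slab of width $\ll 1/\|\theta_0\|$) together with the ray-wise monotonicity of $U$; turning this picture into a rigorous bound valid simultaneously in all $d$ directions — not merely along coordinate lines — and carrying through all the empirical error terms, is the technical heart of the proof.
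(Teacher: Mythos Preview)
Your approach is substantially different from the paper's, and considerably more work. The paper does not analyze the geometry of the empirical potential at all; instead it passes immediately to the \emph{population} potential $\bar V(\theta)=\beta\,\mathbb{E}_X\log\bigl(\tfrac12\varphi(\theta-X)+\tfrac12\varphi(\theta+X)\bigr)$ and simply \emph{quotes} Theorem~2 of Mou et al.\ (2019) for the fact that $\bar\pi|_{R_1}\propto e^{-\bar V}\mathbbm{1}_{R_1}$ already satisfies $\PI(\Theta(1/(A^4M^2)))$. All that remains is to show $\sup_{R_1}|V-\bar V|=O(1)$ with probability $1-\delta$ under the stated sample size (this too is quoted, from Lemma~4 of Mou et al.), after which a single application of Holley--Stroock transfers the PI from $\bar\pi|_{R_1}$ to $\pi|_{R_1}$. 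The paper never constructs a convex comparison potential, never invokes Payne--Weinberger, and never discusses the slab geometry near the origin --- those ideas, if anywhere, live inside the black-boxed reference.

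What you are proposing is, in effect, to re-derive (a sharpened form of) that black-boxed result from scratch. The route is legitimate, and if the oscillation bound $\omega=O(\beta)$ can be made rigorous it would actually yield the stronger constant $\Omega(1/(A^2+M^2))$. One correction to your step~(ii): the object you describe (replace $U$ inside the slab by the tangent plane drawn from the convex side) is not the convex envelope --- the true convex envelope of $U$ on $R_1$ departs from $U$ over an $e_1$-interval of width $\Theta(1)$, not just the slab of width $O(\log\|\theta_0\|/\|\theta_0\|)$ --- but both constructions do give oscillation $O(\beta)$ in the one-dimensional reduction along $e_1$, essentially because $\bigl|\,|t|-\log 2-\log\cosh t\,\bigr|\le\log 2$ uniformly. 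The genuine remaining obstacle is exactly the one you flag at the end: lifting this to all $d$ directions simultaneously while carrying the empirical error terms. The paper simply does not engage with that obstacle; it outsources it.
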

The lemma is derived by first considering the distribution $\bar{\pi}$ corresponding to the potential function $\bar{V}=\mathbb{E} V$ and proving local Poincaré Inequality of $\bar{\pi}|R_1$ (or $\bar{\pi}|R_2$), then bounding the difference between $\bar{V}$ and $V$ through concentration inequality, and finally completing the proof by applying Holley-Stroock perturbation principle to pass local Poincaré Inequality from $\bar{\pi}|R_1$ to ${\pi}|R_1$. A concrete proof is deferred to Appendix \ref{appen: wenlong}.

\textbf{Verification of strong dissipativity.} Similar to local Logarithmic Sobolev Inequality, we can show strong dissipativity of $V$ holds in high probability.



\begin{lemma}
\label{lem: ver_diss}
If $n> \tilde{\Theta} ((d+\Vert \theta_0\Vert^2)\log (1/\delta))$, then with probability at least $1-\delta$ over the sampling of $\{X_i\}_{i=1}^n$,
$
    \Vert \nabla^2 V (\theta) \Vert \le \mathcal{O}(1+\Vert \theta_0\Vert^2 ), \langle\nabla V(\theta), \theta\rangle \geq \Omega(\|\theta\|^2)-\mathcal{O}\left(\left\|\theta_0\right\|^2+1\right)
$, and $ \Vert \nabla^3 V (\theta) \Vert \le \mathcal{O}(d)$.
\end{lemma}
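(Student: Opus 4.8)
The plan is to put the potential into a form where its non-convexity is an explicitly bounded perturbation of a quadratic, read off the derivatives, and then replace the resulting empirical averages by their population counterparts using standard concentration for the two-component mixture. The starting point is the elementary identity $\tfrac12\varphi(\theta-x)+\tfrac12\varphi(\theta+x)=(2\pi)^{d/2}\varphi(\theta)\varphi(x)\cosh(\langle\theta,x\rangle)$, which (taking $\lambda\equiv1$, with $V$ normalized so that $\pi\propto e^{-V}$) gives
\[
V(\theta)=\frac{\beta}{2}\|\theta\|^2-\frac{\beta}{n}\sum_{i=1}^n\log\cosh(\langle\theta,X_i\rangle)+c,\qquad c\ \text{independent of}\ \theta .
\]
Differentiating, $\nabla V(\theta)=\beta\theta-\frac{\beta}{n}\sum_i\tanh(\langle\theta,X_i\rangle)X_i$, $\ \nabla^2V(\theta)=\beta\Id_d-\frac{\beta}{n}\sum_i\operatorname{sech}^2(\langle\theta,X_i\rangle)X_iX_i^\top$, and $\nabla^3V(\theta)[v,v,v]=\frac{2\beta}{n}\sum_i\operatorname{sech}^2(\langle\theta,X_i\rangle)\tanh(\langle\theta,X_i\rangle)\langle X_i,v\rangle^3$. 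The only scalar facts needed are $0<\operatorname{sech}^2\le1$, $|\tanh(u)|\le1$, $u\tanh(u)\in[0,|u|]$, and $|\operatorname{sech}^2(u)\tanh(u)|\le c_0$ for a universal constant $c_0$.

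With these formulas the three assertions reduce to bounds on the empirical second-moment matrix $\widehat\Sigma:=\frac1n\sum_iX_iX_i^\top$. Since $0\preceq\operatorname{sech}^2(\langle\theta,X_i\rangle)X_iX_i^\top\preceq X_iX_i^\top$, we get $\beta(\Id_d-\widehat\Sigma)\preceq\nabla^2V(\theta)\preceq\beta\Id_d$ for every $\theta$, hence $\|\nabla^2V(\theta)\|\le\beta(1+\lambda_{\max}(\widehat\Sigma))$. For dissipativity, using $u\tanh(u)\in[0,|u|]$,
\[
\langle\nabla V(\theta),\theta\rangle=\beta\|\theta\|^2-\frac{\beta}{n}\sum_i\tanh(\langle\theta,X_i\rangle)\langle X_i,\theta\rangle\ \ge\ \beta\|\theta\|^2-\frac{\beta}{n}\sum_i|\langle X_i,\theta\rangle|\ \ge\ \beta\|\theta\|^2-\beta\sqrt{\theta^\top\widehat\Sigma\,\theta},
\]
and Young's inequality applied to $\beta\sqrt{\lambda_{\max}(\widehat\Sigma)}\,\|\theta\|$ yields $\langle\nabla V(\theta),\theta\rangle\ge\frac{\beta}{2}\|\theta\|^2-\frac{\beta}{2}\lambda_{\max}(\widehat\Sigma)$; keeping $\langle X_i,\theta\rangle$ rather than bounding by $\|X_i\|\|\theta\|$ is exactly what makes this constant dimension-free. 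For the third derivative, $\|\nabla^3V(\theta)\|=\sup_{\|v\|=1}|\nabla^3V(\theta)[v,v,v]|\le 2c_0\beta\sup_{\|v\|=1}\frac1n\sum_i|\langle X_i,v\rangle|^3$, and this supremum is at most $\sup_{\|v\|=1}\frac1n\sum_i|\langle X_i,v\rangle|^3$ controlled directly by a covering argument, or crudely $\le(\max_i\|X_i\|)\,\lambda_{\max}(\widehat\Sigma)$.

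It remains to bound the empirical quantities with probability $\ge1-\delta$. Writing $X_i=\varepsilon_i\theta_0+Z_i$ with $\varepsilon_i$ uniform on $\{\pm1\}$ and $Z_i\sim\mathcal N(0,\Id_d)$ independent, the population second moment is $\Id_d+\theta_0\theta_0^\top$ (operator norm $1+\|\theta_0\|^2$) and, for every unit $v$, $\mathbb E|\langle X,v\rangle|^3\le\mathbb E(\|\theta_0\|+|g|)^3=O((1+\|\theta_0\|)^3)$ with $g\sim\mathcal N(0,1)$. Decomposing $\widehat\Sigma=\theta_0\theta_0^\top+\frac1n\sum_iZ_iZ_i^\top+\theta_0\big(\tfrac1n\sum_i\varepsilon_iZ_i\big)^\top+\big(\tfrac1n\sum_i\varepsilon_iZ_i\big)\theta_0^\top$ and using that $\frac1n\sum_i\varepsilon_iZ_i\sim\mathcal N(0,\tfrac1n\Id_d)$ and $\|\frac1n\sum_iZ_iZ_i^\top-\Id_d\|$ are both $O(\sqrt{(d+\log(1/\delta))/n})$ on an event of probability $\ge1-\delta$, the choice $n\ge\tilde\Theta((d+\|\theta_0\|^2)\log(1/\delta))$ gives $\lambda_{\max}(\widehat\Sigma)=O(1+\|\theta_0\|^2)$; a Gaussian norm tail plus a union bound over $i$ gives $\max_i\|X_i\|^2=O(d+\|\theta_0\|^2+\log(n/\delta))$; and an $\varepsilon$-net over the unit sphere together with a sub-Weibull tail bound for $|\langle X_i,v\rangle|^3$ gives $\sup_{\|v\|=1}\frac1n\sum_i|\langle X_i,v\rangle|^3=O((1+\|\theta_0\|)^3)$. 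Intersecting these events and substituting into the deterministic inequalities above yields $\|\nabla^2V(\theta)\|=O(1+\|\theta_0\|^2)$, $\langle\nabla V(\theta),\theta\rangle\ge\Omega(\|\theta\|^2)-O(1+\|\theta_0\|^2)$, and a polynomial-in-$(d,\|\theta_0\|)$ bound on $\|\nabla^3V(\theta)\|$, as claimed.

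I expect the only genuinely non-routine step to be the third-derivative estimate: a naive per-sample bound $\frac1n\sum_i\|X_i\|^3$ already scales like $d^{3/2}$, so obtaining a bound that does not blow up with $d$ requires exploiting that $v\mapsto|\langle X_i,v\rangle|^3$ has $O((1+\|\theta_0\|)^3)$-bounded population value in \emph{every} direction and then paying only a covering-number factor for uniformity over $v$ — this is where the heavier-tailed ($2/3$-sub-Weibull) concentration and the $n\gtrsim d\log(\cdot)$ requirement really enter. The Hessian and dissipativity bounds are, by contrast, immediate once the $\log\cosh$ representation is in hand and the single matrix-concentration estimate $\lambda_{\max}(\widehat\Sigma)=O(1+\|\theta_0\|^2)$ is established.
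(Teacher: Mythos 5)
Your proof is correct, and it takes a genuinely different (and in places tighter) route than the paper's. You first rewrite the potential via the identity $\tfrac12\varphi(\theta-x)+\tfrac12\varphi(\theta+x)=(2\pi)^{d/2}\varphi(\theta)\varphi(x)\cosh(\langle\theta,x\rangle)$, so that $V(\theta)=\tfrac{\beta}{2}\|\theta\|^2-\tfrac{\beta}{n}\sum_i\log\cosh(\langle\theta,X_i\rangle)+c$, and then all three claims reduce to deterministic inequalities in the empirical second-moment matrix $\widehat\Sigma$ (plus a directional third-moment supremum), with a single operator-norm concentration step $\lambda_{\max}(\widehat\Sigma)=O(1+\|\theta_0\|^2)$. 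The paper instead works with the ratio-of-Gaussians form (which is secretly the same $\operatorname{sech}^2$ and $\tanh$ quantities), proves dissipativity first for the population potential $\bar V$ and transfers it to $V$ by invoking an empirical-process lemma from Mou et al., and lower-bounds the Hessian by $\bigl(\beta-\beta\|\theta_0\|^2-\tfrac{\beta}{n}\sum_i\|X_i-\tau_i\|\theta_0\|e_1\|^2\bigr)\mathbb{I}_d$ together with chi-square concentration. Your reduction buys two things: (i) uniformity over $\theta$ is automatic, since $\operatorname{sech}^2\le 1$ and $u\tanh(u)\le|u|$ hold pointwise, so no separate empirical-process argument is needed; and (ii) the Hessian bound genuinely comes out as $O(1+\|\theta_0\|^2)$, whereas the paper's trace-style bound $\tfrac1n\sum_i\|X_i-\tau_i\|\theta_0\|e_1\|^2\approx d$ would, taken literally, carry an extra factor of $d$ (and its displayed Hessian expansion also drops the cross terms $\tau_i\|\theta_0\|(e_1 W_i^\top+W_ie_1^\top)$, which your closed form avoids). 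The one place where your argument is heavier than the paper's is the third derivative: the paper dismisses it with ``similar routine,'' while you correctly identify that a dimension-free directional bound needs either a covering argument with sub-Weibull tails or the cruder $(\max_i\|X_i\|)\,\lambda_{\max}(\widehat\Sigma)$ estimate; either suffices for the stated $O(d)$ (indeed your bounds are polynomial in $\sqrt{d}$ and $\|\theta_0\|$ rather than exactly $O(d)$, but this matches the precision at which the lemma is stated and used).
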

\vspace{-0.1cm}
\textbf{Bounding the probability of $R_3$.} Using strong dissipativity, we can bound $\pi_t (R_3)$ as follows.
\begin{lemma}
\label{lem: bound_R3}
    If $n> \tilde{\Theta} ((d+\Vert \theta_0\Vert^2)\log (1/\delta))$, then with probability at least $1-\delta$ over the sampling of $\{X_i\}_{i=1}^n$, we have $
    \pi_t (R_3)\le 32 e^{\frac{16\beta \left(\left\|\theta_0\right\|^2+1\right)+6d-\min\{A,M\}^2}{2\beta}}.
$
\end{lemma}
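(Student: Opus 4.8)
The plan is to reduce the statement to a uniform-in-time sub-Gaussian tail bound for $\Vert Z_t\Vert$ and to obtain that bound from the strong dissipativity of $V$ (Lemma~\ref{lem: ver_diss}). First note $R_1\cup R_2=[-A,A]\times\mathcal B(0,M)$, so any $\theta\in R_3$ satisfies either $|\theta_1|>A$ or $\Vert\theta_{2:d}\Vert>M$, and in both cases $\Vert\theta\Vert>\min\{A,M\}$. Hence for any $\lambda>0$,
\begin{equation*}
\pi_t(R_3)\ \le\ \Pr\!\left[\Vert Z_t\Vert^2>\min\{A,M\}^2\right]\ \le\ e^{-\lambda\min\{A,M\}^2}\,\mathbb E\!\left[e^{\lambda\Vert Z_t\Vert^2}\right],
\end{equation*}
so it suffices to show that the exponential moment, with the choice $\lambda=\tfrac1{2\beta}$, is bounded uniformly in $t$ by $32\,e^{8(\Vert\theta_0\Vert^2+1)+3d/\beta}$; the stated exponent then falls out.

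To control $\mathbb E[e^{\lambda\Vert Z_t\Vert^2}]$ I would work one LMC step at a time. For $t\in[kh,(k+1)h)$ the interpolation (\ref{eq: interpolate_LMC}) writes $Z_t=v+\sqrt{2(t-kh)}\,\xi$ with $v:=Z_{kh}-(t-kh)\nabla V(Z_{kh})$ a deterministic function of $Z_{kh}$ and $\xi\sim\mathcal N(0,\mathbb{I}_d)$ fresh; conditioning on $Z_{kh}$ and evaluating the Gaussian integral in $\xi$ gives, for $4\lambda(t-kh)<1$,
\begin{equation*}
\mathbb E\!\left[e^{\lambda\Vert Z_t\Vert^2}\mid Z_{kh}\right]=\bigl(1-4\lambda(t-kh)\bigr)^{-d/2}\exp\!\left(\tfrac{\lambda\Vert v\Vert^2}{1-4\lambda(t-kh)}\right).
\end{equation*}
Expanding $\Vert v\Vert^2=\Vert Z_{kh}\Vert^2-2(t-kh)\langle\nabla V(Z_{kh}),Z_{kh}\rangle+(t-kh)^2\Vert\nabla V(Z_{kh})\Vert^2$ and invoking the dissipativity $\langle\nabla V(\theta),\theta\rangle\ge\Omega(\Vert\theta\Vert^2)-\mathcal O(\Vert\theta_0\Vert^2+1)$ together with the gradient growth $\Vert\nabla V(\theta)\Vert\le\mathcal O(1+\Vert\theta_0\Vert^2)\Vert\theta\Vert$ implied by the Hessian bound of Lemma~\ref{lem: ver_diss} (here $\nabla V(0)=0$ by symmetry of the mixture), one gets $\Vert v\Vert^2\le(1-\Omega(h))\Vert Z_{kh}\Vert^2+\mathcal O\!\left(h(\Vert\theta_0\Vert^2+1)\right)$ once $h$ is below an absolute threshold set by these constants, which is compatible with the step sizes prescribed in Corollary~\ref{coro: local_pi}. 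Substituting, bounding $(1-4\lambda(t-kh))^{-d/2}\le e^{\mathcal O(\lambda h d)}$, and applying Jensen's inequality to pull the expectation through the power $1-\Omega(h)$, one obtains a recursion $a_{k+1}\le e^{\mathcal O(\lambda h(d+\Vert\theta_0\Vert^2+1))}\,a_k^{1-\Omega(h)}$ for $a_k:=\mathbb E[e^{\lambda\Vert Z_{kh}\Vert^2}]$ (and the same with $a_{k+1}$ replaced by $\mathbb E[e^{\lambda\Vert Z_t\Vert^2}]$ for intermediate $t$). Taking logarithms turns this into a contracting affine recursion, so $\sup_t\mathbb E[e^{\lambda\Vert Z_t\Vert^2}]\le\max\{a_0,\ e^{\mathcal O(\lambda(d+\Vert\theta_0\Vert^2+1))}\}$, where $a_0=\mathbb E_{z_0\sim\mathcal N(0,\sigma^2\mathbb{I})}[e^{\lambda\Vert z_0\Vert^2}]=(1-2\lambda\sigma^2)^{-d/2}$ is finite because $\sigma^2<\beta$; tracking the constants with $\lambda=\tfrac1{2\beta}$ produces the asserted bound $32\,e^{8(\Vert\theta_0\Vert^2+1)+3d/\beta}$.

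The dissipativity and derivative bounds of Lemma~\ref{lem: ver_diss} hold with probability at least $1-\delta$ over $\{X_i\}_{i=1}^n$ once $n>\tilde\Theta((d+\Vert\theta_0\Vert^2)\log(1/\delta))$ (the only place the sample randomness enters), so the whole argument is run on that event. I expect the main obstacle to be the one-step exponential-moment estimate: unlike a polynomial Lyapunov function, the error terms here multiply the unbounded quantity $e^{\lambda\Vert Z_{kh}\Vert^2}$, so one cannot simply quote a continuous-time Lyapunov computation and must instead carry out the Gaussian integral and the Jensen step explicitly, checking that the curvature term $(t-kh)^2\Vert\nabla V\Vert^2$ and the factor $(1-4\lambda(t-kh))^{-d/2}$ are genuinely lower order than the contraction $-\Omega(h)\Vert Z_{kh}\Vert^2$ for the prescribed step size, and that $\lambda=\tfrac1{2\beta}$ is admissible relative to the dissipativity constant. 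Everything else (the inclusion $R_3\subseteq\{\Vert\theta\Vert>\min\{A,M\}\}$, iterating the recursion, and matching the constants $32$, $6d$, $16\beta(\Vert\theta_0\Vert^2+1)$) is routine bookkeeping.
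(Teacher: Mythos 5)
Your proposal matches the paper's argument: the paper proves precisely such a uniform exponential-moment bound, $\E_{x_k\sim\pi_k}[e^{\frac{1}{4m}\|x_k\|^2}]\le 32\exp(\frac{1}{4m}(8b+64d))$, as Lemma~\ref{lem: bounded} via a one-step Lyapunov drift computation under strong dissipativity (with $m=\beta/2$ and $b=2\beta(\|\theta_0\|^2+1)$ supplied by Lemma~\ref{lem: ver_diss}, so $\frac{1}{4m}=\frac{1}{2\beta}$ is exactly your $\lambda$), and then obtains Lemma~\ref{lem: bound_R3} by Markov's inequality, just as you do. The only differences are implementation details of the one-step recursion — you evaluate the Gaussian integral exactly and close a log-affine recursion via Jensen, while the paper bounds the noise MGFs separately and closes an additive affine recursion in $\E[e^{\lambda\|x_k\|^2}]$ by a case split on $\|x_k\|^2$ — which do not change the approach.
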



All in all, we obtain the following characterization of running LMC over this example. 
\begin{corollary}
     Initialize $z_0\sim \mathcal{N}(0,\sigma^2\mathbb{I})$ for $\sigma^2\le \frac{1}{1+L}$ and select $h=\tilde{\Theta}\left(\frac{1}{T^{\frac{2}{3}}}\right)$. Set $A=M=\Theta(d+\log(1/\varepsilon))$. If $T=\Omega(d^2/\varepsilon^{12})$ and $n> \tilde{\Theta} ((d+\Vert \theta_0\Vert^2)\log (1/\delta))$, then with probability at least $1-\delta$ over the sampling of $\{X_i\}_{i=1}^n$, either $\bar{\pi}_{Th}(R_i)\le \varepsilon$ or $D_{TV}\left[\bar{\pi}_{Th}|R_i||{\pi|R_i}\right] \le \varepsilon$ for $i\in \{1,2\}$, and $\bar{\pi}_{Th} (R_3)\le \varepsilon$.
\end{corollary}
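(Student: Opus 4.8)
The plan is to assemble the three preceding lemmas together with Corollary~\ref{coro: local_pi} on a single high-probability event over the draw of $\{X_i\}_{i=1}^n$, apply Corollary~\ref{coro: local_pi} once with $S=R_1$ and once with $S=R_2$, and handle $R_3$ separately by time-averaging the pointwise bound of Lemma~\ref{lem: bound_R3}. Concretely, Lemmas~\ref{lem: sampling_local_pi}, \ref{lem: ver_diss}, and \ref{lem: bound_R3} each hold with probability at least $1-\delta$ once $n$ exceeds the stated thresholds; replacing $\delta$ by $\delta/3$ and taking a union bound produces an event $\mathcal{E}$ of probability at least $1-\delta$ on which all three conclusions hold, provided $n$ is at least the maximum of the three thresholds (after substituting $A=M=\Theta(d+\log(1/\varepsilon))$ into the threshold of Lemma~\ref{lem: sampling_local_pi}). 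On $\mathcal{E}$ we have: $\pi|R_1$ and $\pi|R_2$ obey $\PI(\rho)$ with $\rho=\Theta(1/(A^4M^2))$; $\Vert\nabla^2 V\Vert=\mathcal{O}(1+\Vert\theta_0\Vert^2)$ and $\Vert\nabla^3 V\Vert=\mathcal{O}(d)$, so $\nabla V$ and $\nabla^2 V$ are both $L$-Lipschitz with $L=\mathcal{O}(d+\Vert\theta_0\Vert^2)$; $V$ is strongly dissipative with $m=\Omega(1)$ and $b=\mathcal{O}(\Vert\theta_0\Vert^2+1)$; and $\pi_t(R_3)\le 32\exp\big(\tfrac{16\beta(\Vert\theta_0\Vert^2+1)+6d-\min\{A,M\}^2}{2\beta}\big)$ for every $t$.

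Next I would treat the interior regions $R_1$ and $R_2$. Since $\sigma^2\le 1/(1+L)$ and $V$ is $L$-smooth and strongly dissipative, a routine Gaussian-integral estimate bounds $\var_{\pi}[\tilde{\pi}_0/\pi]$ by a quantity polynomial in $d$, $L$, and $b$: the condition $\sigma^2<2/L$ is precisely what makes $\int \tilde{\pi}_0^2/\pi$ finite, while dissipativity controls the normalizing constant of $\pi$. With $\rho$ and $\var_{\pi}[\tilde{\pi}_0/\pi]$ both polynomial in $d$ and $\log(1/\varepsilon)$, the step-size prescription $h=\tilde{\Theta}(\var_{\pi}[\tilde{\pi}_0/\pi]^{1/3}/(T^{2/3}\rho^{1/3}d^{2/3}))$ of Corollary~\ref{coro: local_pi} collapses to $h=\tilde{\Theta}(1/T^{2/3})$, matching the statement. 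Applying Corollary~\ref{coro: local_pi} with $S=R_1$ and again with $S=R_2$ gives, for each $i\in\{1,2\}$, the dichotomy $\bar{\pi}_{Th}(R_i)=\mathcal{O}(d^{1/6}\var_{\pi}[\tilde{\pi}_0/\pi]^{1/6}\rho^{-1/6}T^{-1/12})$ or $D_{TV}[\bar{\pi}_{Th}|R_i\,\Vert\,\pi|R_i]=\mathcal{O}(d^{1/6}\var_{\pi}[\tilde{\pi}_0/\pi]^{1/6}\rho^{-1/6}T^{-1/12})$. Since $\rho^{-1}=\Theta(A^4M^2)$ with $A=M=\Theta(d+\log(1/\varepsilon))$ contributes only $\poly(d,\log(1/\varepsilon))$ to this bound, the choice $T=\Omega(d^2/\varepsilon^{12})$, with the hidden factor absorbing these polynomials, drives the right-hand side below $\varepsilon$.

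Finally, for $R_3$ I would time-average: $\bar{\pi}_{Th}(R_3)=\tfrac{1}{Th}\int_0^{Th}\pi_t(R_3)\,dt\le 32\exp\big(\tfrac{16\beta(\Vert\theta_0\Vert^2+1)+6d-\min\{A,M\}^2}{2\beta}\big)$ on $\mathcal{E}$, and taking $A=M=\Theta(d+\log(1/\varepsilon))$ large enough that $\min\{A,M\}^2\ge 16\beta(\Vert\theta_0\Vert^2+1)+6d+2\beta\log(32/\varepsilon)$ makes this at most $\varepsilon$. Combining the three regions on $\mathcal{E}$ proves the corollary. I expect the main obstacle to be bookkeeping of the parameter dependencies rather than any new analytic estimate: one must verify that the single scale $A=M=\Theta(d+\log(1/\varepsilon))$ simultaneously (i) respects the admissibility condition $A,M\ge\Vert\theta_0\Vert+1$ and the sample-size requirement of Lemma~\ref{lem: sampling_local_pi}, (ii) renders $R_3$ negligible through Lemma~\ref{lem: bound_R3}, and (iii) keeps $\rho^{-1}=\Theta(A^4M^2)$ only polylogarithmic in $1/\varepsilon$, so that the slow $T^{-1/12}$ rate of Corollary~\ref{coro: local_pi} still yields the advertised $T=\Omega(d^2/\varepsilon^{12})$. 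A secondary subtlety is the $\var_{\pi}[\tilde{\pi}_0/\pi]$ bound, where the hypothesis $\sigma^2\le 1/(1+L)$ enters essentially and must be checked against $L=\mathcal{O}(d+\Vert\theta_0\Vert^2)$.
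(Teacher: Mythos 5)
Your architecture is exactly the intended one: union-bound Lemmas~\ref{lem: sampling_local_pi}, \ref{lem: ver_diss} and \ref{lem: bound_R3} onto one event, feed the resulting local Poincar\'e constant, smoothness and dissipativity into Corollary~\ref{coro: local_pi} with $S=R_1$ and $S=R_2$, and dispose of $R_3$ by averaging the pointwise tail bound of Lemma~\ref{lem: bound_R3} over $t\in[0,Th]$ and choosing $\min\{A,M\}^2$ large enough to beat the $6d$ and $\log(1/\varepsilon)$ terms in the exponent. That is the same route the paper takes, and the $R_3$ and union-bound parts are fine as written.

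The one step I would not accept as stated is the claim that a ``routine Gaussian-integral estimate'' bounds $\var_{\pi}[\tilde{\pi}_0/\pi]$ by a quantity \emph{polynomial} in $d$, $L$, and $b$. Writing $\int \tilde{\pi}_0^2/\pi = Z\int (2\pi\sigma^2)^{-d}e^{-\|x\|^2/\sigma^2+V(x)}\,dx$ and using $L$-smoothness of $V$, the condition $\sigma^2\le 1/(1+L)$ indeed makes the integral converge, but the resulting bound is of the form $c(L)^{d}$, i.e.\ generically $e^{\Theta(d\log L)}$: the chi-squared divergence of a Gaussian initialization from a smooth dissipative target is exponential in $d$, not polynomial (this is exactly why the paper remarks after Corollary~\ref{coro: local_pi} that $\var_{\pi}[\tilde{\pi}_0/\pi]$ ``can be considerably large''). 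Since the rate of Corollary~\ref{coro: local_pi} carries a factor $\var_{\pi}[\tilde{\pi}_0/\pi]^{1/6}$, an honest accounting either leaves $T$ with an implicit $\var_{\pi}[\tilde{\pi}_0/\pi]^{2}$ factor (which is how the corollary should be read, with $\Omega(\cdot)$ absorbing it) or requires a warm-start-type argument that you have not supplied. Relatedly, the factor $\rho^{-1/6}=\Theta(A^{2/3}M^{1/3})=\Theta(d+\log(1/\varepsilon))$ already pushes the requirement to $T=\Omega(d^{14}/\varepsilon^{12})$ rather than $d^2/\varepsilon^{12}$ unless those powers of $d$ are also swept into the $\Omega$; you flag this bookkeeping concern yourself, and it is inherited from the paper's statement rather than introduced by your argument, but the variance claim should be weakened from ``polynomial'' to an explicit dependence carried through to the final choice of $T$.
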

 As a comparison, such a problem was also studied by \cite{mou2019sampling}, where they constructed a specific sampling algorithm with prior knowledge of this problem to achieve global convergence. In contrast, we analyze LMC, which does not require any prior knowledge of the problem, and derive the conditional convergence of LMC.

\vspace{-0.3cm}
\section{Conditional Mixing for Gibbs Sampling on Finite States}
\vspace{-0.2cm}
In previous sections, we showed that conditional mixing can happen for LMC on a continuous state space. 
We now show that similar results hold for MCMC algorithms on a finite state space. For simplicity, we consider an energy function $f: \cbr{0, 1}^d \to \cbr{0, 1, 2, ..., M} =: [M]$ defined on the vertices of a hypercube. Denote its corresponding Gibbs measure $\pi(x) \propto e^{-f(x)}$. 

We consider vertices of the hypercube as a d-regular graph where for any $x, y \in \cbr{0, 1}^d$, an edge exists $x \sim y$ if and only if they differ by one coordinate, $d_{\text{Hamming}}(x, y)= 1$. Then a \emph{lazy Gibbs sampler} has the following transition matrix on this finite graph:
\begin{align*}
    p(x, y) = \begin{cases}
        \frac{1}{2d}\frac{\pi(y)}{ \pi(y) + \pi(x)}, &y \sim x, \\
        1 - \frac{1}{2d}\sum_{x', \text{s.t. } x' \sim x} \frac{\pi(x')}{ \pi(x') + \pi(x)}, &y = x, \\
        0, &\text{otherwise}.
    \end{cases}
\end{align*}
Note that the process is lazy because, $p(x,x) \ge 1/2$ for any $x$. This is assumed for simplicity of analysis to avoid almost periodic behaviors. This assumption does not make the analysis less general, as a lazy self loop with probability $1/2$ only changes the mixing time by a multiplicative absolute constant (see Corollary 9.5 \cite{peres2015mixing}). 

To prove conditional convergence, we need an analogue of conditional Poincaré Inequality. 
The story for the discrete state space can be more convoluted as the transition matrix, in addition to the stationary distribution, plays a role here. Many of the analyses below are inspired by ~\cite{guan2007small}. 

First, given a finite number of subsets $\{\cX_i\}_{i \le m}$, we define the conditional probability $\pi_i = \pi | \cX_i$ supported on $\cX_i$, and hence for $w_i = \pi(\cX_i)$,
\begin{align*}
    \pi(x) = \sum_{i=1}^m w_i \pi_i(x) \mathbbm{1}\cbr{x \in \cX_i}.
\end{align*}
We also need to design a conditioned transition kernel so that $\forall x, y \in \mathcal{X}_i,$
\begin{align}\label{eq:pi}
    p_i(x, y) = p(x, y) + \mathbbm{1}\cbr{x = y} P(x, \cX_i^c),
\end{align}
where $\cX_i^c$ denotes the complement of $\cX_i$, and hence the conditioned kernel simply rejects all outgoing transitions. Then we can easily tell that $p_i$ is reversible with a unique stationary distribution $\pi_i$. We are now ready to give an analogue of Corollary~\ref{coro: local_lsi}.

\begin{theorem}\label{thm:gibbs-conditional}
    Given a sequence of subsets $\cbr{\cX_i}_{i \le m}$. If for every $i$, $P_i$ defined in \eqref{eq:pi} as a distribution on $\cX_i$ has spectral gap at least $\alpha$, then we have that either for some $t$, the distribution $\mu_t$ has small probability on $\cX_i$, $\mu_t(\cX_i) \le \pi(\cX_i) T^{-1/4}$, or the conditional mixing happens with respect to Chi-squared divergence,
    \begin{align*}
         \frac{1}{T} \sum_t \Var_{\pi_i}\left[\frac{\mu_t | \mathcal{X}_i}{\pi | \cX_i}\right] \le \frac{1 }{\alpha \sqrt{T}} \Var_\pi\left[\frac{\mu_0}{\pi}\right] .
    \end{align*}
\end{theorem}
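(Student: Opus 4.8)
The plan is to carry the continuous-time scheme behind Corollary~\ref{coro: local_pi} over to the discrete reversible chain, with the Poincar\'e Fisher information replaced by the Dirichlet form $\mathcal{E}(g,g) := \tfrac12\sum_{x,y}\pi(x)p(x,y)(g(x)-g(y))^2$. Writing $g_t := \mu_t/\pi$ and $(Pf)(x) := \sum_y p(x,y)f(y)$, reversibility of $p$ with respect to $\pi$ gives $g_{t+1} = Pg_t$ as functions on $\Omega$, and $\Expect_\pi[g_t] = 1$, so $\Var_\pi[g_t] = \langle g_t,g_t\rangle_\pi - 1$. Thus the two ingredients I need are (i) a discrete analogue of Proposition~\ref{prop: lmc_pi} controlling $\sum_t \mathcal{E}(g_t,g_t)$, and (ii) a discrete analogue of Lemma~\ref{lem: mix_pi} turning smallness of $\mathcal{E}(g_t,g_t)$ into conditional $\chi^2$-closeness, via the spectral gap of $P_i$.

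For (i) I will show the one-step decay $\Var_\pi[g_t] - \Var_\pi[g_{t+1}] \ge \mathcal{E}(g_t,g_t)$. Self-adjointness of $P$ in $L^2(\pi)$ gives $\Var_\pi[g_t] - \Var_\pi[g_{t+1}] = \langle g_t, (I-P^2)g_t\rangle_\pi$; laziness ($p(x,x)\ge 1/2$) puts every eigenvalue of $P$ in $[0,1]$, so $I - P^2 = (I-P)(I+P) \succeq I - P$, and $\langle g_t, (I-P)g_t\rangle_\pi = \mathcal{E}(g_t,g_t)$ by a standard reversibility computation. Telescoping over $t = 0,\dots,T-1$ and discarding $\Var_\pi[g_T]\ge 0$ yields $\sum_{t=0}^{T-1}\mathcal{E}(g_t,g_t) \le \Var_\pi[g_0]$.

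For (ii), fix $i$ and $t$. Since $\mu_t|\mathcal{X}_i$ and $\pi|\mathcal{X}_i$ are just $\mu_t,\pi$ renormalized on $\mathcal{X}_i$, the conditional density ratio is $\tfrac{\mu_t|\mathcal{X}_i}{\pi|\mathcal{X}_i} = \tfrac{\pi(\mathcal{X}_i)}{\mu_t(\mathcal{X}_i)}\,g_t$ on $\mathcal{X}_i$, hence $\Var_{\pi_i}[\tfrac{\mu_t|\mathcal{X}_i}{\pi|\mathcal{X}_i}] = (\tfrac{\pi(\mathcal{X}_i)}{\mu_t(\mathcal{X}_i)})^2\Var_{\pi_i}[g_t]$. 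The spectral-gap hypothesis is exactly the Poincar\'e inequality $\Var_{\pi_i}[f] \le \tfrac1\alpha\,\mathcal{E}_i(f,f)$ for functions $f$ on $\mathcal{X}_i$, where $\mathcal{E}_i$ is the Dirichlet form of the reversible pair $(p_i,\pi_i)$. By the definition~\eqref{eq:pi} of $p_i$, its off-diagonal entries coincide with those of $p$ and the extra self-rejection mass lies on the diagonal, contributing $0$ to $\mathcal{E}_i$, so, using $\pi_i = \pi/\pi(\mathcal{X}_i)$, one gets $\mathcal{E}_i(g_t,g_t) \le \tfrac{1}{\pi(\mathcal{X}_i)}\mathcal{E}(g_t,g_t)$. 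Assembling these: $\mu_t(\mathcal{X}_i)^2\,\Var_{\pi_i}[\tfrac{\mu_t|\mathcal{X}_i}{\pi|\mathcal{X}_i}] \le \tfrac{\pi(\mathcal{X}_i)}{\alpha}\,\mathcal{E}(g_t,g_t)$, the discrete counterpart of Lemma~\ref{lem: mix_pi}.

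Summing this over $t<T$ and inserting the bound from (i) gives $\sum_{t<T}\mu_t(\mathcal{X}_i)^2\Var_{\pi_i}[\tfrac{\mu_t|\mathcal{X}_i}{\pi|\mathcal{X}_i}] \le \tfrac{\pi(\mathcal{X}_i)}{\alpha}\Var_\pi[\mu_0/\pi]$. The dichotomy is then a case split: if some iterate drops below the stated mass threshold on $\mathcal{X}_i$ we are in the first alternative; otherwise $\mu_t(\mathcal{X}_i)^2 \ge \pi(\mathcal{X}_i)T^{-1/2}$ for all $t$, which we factor out of the sum to get $\tfrac1T\sum_t\Var_{\pi_i}[\tfrac{\mu_t|\mathcal{X}_i}{\pi|\mathcal{X}_i}] \le \tfrac{1}{\alpha\sqrt T}\Var_\pi[\mu_0/\pi]$. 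I expect step (ii) to be the main obstacle: one has to pin down the conditioned chain $(p_i,\pi_i)$ and confirm its reversibility so that ``spectral gap'' legitimately means the Poincar\'e constant, and then track the normalization factors and the rejection term when comparing $\mathcal{E}_i$ to the restriction of $\mathcal{E}$. The laziness step in (i) — needing $I-P^2\succeq I-P$, which fails if $P$ has an eigenvalue near $-1$ — is the one other place where a hypothesis is doing essential work.
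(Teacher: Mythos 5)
Your proposal is correct and follows essentially the same route as the paper's proof: laziness gives the one-step decay $\Var_\pi[g_{t+1}]\le\Var_\pi[g_t]-\mathcal{E}(g_t,g_t)$, telescoping controls $\sum_{t<T}\mathcal{E}(g_t,g_t)$ by $\Var_\pi[g_0]$, the spectral gap of the reversible conditioned chain $(p_i,\pi_i)$ is invoked as a Poincar\'e inequality, and the two Dirichlet forms are compared through the agreement of their off-diagonal entries on $\cX_i$, followed by the same case split. The one slip is in the final factoring step: negating the first alternative gives $\mu_t(\cX_i)^2>\pi(\cX_i)^2T^{-1/2}$, not $\pi(\cX_i)T^{-1/2}$, so your (correct) intermediate bound $\sum_{t<T}\mu_t(\cX_i)^2\Var_{\pi_i}\bigl[\tfrac{\mu_t|\cX_i}{\pi|\cX_i}\bigr]\le\tfrac{\pi(\cX_i)}{\alpha}\Var_\pi\bigl[\tfrac{\mu_0}{\pi}\bigr]$ actually yields $\tfrac{1}{T}\sum_t\Var_{\pi_i}\bigl[\tfrac{\mu_t|\cX_i}{\pi|\cX_i}\bigr]\le\tfrac{1}{\pi(\cX_i)\,\alpha\sqrt{T}}\Var_\pi\bigl[\tfrac{\mu_0}{\pi}\bigr]$, carrying an extra $1/\pi(\cX_i)$ relative to the stated bound --- a bookkeeping looseness that the paper's own write-up shares (its comparison $\mathcal{E}_P\ge\tfrac{\pi(\cX_i)^2}{2\mu_t(\cX_i)^2}\mathcal{E}_{P_i}$ states the normalization inverted, whereas your $\mathcal{E}_{P_i}\le\tfrac{\pi(\cX_i)}{\mu_t(\cX_i)^2}\mathcal{E}_P$ for the conditional density ratios is the correct one).
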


The proof builds upon the convergence of the Dirichlet form and can be found in Appendix ~\ref{app:proof-gibbs}. The above theorem suggests that if the spectral gap of the Gibbs sampling algorithm is lower bounded on a local subset, then after a polynomial number of iterations, we get either $\mu_t$ has very small probability on this set, or conditioned on the set, the distribution is close to the stationary distribution. 

One thing less clear, in finite-state Gibbs sampling as compared to the LMC, is when would the spectral gap for a Gibbs distribution be large. For the Langevin Monte Carlo algorithm, classical results show that the spectral gap cannot be too small if the stationary distribution is locally near log-concave. Below we provide an analogue of this observation for discrete state space. 

\subsection{Spectral Gap for Gibbs Sampling}

We first define a graph $\mathcal{G} = (V, E)$, where $V = \{0, 1\}^d,$ and $(x, y) \in E$ if and only if $d_{\text{Hamming}}(x, y)= 1$. Then we define quasi-concavity in this case. Note that Gibbs sampling  in each iteration can only move to a neighbor or stay at the current vertex. Then we introduce the counterpart of quasi-convexity on a function defined on vertices of graphs.

\begin{definition}
    Let $\mathcal{G} = (V', E')$ be a subgraph, $V' \subseteq V, E' = {(x,y) \in E | x \in V', y \in V'}$. We say a function $f: V \to [M]$ is \textbf{quasi-convex with a radius D} on a subgraph $\mathcal{G} = (V', E')$, if there exists a local minimum $v* \in V'$ such that for any $v \in V'$, any shortest path $v \to v_1 \to ... \to v^*$ from $v \to v^*$ is of length at most $D$,  and $f$ is non-increasing along the path.
\end{definition}

Before providing a guarantee for the spectral gap, we give two examples of quasi-convexity.

\begin{example}
If $g : \mathbbm{R}^+ \to \mathbbm{R}$ is a quasi-convex function defined on positive reals.  Then for a given  $x^* \in V' \subseteq V$  any function $f(x) = g(a \cdot d(x, x^*) + b)$ is a quasi-convex function on the graph where $a, b$ are reals and $d(x, y)$ is the shortest path length from $x$ to $y$.
\end{example}

\begin{example}
If on a subset $V' \subseteq V$, $f(x) = c^T x$ for some $c \in \mathbbm{R}^d , x \in \{0, 1\}^d$ (i.e. f is a linear function), then $f$ is quasi-convex on the graph.
\end{example}

The next theorem states that for such a function, the spectral gap is polynomial in problem dimension $d$ and the diameter of the region $D$.

\begin{theorem} \label{thm:quasi-convex}
If a function $f$ is quasi-convex with a radius D on a subset $\cX_i \subseteq \{0, 1\}^d$, then the conditional transition of Gibbs sampling defined in \eqref{eq:pi} has a spectral gap lower bounded by 
$\frac{1}{16d^2D^2}$.
\end{theorem}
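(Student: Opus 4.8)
The plan is to bound the spectral gap of the conditioned reversible chain $P_i$ on $\cX_i$ through its bottleneck ratio and Cheeger's inequality. Write $\pi_i=\pi\mid\cX_i$ for the stationary law of $P_i$, and for $S\subseteq\cX_i$ put $Q(S,S^c)=\sum_{x\in S,\,y\in S^c}\pi_i(x)P_i(x,y)$ and $\Phi_*=\min_{S:\,\pi_i(S)\le 1/2}Q(S,S^c)/\pi_i(S)$. Since $P_i$ is lazy ($P_i(x,x)\ge 1/2$) and reversible, Cheeger's inequality gives $\mathrm{gap}(P_i)\ge\Phi_*^2/2$, so it suffices to prove a bound $\Phi_*\ge c/(dD)$ with $c$ large enough that $2c^2\ge 1/8$, i.e.\ $c=1/(2\sqrt2)$. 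I will use two elementary facts. (a) For adjacent $x,y\in\cX_i$, $P_i(x,y)=P(x,y)=\tfrac1{2d}\tfrac{\pi(y)}{\pi(x)+\pi(y)}$, and since $\pi_i\propto\pi$ on $\cX_i$ the edge weight $Q(\{x\},\{y\})=\tfrac1{2d}\tfrac{\pi_i(x)\pi_i(y)}{\pi_i(x)+\pi_i(y)}\ge\tfrac1{4d}\min(\pi_i(x),\pi_i(y))$. (b) Quasi-convexity with radius $D$ says that from every $u\in\cX_i$ there is a shortest path to $v^*$ staying in $\cX_i$, of length at most $D$, along which $f$ is non-increasing — equivalently $\pi_i$ is non-decreasing.

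Fix $S$ with $\pi_i(S)\le 1/2$. Because $Q(S,S^c)=Q(S^c,S)$ and $\pi_i(S^c)\ge\pi_i(S)$, I may assume $v^*\notin S$ (otherwise replace $S$ by $S^c$, and it suffices to prove $Q(T,T^c)\ge \tfrac1{\kappa dD}\pi_i(T)$ for every $T$ not containing $v^*$). For each $u\in S$ pick a descending shortest path $\gamma_u$ to $v^*$ as in (b). Since $v^*\notin S$ this path leaves $S$; let $e_u=(a_u,b_u)$ be its first edge with $a_u\in S$, $b_u\in S^c$. The step $a_u\to b_u$ is ``toward $v^*$,'' so $f(a_u)\ge f(b_u)$, hence $\pi_i(a_u)\le\pi_i(b_u)$ and $Q(e_u)\ge\pi_i(a_u)/(4d)$. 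Route $\pi_i(u)$ units of flow from $u$ along $\gamma_u$ as far as $e_u$; every unit crosses $\partial S$ exactly once, so $\pi_i(S)=\sum_{e\in\partial S}\mathrm{flow}(e)$. Monotonicity of $f$ along $\gamma_u$ forces $f(u)\ge f(a)$ for any $u$ routed through an edge $e=(a,b)$ of $\partial S$ (with $b$ nearer $v^*$), hence $\pi_i(u)\le\pi_i(a)$, so $\mathrm{flow}(e)\le\pi_i(a)\cdot\#\{u\in S:\gamma_u\text{ uses }e\}$.

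The crux — and the step I expect to be the real obstacle — is to show $\mathrm{flow}(e)\le \kappa\, dD\cdot Q(e)$ for an absolute constant $\kappa$ and every $e\in\partial S$; summing over $\partial S$ then gives $\pi_i(S)\le\kappa dD\,Q(S,S^c)$, i.e.\ $\Phi_*\ge 1/(\kappa dD)$, and tuning $\kappa$ (and treating the small-$D$ regime, where the induced chain is birth-death-like, by a direct eigenvalue computation) yields $\Phi_*\ge 1/(2\sqrt2\,dD)$, hence $\mathrm{gap}(P_i)\ge 1/(16d^2D^2)$. The difficulty is that a careless choice of the paths $\gamma_u$ can funnel an exponential amount of $\pi_i$-mass through the boundary edges incident to $v^*$, so the descending paths must be chosen to spread the load: I would flip, at each step, the lowest-indexed coordinate in which the current vertex still disagrees with $v^*$, and then bound $\#\{u:\gamma_u\text{ uses }e\}$ using that such $u$ is pinned to agree with $a$ on all coordinates of index $\ge j$ (where $j$ is the coordinate flipped by $e$) while the radius-$D$ constraint limits its disagreements with $v^*$ on the remaining coordinates. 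A complementary argument is still needed because routing everything to $v^*$ only charges the ``inward'' part of $\partial S$ near $v^*$; to capture the ``outward'' part one must either also route each $u$ to its nearest exit or combine the $f$-monotone descent bound with a direct conductance estimate for the outward boundary. The $D^2$ in the statement is then precisely the cost of Cheeger's inequality applied to a $\Theta(1/(dD))$ bottleneck ratio; note a direct canonical-path (Poincaré chaining) argument faces the same congestion issue and does not obviously give a dimension-free constant, which is why I route through conductance instead.
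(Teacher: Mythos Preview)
Your overall strategy --- lower-bound the conductance of $P_i$ using the descending shortest paths to $v^*$ and then apply Cheeger --- is exactly the paper's. You also correctly isolate the set $\tau(S)$ of exit vertices and the estimate $Q(\{a\},\{b\})\ge\pi_i(a)/(4d)$ for each exit edge $(a,b)$ with $b$ downhill from $a$. Where you diverge is in the second half: you attempt a per-edge congestion bound $\mathrm{flow}(e)\le\kappa\,dD\cdot Q(e)$, and you are right that this is the obstacle --- the number of descending paths through a single boundary edge can be exponential in $d$, and neither the ``flip-lowest-index'' path rule nor the outward-boundary patch you sketch obviously rescues an absolute constant.

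The paper sidesteps edge-by-edge accounting altogether. It never asks how many $\gamma_u$ use a given edge; instead it compares total masses. Summing the exit-edge estimate over $\tau(S)$ gives $Q(S,S^c)\ge\pi_i(\tau(S))/(4d)$, and then the paper argues directly that $\pi_i(S)\le D\cdot\pi_i(\tau(S))$ by stratifying $S$ into layers according to distance along the descending path to the exit vertex: there are at most $D$ layers, layer $0$ is $\tau(S)$, and the paper uses reversibility together with the fact that transitions toward $v^*$ are at least as likely as transitions away to conclude that layer masses are non-increasing in the distance. These two inequalities combine to $Q(S,S^c)\ge\pi_i(S)/(4dD)$; the case $\pi_i(S)>1/2$ (with $v^*\in S^c$) uses the same bulk inequality, and Cheeger finishes.

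So the idea you are missing is the aggregate comparison $\pi_i(S)\le D\,\pi_i(\tau(S))$: once you have it there is no congestion to control, no need for a canonical coordinate ordering, and no separate treatment of the ``outward'' boundary or of small $D$. I would note, though, that the paper's layer-comparison step is itself written loosely --- the map ``next vertex along $\gamma$'' from layer $r$ to layer $r{-}1$ is not injective, so promoting the per-path monotonicity $\pi_i(u)\le\pi_i(\tau(u))$ to a per-layer inequality $\pi_i(L_r)\le\pi_i(L_{r-1})$ deserves more care than the paper gives --- but this aggregate route is the one to pursue, not edge-congestion routing.
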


The proof studies the conductance of the Markov chain and can be found in Appendix~\ref{app:proof-quasi}. The above theorem suggests that although Gibbs sampling can mix very slowly, on any subset with a well connected local minimum, the conditional mixing  to the stationary distribution can be fast. This concludes our analysis and we move on to verify our statements with experiments in the next section.

\section{Experiments}
\subsection{Observations on Gaussian Mixture}
\label{exper: mixture}
In this section, we conduct experiments to verify the theoretical results and compare global mixing versus conditional mixing for Gaussian mixture models.
We take three Gaussian mixtures: $\nu_1 = 0.9\mathbb{N}_1(-10,1) + 0.1\mathbb{N}_1(10,1)$, $\nu_2 = 0.15\mathbb{N}_1(-5,1) + 0.15\mathbb{N}_1(-2.5,1) + 0.3\mathbb{N}_1(0,1) + 0.2\mathbb{N}_1(2.5,1) + 0.2\mathbb{N}_1(5,1)$, and $\nu_3 = 0.4\mathbb{N}_2((-5, -5),I_2) + 0.4\mathbb{N}_2((5, 5),I_2) + 0.1\mathbb{N}_2((-5, 5),I_2) + 0.1\mathbb{N}_2((5, -5),I_2)$ as our target distributions. We use Algorithm \ref{alg: lmc} as our sampling algorithm, and set step size $h = 10^{-2}$. The initial distributions are both uniform in a large enough range. We plot the sampling distribution after $T = 500, 5000, 500$ rounds respectively in Figure \ref{fig: 1Ddis}, \ref{fig: 1Ddis_d}, and \ref{fig: 2Ddis}, and plot the conditional and global KL divergence in Figure \ref{fig: 1Ddiv}, \ref{fig: 1Ddiv_d}, and \ref{fig: 2Ddiv}.

\begin{figure}[h]
    \centering
    \begin{subfigure}[b]{0.3\textwidth}
        \includegraphics[width = \textwidth]{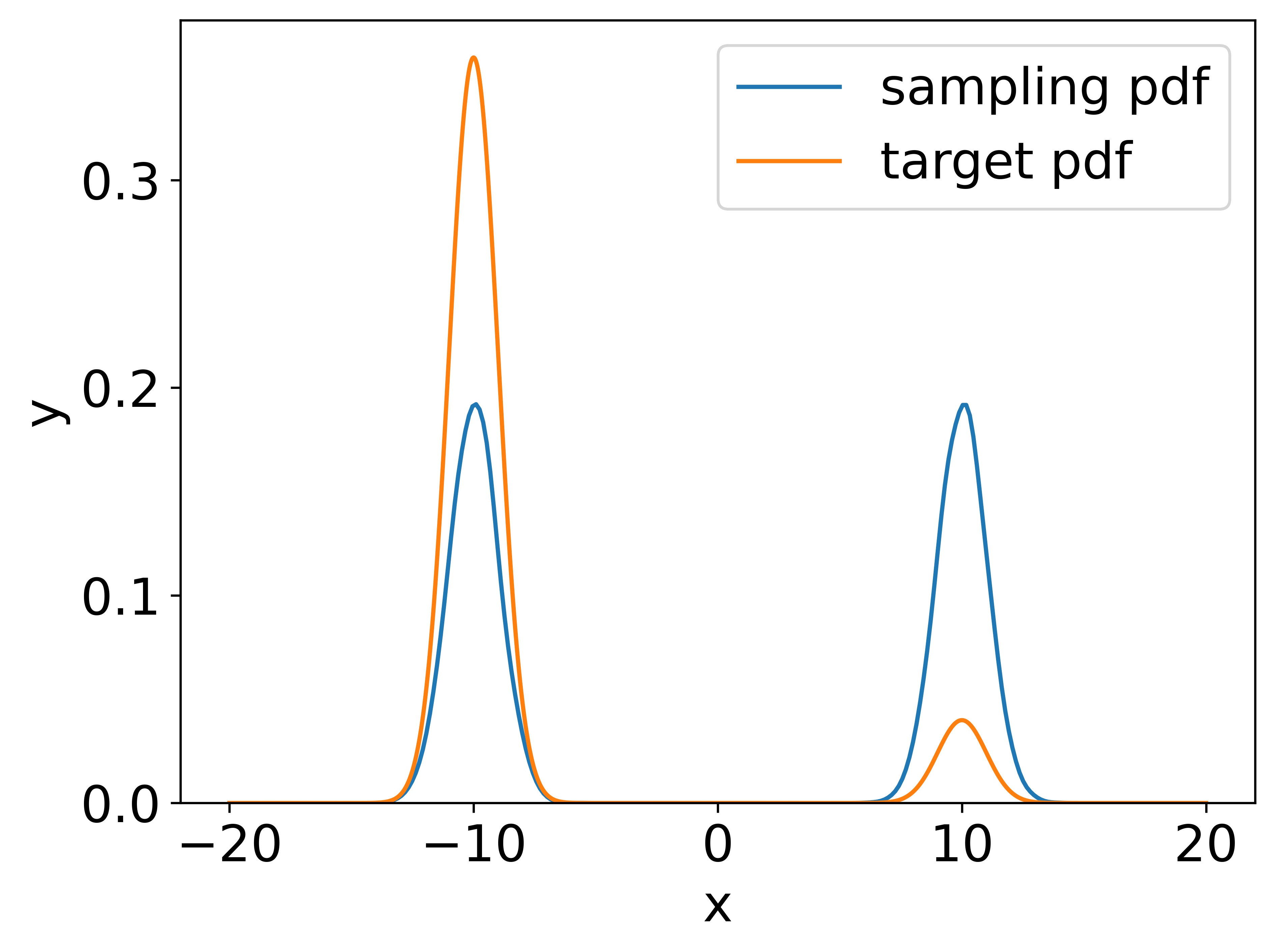}
        \caption{sample distribution($\nu_1$)}
        \label{fig: 1Ddis}
    \end{subfigure}
    \begin{subfigure}[b]{0.3\textwidth}
        \includegraphics[width = \textwidth]{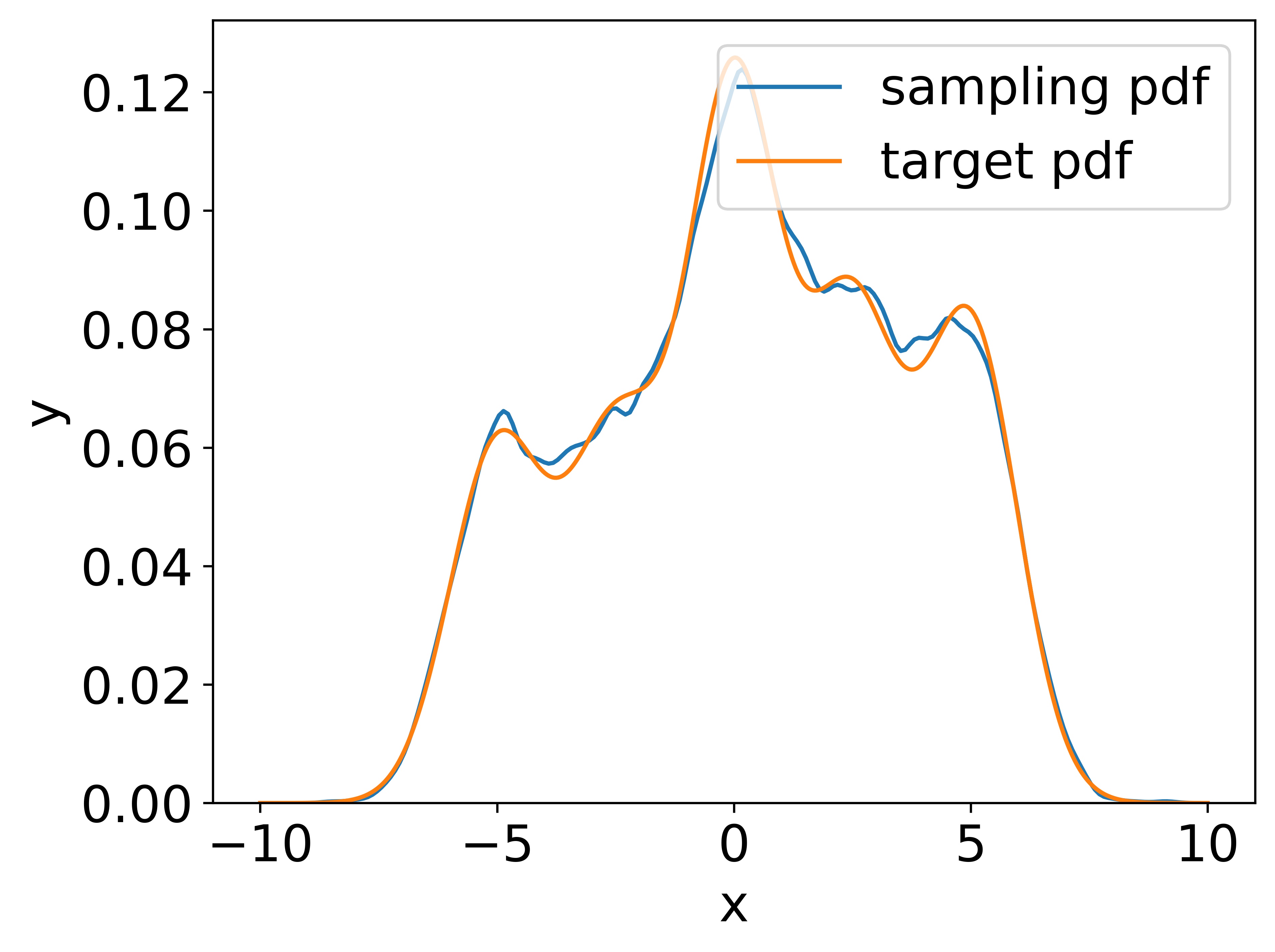}
        \caption{sample distribution($\nu_2$)}
        \label{fig: 1Ddis_d}
    \end{subfigure}
    \begin{subfigure}[b]{0.3\textwidth}
        \includegraphics[width =\textwidth]{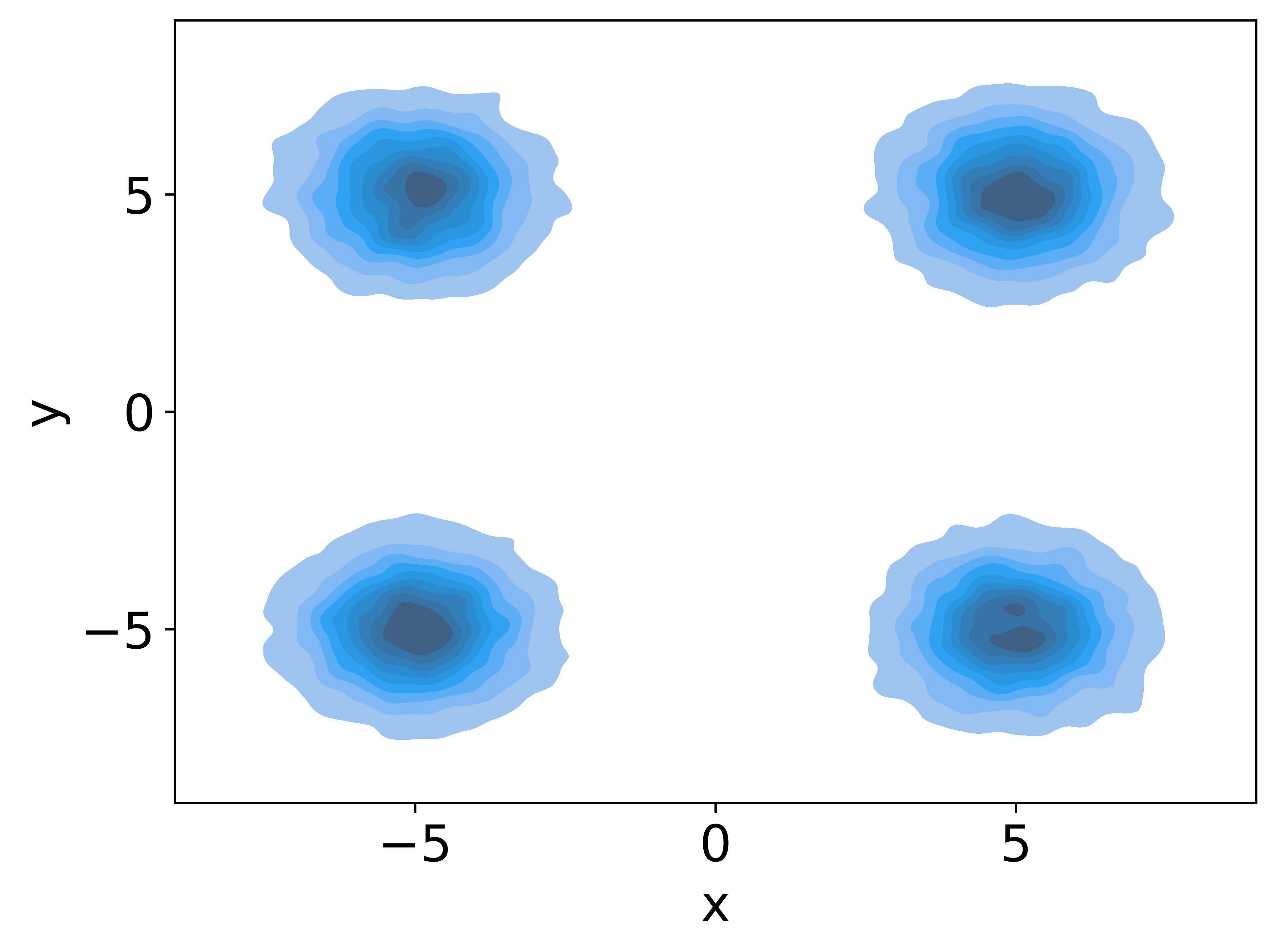}
        \caption{sample distribution($\nu_3$)}
        \label{fig: 2Ddis}
    \end{subfigure}

    \begin{subfigure}[b]{0.3\textwidth}
        \includegraphics[width =\textwidth]{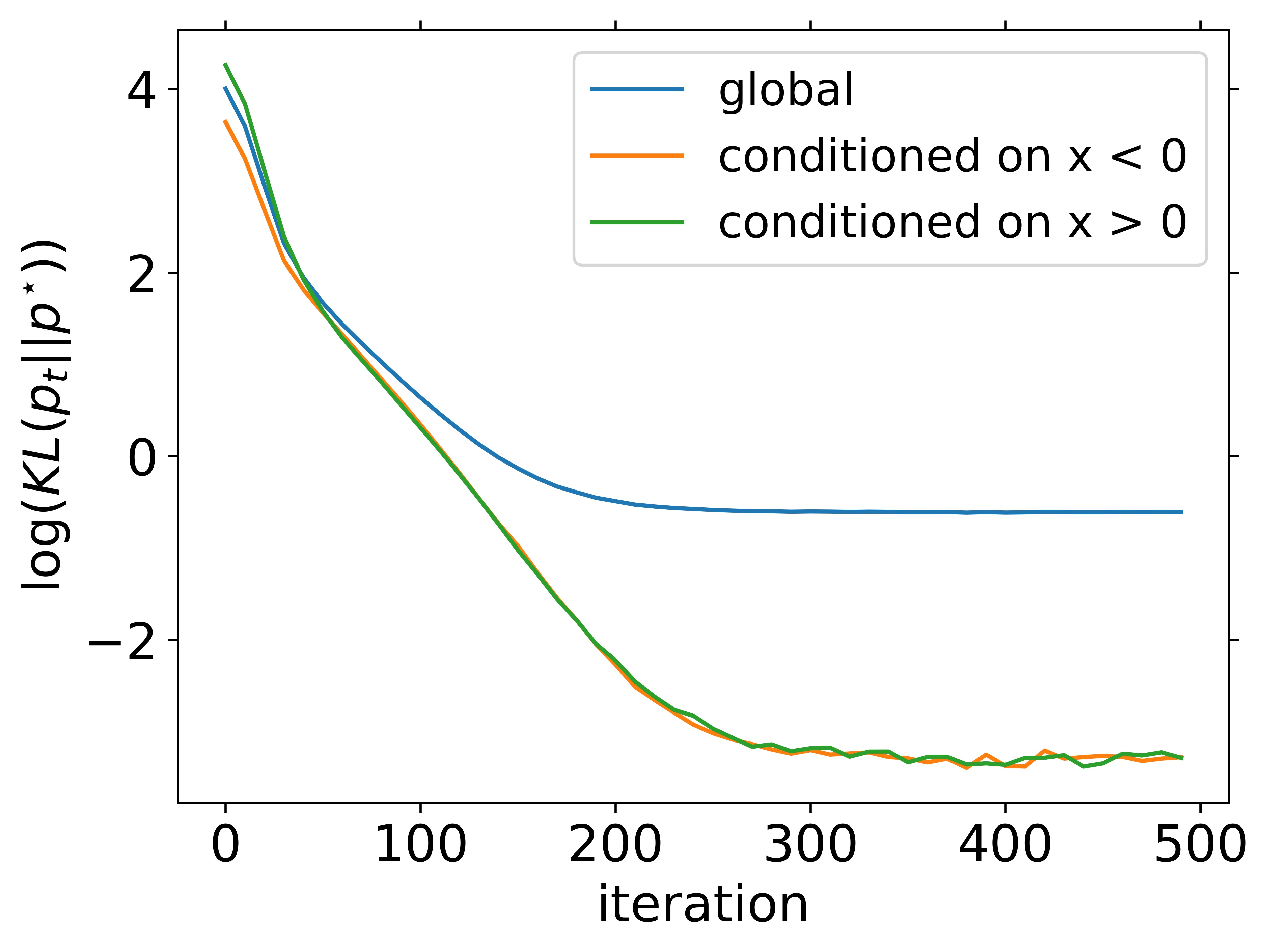}
        \caption{KL divergence($\nu_1$)}
        \label{fig: 1Ddiv}
    \end{subfigure}
    \begin{subfigure}[b]{0.3\textwidth}
        \includegraphics[width =\textwidth]{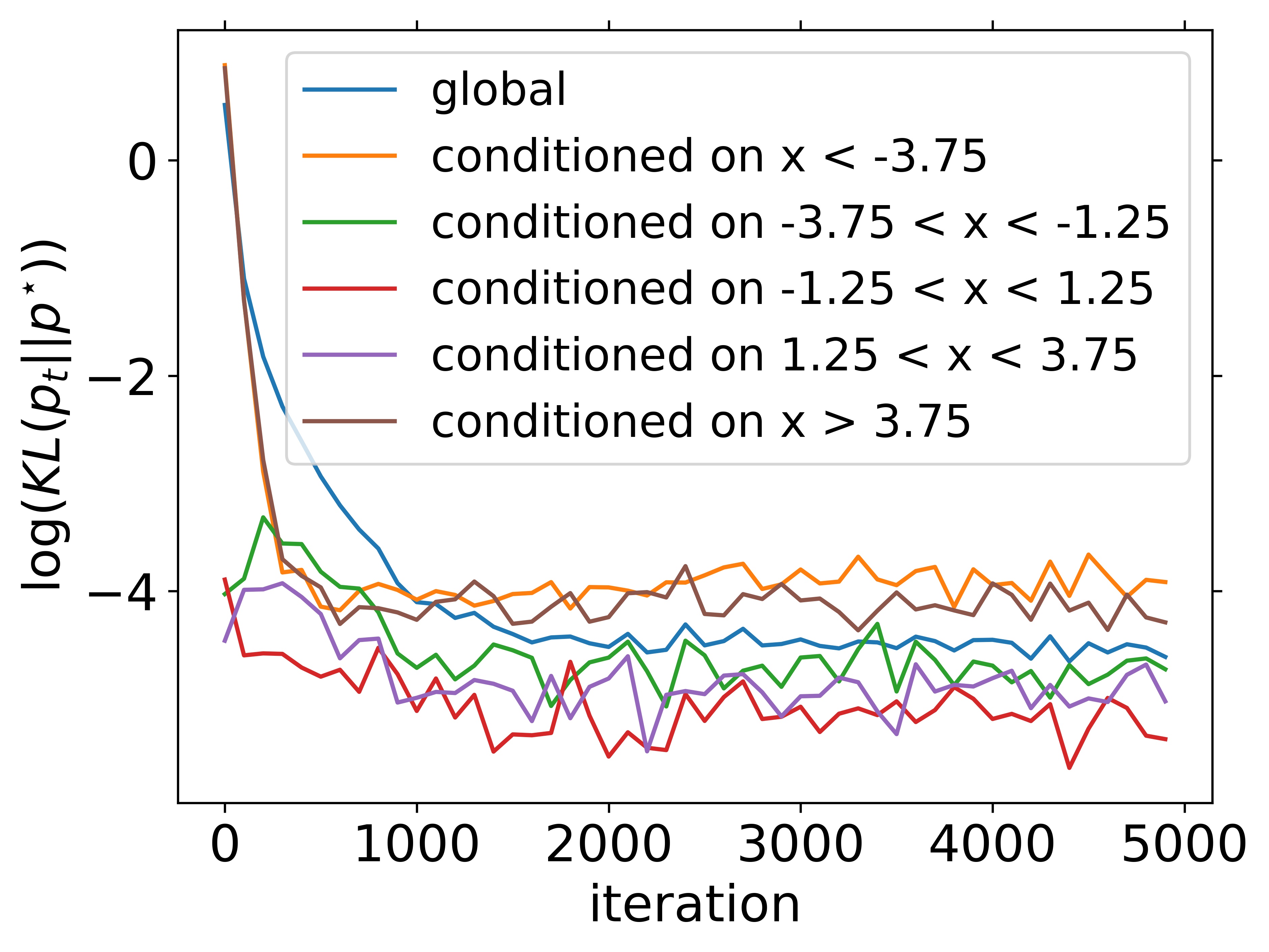}
        \caption{KL divergence($\nu_2$)}
        \label{fig: 1Ddiv_d}
    \end{subfigure}
    \begin{subfigure}[b]{0.3\textwidth}
        \includegraphics[width =\textwidth]{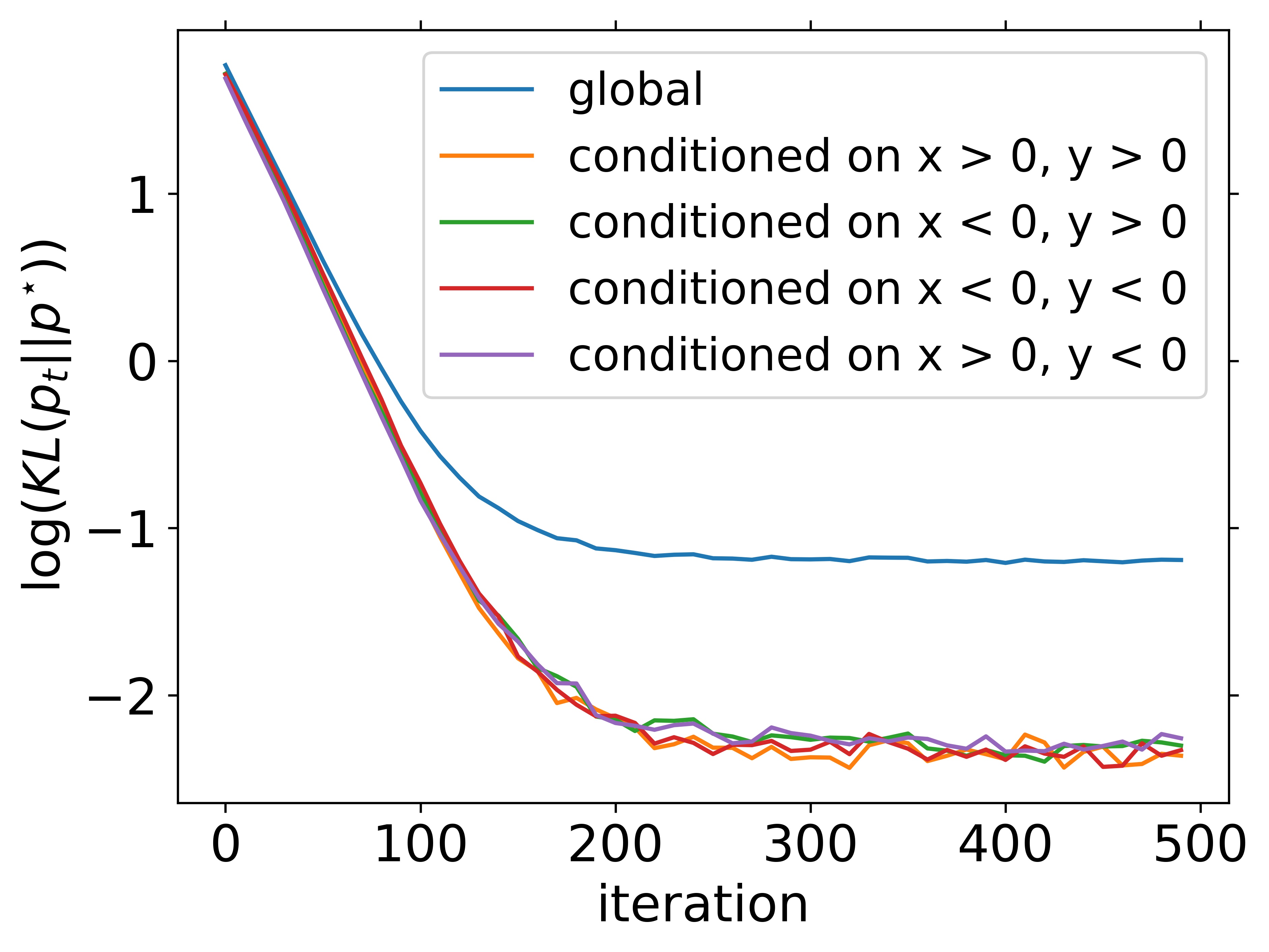}
        \caption{KL divergence($\nu_3$)}
        \label{fig: 2Ddiv}
    \end{subfigure}
    \caption{we plot the sampling distributions after $T$ iterations and the KL divergences w.r.t $t$}
    \label{fig:experiment}
\end{figure}
We make the following observations: 
\textbf{(1)} The global KL divergences of the sampling distributions of $\nu_1$ and $\nu_3$ decrease fast at first. Then they maintain at a constant level and never converge to 0. It is reasonable since $\nu_1, \nu_3$ have very bad LSI constants (exponential in the distance between means). Thus, by classic Langevin Danymics analysis results\cite{vempala2019rapid}, global KL divergence would have an exponentially slow convergence rate. 
\textbf{(2)} Both the global and conditional divergences of the sampling distribution of $\nu_2$ converge very fast. This is because dense Gaussian mixtures like $\nu_2$ have good LSI constant. The KL values are noisy due to the limited calculation precision of KL divergence.
\textbf{(3)} The conditional divergence of the sampling distribution of $\nu_2$ converges faster than the global divergence because the LSI constant bound of $\nu_2$ is much worse than the conclusion in Corollary \ref{coro: local_lsi}, which means conditional convergence is faster than global convergence.
\textbf{(4)} The conditional KL divergences of the sampling distributions of $\nu_1$ and $\nu_3$ converge to 0 very fast (the flat part is due to the limit of calculation precision), which could be seen as a verification of our theoretical result.
\textbf{(5)} The sampling distributions of $\nu_1$ and $\nu_3$ after $T$ iterations contain all of the Gaussian components in target distributions, the only difference between them is weight. Since learning the right weight and component will directly lead to global KL convergence, this observation could be seen as an example of the gap between global convergence and conditional convergence.

\subsection{Observations on LMC with restarts}
In the LMC analysis, We study the evolution of a distribution $p_0$ that usually is an absolutely continuous distribution. However, in practice, a more common implementation is as below: one first randomly generates an initial point $x_0$, runs the sampling algorithm for $T$ iterations, and then collects samples along a single trajectory.
A gap between theory and practice here is that we always assume continuity and smoothness conditions on the initial distribution in theory, while in practice, we only generate a limited number of initial points (sometimes only one point) to run the sampling algorithm. 

For log-concave sampling, this gap is usually negligible since the ergodicity of Langevin Dynamics guarantees that we could always capture the features of the target distribution. Thus, it's reasonable that there are plenty of works about the discretization error on the time scale, while we hardly pay attention to the approximation error of initial distribution. When it comes to non-log-concave sampling, this gap may become crucial. We conduct several experiments in Appendix \ref{exper: restart} to verify this conjecture and show that LMC with restarts could empirically help to eliminate the gap and improve the convergence speed.

\section{Conclusions}

Our work examines sampling problems where the global mixing of an MCMC algorithm is slow. We show that in such cases, fast conditional mixing can be achieved on subsets where the target distribution has benign local structures. We make the above statements rigorous and provide polynomial-time guarantees for conditional mixing. We give several examples, such as the mixture of Gaussian and the power posterior to show that the benign local structure often exists despite the global mixing rate is exponentially slow. 

Much remains to be done. Theoretically, whether faster convergence rates can be achieved or a lower bound exists remain unknown. Instantiating our analyses to more MCMC algorithms may also lead to new observations. More importantly, the implication of being able to sample efficiently from local distributions requires more careful analysis. This may lead to a new theoretical guarantee for problems with symmetry (such as permutation symmetry, sign symmetry, rotation invariance, etc) where all local minima are equally good and sampling from any mode suffices.

\begin{ack}
      Jingzhao Zhang acknowledges support from Tsinghua University Initiative Scientific Research Program.
\end{ack}

\bibliography{related}
\bibliographystyle{abbrv}

\newpage
\appendix
\section{Proofs of results in Section \ref{subsec: main results}}

\subsection{Local LSI: Proof of Lemma \ref{lem: mix_lsi}}
\label{appen: local LSI}
\begin{proof}
Since
\begin{equation*}
  \mu(S) \int_{S} \left\Vert \nabla \frac{\mu|_{S}}{\pi|_{S}} (x) \right\Vert^2\frac{\pi|_{\mathcal{X}_j}(x)^2}{\mu|_{\mathcal{X}_j}(x)} \diff{x} =\int_{S} \left\Vert \nabla \frac{\mu}{\pi} (x) \right\Vert^2 \frac{\pi(x)^2}{\mu(x)} \diff{x}  \le  \int_{\cX} \left\Vert \nabla \frac{\mu}{\pi} (x) \right\Vert^2 \frac{\pi(x)^2}{\mu(x)} \diff{x},
\end{equation*}
if $\mu(S)\le \sqrt{\frac{\varepsilon}{\alpha}}$, the proof is finished. Otherwise, we have 
 \begin{equation*}
    \alpha \Ent_{\pi|_{S}}\left[\frac{\mu|_{S}}{\pi|_{S}}\right] \le  \int_{S} \left\Vert \nabla \frac{\mu|_{S}}{\pi|_{S}} (x) \right\Vert^2 \frac{\pi|_{\mathcal{X}_j}(x)^2}{\mu|_{\mathcal{X}_j}(x)}\diff{x}\le\sqrt{\varepsilon\alpha}.
 \end{equation*}
The proof is completed.
\end{proof}
\subsection{Local PI: Proof of Proposition  \ref{prop: lmc_pi}}
\label{appen: proof_pi}
To begin with, we provide the proof of Lemma \ref{lem: mix_pi}.

\begin{proof}[Proof of Lemma \ref{lem: mix_pi}]
Since
    \begin{equation*}
   \frac{\mu(S)^2}{\pi(S)} \int_{S} \left\Vert \nabla \frac{\mu|_{S}}{\pi|_{S}} (x) \right\Vert^2 \pi|_{S}(x) \diff{x} =\int_{S} \left\Vert \nabla \frac{\mu}{\pi} (x) \right\Vert^2 \pi(x) \diff{x}  \le  \int_{\cX} \left\Vert \nabla \frac{\mu}{\pi} (x) \right\Vert^2 \pi(x) \diff{x}.
\end{equation*}

The proof is completed by noticing that  
 \begin{equation*}
   \int_{S} \left\Vert \nabla \frac{\mu|_{S}}{\pi|_{S}} (x) \right\Vert^2 \pi|_{S}(x) \diff{x} =\PFI(\mu|_S || \pi|_S)\ge \rho\var_{\pi|_{S}}\left[\frac{\mu|_{S}}{\pi|_{S}}\right] .
 \end{equation*}
\end{proof}

We then prove Proposition \ref{prop: lmc_pi}.
\begin{proof}[Proof of Propsition \ref{prop: lmc_pi}]
    By applying the definition of LD, we have
    \begin{equation*}
        \frac{\mathrm{d} \var_{\pi}[\frac{\tilde{\pi}_t}{\pi}]}{ \mathrm{d} t}= - 2 \PFI(\tilde{\pi}_t\Vert \pi).
    \end{equation*}
    Taking integration of both sides of the above equation then gives
    \begin{equation*}
         \var_{\pi}\left[\frac{\tilde{\pi}_0}{\pi}\right] \ge  \var_{\pi}\left[\frac{\tilde{\pi}_0}{\pi}\right]-\var_{\pi}\left[\frac{\tilde{\pi}_T}{\pi}\right] =2\int_{0}^{Th} \PFI(\tilde{\pi}_t\Vert \pi) \, \diff t.
    \end{equation*}

    The proof is then completed by noting that according to Hölder's inequality, 
    \begin{equation*}
        \int_{\cX} \left\Vert \nabla \frac{\int_{0}^{Th}\tilde{\pi}_t \diff t}{\pi} (x) \right\Vert^2 \pi(x) \diff{x} \le \int_{0}^{Th} \int_{\cX} \left\Vert \nabla \frac{\tilde{\pi}_t }{\pi} (x) \right\Vert^2 \pi(x) \diff{x}\diff t=\int_{0}^{Th} \PFI(\tilde{\pi}_t\Vert \pi) \, \diff t.
    \end{equation*}

    The proof is then completed.
\end{proof}

We then restate the main result from \cite{mou2022improved}, which provides a tight characterization of the discretization error between LMC and LD.
\begin{proposition}[Theorem 2, \cite{mou2022improved}]
\label{prop: wenlong_restated}
Assume $ \nabla V$ and $\nabla^2 V$ is $L$-lipschtiz. Initialize $z_0$ according to Gaussian and select $h=\mathcal{O}(1/L_1)$. Then for $t\in [0,Th]$, $\operatorname{KL}(\tilde{\pi}_{t}||\pi_{t}) =\mathcal{O}(h^2d^2T)$.
\end{proposition}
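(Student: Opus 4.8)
The natural route is to lift the comparison to path space and invoke Girsanov's theorem. Couple the two processes by driving them with the same Brownian motion: let $\mathbb{P}$ be the law on $C([0,Th];\mathbb{R}^d)$ of the solution of \eqref{eq: LD} and $\mathbb{Q}$ the law of the interpolated chain \eqref{eq: interpolate_LMC}, both started from the Gaussian initialization $z_0$. Writing $s_- := h\lfloor s/h\rfloor$ for the last grid point before $s$, the two drifts (evaluated along the path) differ by $\nabla V(Z_s)-\nabla V(Z_{s_-})$; since $\nabla V$ is $L$-Lipschitz and, under the strong dissipativity assumed in the surrounding results, the relevant exponential local martingale is a true martingale, Girsanov gives
\begin{equation*}
\operatorname{KL}(\mathbb{P}\,\|\,\mathbb{Q}) \;=\; \tfrac14\,\mathbb{E}_{\mathbb{P}}\!\int_0^{Th}\bigl\|\nabla V(Z_s)-\nabla V(Z_{s_-})\bigr\|^2\,\diff s .
\end{equation*}
Applying the data-processing inequality to the coordinate map $Z_{[0,Th]}\mapsto Z_t$ then yields $\operatorname{KL}(\tilde\pi_t\|\pi_t)\le \operatorname{KL}(\mathbb{P}\|\mathbb{Q})$ for every $t\in[0,Th]$ (the two initial marginals coincide and contribute nothing), so everything reduces to bounding the right-hand side by $\mathcal{O}(h^2 d^2 T)$. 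Equivalently one could differentiate $\operatorname{KL}(\tilde\pi_t\|\pi_t)$ directly using the two Fokker--Planck equations, but the Girsanov route keeps the bookkeeping cleanest.

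\noindent\textbf{Bounding the one-step drift mismatch.} Fix $s\in[kh,(k+1)h)$, so $s_-=kh$. The Lipschitz bound $\|\nabla V(Z_s)-\nabla V(Z_{kh})\|^2\le L^2\|Z_s-Z_{kh}\|^2$ together with $Z_s-Z_{kh}=-\int_{kh}^s\nabla V(Z_u)\,\diff u+\sqrt2\,(B_s-B_{kh})$ gives $\mathbb{E}_{\mathbb{P}}\|Z_s-Z_{kh}\|^2\lesssim h^2\sup_{u\le Th}\mathbb{E}\|\nabla V(Z_u)\|^2 + hd$. Summing over the $T$ blocks (using $\int_0^{Th}(s-s_-)\,\diff s=\tfrac12 Th^2$) already yields $\operatorname{KL}(\mathbb{P}\|\mathbb{Q})\lesssim L^2\bigl(Th^3\sup_u\mathbb{E}\|\nabla V(Z_u)\|^2 + Th^2 d\bigr)$. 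To identify the correct dimension scaling of the $h^3$ term one expands $u\mapsto\nabla V(Z_u)$ by It\^o's formula along \eqref{eq: LD}, $\diff\nabla V(Z_u)=\bigl(-\nabla^2V(Z_u)\nabla V(Z_u)+\nabla\Delta V(Z_u)\bigr)\diff u+\sqrt2\,\nabla^2V(Z_u)\,\diff B_u$: the martingale part carries the genuine $\sqrt h$-scale fluctuation with second moment $2(s-kh)\|\nabla^2 V\|_{\mathrm{HS}}^2\le 2(s-kh)L^2 d$, while the drift part is $\mathcal{O}(h)$ with $\|\nabla^2V\nabla V\|\le L\|\nabla V\|$ and $\|\nabla\Delta V\|\lesssim dL$ (Hessian-Lipschitzness, via $\Delta V=\operatorname{tr}\nabla^2V$ being $dL$-Lipschitz). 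Squaring, taking expectations, integrating $s$ over $[0,Th]$, and using $h=\mathcal{O}(1/L_1)$ to absorb the drift term, this is essentially where the dimension dependence quoted in \cite{mou2022improved} enters, producing the stated $\mathcal{O}(h^2 d^2 T)$ bound with the precise constants and $d$-bookkeeping matching that reference.

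\noindent\textbf{Main obstacle.} The only genuinely non-elementary ingredient is the uniform-in-time moment estimate $\sup_{u\le Th}\mathbb{E}\|\nabla V(Z_u)\|^2<\infty$ (and, along the way, $\sup_{u\le Th}\mathbb{E}\|Z_u\|^2<\infty$): the Lipschitz hypotheses alone do not prevent these moments from growing, so one exploits the strong dissipativity $\langle\nabla V(x),x\rangle\ge m\|x\|^2-b$ available in the ambient setting, together with the stationary identity $\mathbb{E}_\pi\|\nabla V\|^2=\mathbb{E}_\pi\Delta V\le Ld$, the finiteness of $\mathbb{E}\|\nabla V(Z_0)\|^2$ (linear growth of $\nabla V$ plus Gaussian tails of $z_0$), and a Gr\"onwall estimate for $\mathbb{E}\|Z_u\|^2$ and $\mathbb{E}\|\nabla V(Z_u)\|^2$ along the diffusion, to obtain a bound that is uniform on $[0,Th]$ and polynomial in $d$. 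Feeding this into the two displays above completes the argument; the one place the proof is deliberately not tight is the $d$ versus $d^2$ gap, which the statement does not attempt to optimize and which is carried over from \cite{mou2022improved}.
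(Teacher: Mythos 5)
This proposition is not proved in the paper at all: it is quoted verbatim from Theorem~2 of \cite{mou2022improved}, with only a remark that the cited proof extends from grid times $t=kh$ to all $t\in[0,Th]$. So there is no in-paper argument to compare against, and the relevant question is whether your sketch is a sound standalone derivation of the bound \emph{as restated here}. It essentially is, and it takes a different (and cruder) route than the cited source. Your Girsanov-plus-data-processing computation with the correct constant $\tfrac14$ and the correct direction of the divergence ($\tilde\pi_t$ is the marginal of the true diffusion $\mathbb{P}$, $\pi_t$ of the interpolation $\mathbb{Q}$, and both share the Gaussian initialization) gives $\operatorname{KL}\lesssim L^2Th^2d+L^2Th^3\sup_u\mathbb{E}\|\nabla V(Z_u)\|^2$, which already implies the stated $\mathcal{O}(h^2d^2T)$ (in fact with $d$ in place of $d^2$) once the moment term is controlled; notably, this elementary bound does not even use the Hessian-Lipschitz hypothesis, so your second paragraph on the It\^o expansion of $\nabla V(Z_u)$ is bookkeeping toward reproducing the constants of \cite{mou2022improved} rather than something the statement requires. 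The price of the simpler route is that it does \emph{not} recover the actual theorem of \cite{mou2022improved}, whose point is precisely to beat the naive Girsanov rate by exploiting the conditionally mean-zero part of the one-step drift error in a direct Fokker--Planck/interpolation argument; your proof only yields the weaker form quoted in this paper. Two caveats you correctly flag but should make explicit if this were written out: the proposition as stated assumes only Lipschitzness of $\nabla V$ and $\nabla^2 V$, so both the Novikov/true-martingale condition for Girsanov and the bound $\sup_{u\le Th}\mathbb{E}\|\nabla V(Z_u)\|^2<\infty$ must be imported from the strong dissipativity assumed in Corollary~\ref{coro: local_pi}, where the proposition is actually deployed; without some such hypothesis the $h^3$ term is not controlled.
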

It should be noticed that in the original version of [Theorem 2, \cite{mou2022improved}], the result is only stated for interger time, i.e., $t=kh$ where $k\in \{0,\cdots,T\}$. However, their proof also applies to non-integer time which leads to the above proposition.

We are now ready to prove Corollary \ref{coro: local_pi}.
\begin{proof}[Proof of Corollary \ref{coro: local_pi}]
    Combing Proposition \ref{prop: lmc_pi} and Lemma \ref{lem: mix_pi}, we have that
    \begin{equation*}
        \bar{\tilde{\pi}}_{Th}(S)^2 \var_{\pi|_{S}}\left[\frac{\bar{\tilde{\pi}}_{Th}|_{S}}{\pi|_{S}}\right] \le \frac{\pi(S)\var_{\pi}[\frac{\tilde{\pi}_0}{\pi}]}{2Th\rho}.
    \end{equation*}
    Using Pinsker's inequality, we further obtain
    \begin{equation}
    \label{eq: pi_mid}
        2\bar{\tilde{\pi}}_{Th}(S)^2 D_{\operatorname{TV}}\left[{\bar{\tilde{\pi}}_{Th}|_{S}}||{\pi|_{S}}\right]^2 \le \frac{\pi(S)\var_{\pi}[\frac{\tilde{\pi}_0}{\pi}]}{2Th\rho}.
    \end{equation}

    On the other hand, according to Proposition \ref{prop: wenlong_restated} and Pinsker's inequality, we obtain that for $t\in [0,Th]$,
    \begin{equation*}
        D_{\operatorname{TV}}\left[\tilde{\pi}_t||\pi_t\right]^2 \le  \mathcal{O}(h^2d^2T),
    \end{equation*}
    and thus
    \begin{equation*}
        D_{\operatorname{TV}}\left[\tilde{\pi}_t||\pi_t\right] \le  \mathcal{O}(hd\sqrt{T}).
    \end{equation*}
    Due to the convexity of $L_1$ norm, we then obtain
    \begin{align}
    \nonumber
        D_{\operatorname{TV}}\left[\bar{\tilde{\pi}}_{Th}||\bar{\pi}_{Th}\right]=& \sup_{A } \left\vert \frac{1}{Th}\int_{0}^{Th}  \tilde{\pi}_t (A)-  \frac{1}{Th}\int_{0}^{Th}  {\pi}_t (A) \diff t \right\vert
        \\
        \nonumber
        \le & \frac{1}{Th}\int_{0}^{Th}  \sup_{A } \left\vert \tilde{\pi}_t (A)-    {\pi}_t (A)  \right\vert\diff t
        \\
        \label{eq: gap_TV_discrete}
        \le & \frac{1}{Th}\int_{0}^{Th}  D_{\operatorname{TV}}\left[\tilde{\pi}_{t}||\pi_{t}\right] \diff t =\mathcal{O}(hd\sqrt{T}).
    \end{align}

    Let $A$ be a positive constant. According to Eq. (\ref{eq: pi_mid}), we have either $\bar{\tilde{\pi}}_{Th}(S)\le A $, or $D_{\operatorname{TV}}\left[{\bar{\tilde{\pi}}_{Th}|_{S}}||{\pi|_{S}}\right]= \mathcal{O}\left(\frac{\sqrt{\var_{\pi}[\frac{\tilde{\pi}_0}{\pi}]}}{A\sqrt{Th\rho}}\right)$. In the former case, combined with Eq. (\ref{eq: gap_TV_discrete}), we have that
    \begin{equation*}
       \bar{ \pi}_{Th}(S) \le A+  \mathcal{O}(hd\sqrt{T}).
    \end{equation*}
 In the latter case, we have that
    \begin{equation*}
    D_{\operatorname{TV}}\left[\bar{\tilde{\pi}}_{Th}|_S||\bar{\pi}_{Th}|_S\right] \le \mathcal{O}\left(\frac{hd\sqrt{T}}{A}\right),
    \end{equation*}
    and thus 
    \begin{equation*}
         D_{\operatorname{TV}}\left[\pi||\bar{\pi}_{Th}\right] \le \mathcal{O}\left(\frac{\sqrt{\var_{\pi}[\frac{\tilde{\pi}_0}{\pi}]}}{A\sqrt{Th\rho}}\right)+\mathcal{O}\left(\frac{hd\sqrt{T}}{A}\right).
    \end{equation*}

    The proof is completed by selecting $h=\frac{\var_{\pi}[\frac{\tilde{\pi}_0}{\pi}]^{\frac{1}{3}}}{T^{\frac{2}{3}}\rho^{\frac{1}{3}}d^{\frac{2}{3}}}$ , and $A=\frac{d^{\frac{1}{6}}\var_{\pi}[\frac{\tilde{\pi}_0}{\pi}]^{\frac{1}{6}}}{\rho^{\frac{1}{6}}T^{\frac{1}{12}}}$.
\end{proof}

\subsection{Difficulty of extension to Rényi divergence}
\label{appen: renyi}
Given two distribution $\pi$ and $\mu$, Rényi divergence of $q$ between them, i.e., $\mathcal{R}_q(\mu \| \pi)$  is defined as follows:
\begin{equation*}
    \mathcal{R}_q(\mu \| \pi) \triangleq \frac{1}{q-1}\ln \mathbb{E}_{x\sim \pi} \left(\frac{\mu}{\pi} (x)\right)^q.
\end{equation*}
For simplicity, here we consider  Langevin Dynamics defined as Eq. (\ref{eq: LD}) with the distribution of time $t$ denoted as  $\tilde{\pi}_t$. The measure of convergence is then defined as $\mathcal{R}_q(\pi_t \| \pi)$, the derivative of which, according to (Lemma 6, \cite{vempala2019rapid}) is given as
\begin{equation*}
    \frac{d}{d t} \mathcal{R}_q(\pi_t \| \pi)=-q \frac{G_{q}\left(\pi_t|| \pi\right)}{F_{q}\left(\pi_t|| \pi\right)},
\end{equation*}
where
\begin{gather*}
  G_{q}\left(\pi_t|| \pi\right)=\mathbb{E}_\pi\left[\left(\frac{\pi_t}{\pi}\right)^q\left\|\nabla \log \frac{\pi_t}{\pi}\right\|^2\right]=\mathbb{E}_\pi\left[\left(\frac{\pi_t}{\pi}\right)^{q-2}\left\|\nabla \frac{\pi_t}{\pi}\right\|^2\right]=\frac{4}{q^2} \mathbb{E}_\pi\left[\left\|\nabla\left(\frac{\pi_t}{\pi}\right)^{\frac{q}{2}}\right\|^2\right],
  \\
  F_{q}\left(\pi_t|| \pi\right)=\mathbb{E}_\pi\left[\left(\frac{\pi_t}{\pi}\right)^q\right].
\end{gather*}
If we would like to follow the routine of Lemma \ref{lem: mix_lsi}, we need to firstly lower bound $\frac{G_{q}\left(\pi_t|| \pi\right)}{F_{q}\left(\pi_t|| \pi\right)}$ by $\frac{G_{q}\left((\pi_t|S)|| (\pi|S)\right)}{F_{q}\left((\pi_t|S)|| (\pi|S)\right)}$ and secondly transfer $\frac{G_{q}\left((\pi_t|S)|| (\pi|S)\right)}{F_{q}\left((\pi_t|S)|| (\pi|S)\right)}$ back to $\mathcal{R}_q((\pi_t|S) \| (\pi|S))$. The second step can be accomplished following the same routine as (Lemma 9,\cite{vempala2010recent}) using local PI. However, we are still unaware of how to implement the first step. This is because although we have 
\begin{align*}
     G_{q}\left(\pi_t|| \pi\right)=\frac{4}{q^2} \mathbb{E}_\pi\left[\left\|\nabla\left(\frac{\pi_t}{\pi}\right)^{\frac{q}{2}}\right\|^2\right] \ge \frac{4}{q^2} \mathbb{E}_{\pi|S}\left[\left\|\nabla\left(\frac{\pi_t|S}{\pi|S}\right)^{\frac{q}{2}}\right\|^2\right]  \times \frac{\pi_t(S)^q}{\pi(S)^{q-1}},
\end{align*}
which is similar to the analysis of local LSI, we also have
\begin{equation*}
     F_{q}\left(\pi_t|| \pi\right)=\mathbb{E}_\pi\left[\left(\frac{\pi_t}{\pi}\right)^q\right]\ge \mathbb{E}_\pi\left[\left(\frac{\pi_t}{\pi}\right)^q\right]\times \frac{\pi_t(S)^q}{\pi(S)^{q-1}}.
\end{equation*}
In other words, when restricting to $S$, both the denominator and the numerator of $\frac{G_{q}\left(\pi_t|| \pi\right)}{F_{q}\left(\pi_t|| \pi\right)}$ will get smaller, which makes $\frac{G_{q}\left(\pi_t|| \pi\right)}{F_{q}\left(\pi_t|| \pi\right)}$ no longer lower bounded by $\frac{G_{q}\left((\pi_t|S)|| (\pi|S)\right)}{F_{q}\left((\pi_t|S)|| (\pi|S)\right)}$.  We leave how to resolve this challenge as an interesting future work.
\section{Proof of Applications}
\subsection{Proof of gaussian-mixture}
\label{appen: gaussian_mixture}
To start with, we recall Bakry-Émery criterion and Holley-Stroock perturbation principle.

\begin{lemma}[Bakry-Émery criterion]\label{lem: bakry}
Let $\Omega \subset \mathbb{R}^n$
be convex and let $H : \Omega \rightarrow  \mathbb{R}$ be a Hamiltonian with Gibbs measure $\mu(x) \propto
e^{-H(x)}1_{\Omega}(x)$ and assume
that $\nabla^2 H(x) \ge \kappa > 0$ for all $x \in supp (\mu)$. Then $\mu$ satisfies $\LSI(\kappa)$.
\end{lemma}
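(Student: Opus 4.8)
The plan is to prove the log-Sobolev inequality for $\mu$ by the semigroup ($\Gamma_2$-calculus) method of Bakry and Émery, exploiting that the hypothesis $\nabla^2 H \succeq \kappa I$ makes $\mu$ a $\kappa$-strongly log-concave measure on the convex set $\Omega$. First I would introduce the Langevin generator $L = \Delta - \langle \nabla H, \nabla(\cdot)\rangle$ associated to $\mu$, now taken with reflecting (Neumann) boundary conditions on $\partial\Omega$, together with its Markov semigroup $P_t = e^{tL}$. This semigroup is $\mu$-symmetric and conservative, so that $\int P_t u\,\diff\mu = \int u\,\diff\mu$ and the integration-by-parts identity $\int (Lu)\,v\,\diff\mu = -\int \langle \nabla u,\nabla v\rangle\,\diff\mu$ holds. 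The engine of the proof is the pointwise gradient-decay estimate $|\nabla P_t u| \le e^{-\kappa t}\,P_t|\nabla u|$.

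To establish this estimate I would fix $t>0$, set $\Psi(s) = e^{-\kappa s}\,P_s\big(|\nabla P_{t-s}u|\big)$ for $s \in [0,t]$, and show $\Psi$ is nondecreasing, so that $\Psi(0) = |\nabla P_t u| \le \Psi(t) = e^{-\kappa t}P_t|\nabla u|$. Differentiating $\Psi$ reduces the claim to the carré-du-champ bound $\Gamma_2(u) \ge \kappa\,\Gamma(u)$, where $\Gamma(u) = |\nabla u|^2$ and $\Gamma_2(u) = \tfrac12 L|\nabla u|^2 - \langle \nabla u,\nabla Lu\rangle$. The Bochner identity for the flat Laplacian gives $\Gamma_2(u) = \|\nabla^2 u\|_{\mathrm{HS}}^2 + \langle \nabla u,(\nabla^2 H)\nabla u\rangle$, and the curvature hypothesis $\nabla^2 H \succeq \kappa I$ yields $\Gamma_2(u) \ge \kappa |\nabla u|^2 = \kappa\,\Gamma(u)$ immediately. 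The mild nonsmoothness of $|\nabla u|$ at its zeros is handled by working with $\sqrt{|\nabla u|^2 + \delta}$ and letting $\delta \to 0$.

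Second, I would convert the gradient estimate into $\LSI$. For $f>0$ with $\int f\,\diff\mu = 1$, set $\phi(t) = \int (P_t f)\log(P_t f)\,\diff\mu$. The entropy-dissipation identity gives $\phi'(t) = -\int |\nabla P_t f|^2/(P_t f)\,\diff\mu$, while $P_t f \to 1$ forces $\phi(\infty) = 0$, so that $\Ent_\mu(f) = \int_0^\infty \int |\nabla P_t f|^2/(P_t f)\,\diff\mu\,\diff t$. Feeding in $|\nabla P_t f| \le e^{-\kappa t}P_t|\nabla f|$ together with the Cauchy–Schwarz inequality for the Markov kernel, $(P_t|\nabla f|)^2 \le (P_t f)\,P_t(|\nabla f|^2/f)$, bounds the inner integrand by $e^{-2\kappa t}\int |\nabla f|^2/f\,\diff\mu$; integrating in $t$ gives $\Ent_\mu(f) \le \tfrac{1}{2\kappa}\int |\nabla f|^2/f\,\diff\mu$, i.e. $\LSI(2\kappa)$, which in particular implies the asserted $\LSI(\kappa)$.

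The main obstacle is the boundary of $\Omega$: on all of $\mathbb{R}^n$ the manipulations above are standard, but here $\Omega$ is a proper convex region, and the integration by parts underlying the $\Gamma_2$ computation produces a boundary term proportional to the second fundamental form of $\partial\Omega$ evaluated on $\nabla u$. The crux is that convexity of $\Omega$ makes this second fundamental form nonnegative with respect to the inner normal, so the boundary contribution has the favorable sign and may be dropped — this is precisely where convexity of $\Omega$ (and, in the application, of the Voronoi cells $S_i$) enters. To make the semigroup and boundary manipulations fully rigorous I would first regularize, replacing the hard constraint by smooth strongly convex confining potentials $H_\varepsilon = H + \tfrac{1}{\varepsilon}\,\mathrm{dist}(\cdot,\Omega)^2$ on $\mathbb{R}^n$ (which keep $\nabla^2 H_\varepsilon \succeq \kappa I$ since $\mathrm{dist}(\cdot,\Omega)^2$ is convex for convex $\Omega$), run the boundary-free argument for $\mu_\varepsilon \propto e^{-H_\varepsilon}$, and pass to the limit $\varepsilon \to 0$ using lower semicontinuity of the entropy and Fisher-information functionals.
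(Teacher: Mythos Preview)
The paper does not prove this lemma at all: it is stated in the appendix under the heading ``we recall Bakry--\'Emery criterion and Holley--Stroock perturbation principle'' and used as a black box, so there is no ``paper's own proof'' to compare against. Your proposal is a correct sketch of the classical semigroup/$\Gamma_2$-calculus argument, including the right treatment of the convex boundary (either via the sign of the second fundamental form under Neumann conditions, or via the soft-confinement approximation $H_\varepsilon = H + \varepsilon^{-1}\mathrm{dist}(\cdot,\Omega)^2$, which preserves $\nabla^2 H_\varepsilon \succeq \kappa I$). You also correctly observe that the argument actually delivers $\LSI(2\kappa)$ in the paper's normalization $\int |\nabla f|^2/f\,\diff\pi \ge \alpha\,\Ent_\pi(f)$, which is stronger than the stated $\LSI(\kappa)$.
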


\begin{lemma}[Holley-Stroock perturbation principle]
\label{lem: holley_strock}If $p\in \Delta (\Omega)$ satisfies $\LSI(\pi_t)$, and $\psi:\Omega \rightarrow \mathbb{R}$ satisfies $m\le \psi\le M$, where $m,M>0$. Then, $q\in \Delta (\Omega) \propto \psi p$ satisfies $\LSI(\frac{m}{M}\pi_t)$.
\end{lemma}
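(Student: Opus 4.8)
The plan is to prove the standard Holley--Stroock principle through the variational (Donsker--Varadhan) representation of entropy, which is the route that makes the bounded-density hypothesis bite cleanly. Write $\alpha$ for the log-Sobolev constant of $p$ (the quantity denoted $\pi_t$ in the statement), and let $c=\int\psi\,dp$ be the normalizer, so that $dq=\tfrac{\psi}{c}\,dp$. The entire argument rests on the identity
\[
\Ent_\mu(f)=\inf_{t>0}\int\Big(f\log\tfrac{f}{t}-(f-t)\Big)\,d\mu,
\]
valid for any probability measure $\mu$ and any $f>0$, with the infimum attained at $t=\int f\,d\mu$. The key feature is that the integrand $g_t(x):=f(x)\log\tfrac{f(x)}{t}-(f(x)-t)$ is \emph{pointwise nonnegative} (it equals $t\,h(f/t)$ with $h(s)=s\log s-s+1\ge 0$), so integrating it against one measure versus another can be compared using only pointwise bounds on the density ratio.

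First I would establish the entropy comparison $\Ent_q(f)\le\tfrac{M}{c}\,\Ent_p(f)$. Choosing $t^\star=\int f\,dp$ (the $p$-minimizer) and using $g_{t^\star}\ge 0$ together with $\psi\le M$,
\[
\Ent_q(f)\le\int g_{t^\star}\,dq=\frac1c\int g_{t^\star}\,\psi\,dp\le\frac{M}{c}\int g_{t^\star}\,dp=\frac{M}{c}\,\Ent_p(f),
\]
where the first inequality is the variational upper bound (any $t$ overestimates the infimum) and the last equality holds because $t^\star$ is exactly the $p$-minimizer of $\int g_t\,dp$. Next I would compare the Dirichlet forms: substituting $dq=\tfrac{\psi}{c}\,dp$ and using $\psi\ge m$,
\[
\int\frac{\|\nabla f\|^2}{f}\,dq=\frac1c\int\frac{\|\nabla f\|^2}{f}\,\psi\,dp\ge\frac{m}{c}\int\frac{\|\nabla f\|^2}{f}\,dp.
\]

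Finally I would chain these with the hypothesis $\LSI(\alpha)$ for $p$:
\[
\int\frac{\|\nabla f\|^2}{f}\,dq\ge\frac{m}{c}\int\frac{\|\nabla f\|^2}{f}\,dp\ge\frac{m}{c}\,\alpha\,\Ent_p(f)\ge\frac{m}{c}\,\alpha\cdot\frac{c}{M}\,\Ent_q(f)=\frac{m}{M}\,\alpha\,\Ent_q(f),
\]
which is exactly $\LSI(\tfrac{m}{M}\alpha)$ for $q$; note that $c$ cancels, so no estimate on the normalizer is needed. The argument is essentially mechanical once the variational formula is in hand, and the only genuinely conceptual step---the one I would flag as the crux---is invoking the pointwise nonnegativity of $g_t$ to transport the entropy between the two measures. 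A direct comparison of $\Ent_q$ and $\Ent_p$ through their defining formula fails, because the subtracted mean term $(\int f)\log(\int f)$ is taken under two different measures and does not factor through the bounded ratio; the variational representation removes this obstruction by replacing the mean with a free parameter and exposing a sign-definite integrand.
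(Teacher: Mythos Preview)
The paper does not prove this lemma; it is stated as a known result (the Holley--Stroock perturbation principle) and invoked as a black box in the proof of Lemma~\ref{lem: multi_local_lsi}. Your argument is the standard proof and is correct: the variational identity $\Ent_\mu(f)=\inf_{t>0}\int(f\log(f/t)-(f-t))\,d\mu$ with its nonnegative integrand is exactly what allows the bounded density ratio to transfer entropy from $q$ to $p$, and the Dirichlet-form comparison is immediate. The chain of inequalities and the cancellation of the normalizer $c$ are all fine.
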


We are now ready to prove Lemma \ref{lem: multi_local_lsi}.
\begin{proof}[Proof of Lemma \ref{lem: multi_local_lsi}]
   Since $S_i$ is convex, by applying Lemma \ref{lem: bakry}, we obtain $p_i|_{S_i}$ is $\LSI(\sigma^2)$.
Let $c = \min_{i} w_i$.

Meanwhile, we have over $S_j$
\begin{equation*}
cp_j\le w_jp_j\le p= \sum_{i\ne j} w_ip_i +w_jp_j \le  \sum_{i\ne j} w_ip_j +w_jp_j =p_j.
\end{equation*}
Therefore, by Holley-Stroock perturbation principle, we have that $p|_{S_j}$ satisfies $\LSI(\frac{1}{c} \sigma^{-2}) $.
\end{proof}

\begin{proof}[Proof of Lemma \ref{lem: multi-smooth}]
    Through direct calculation, we obtain
    \begin{align*}
        \nabla^2 V =\Sigma^{-1}-\frac{1}{2} \Sigma^{-1}\frac{\sum_{i,j}w_iw_jp_i(x)p_j(x) (\mu_i-\mu_j)(\mu_i-\mu_j)^{\top}}{(\sum_{i=1}^n w_ip_i(x))^2}\Sigma^{-1}.
    \end{align*}

    Then, for any $\boldsymbol{a} \in \mathbb{R}^d$ with $\Vert \ba \Vert =1$, we have
\begin{align*}
    \ba^{\top}   \nabla^2 V \ba =&  \ba^{\top} \Sigma^{-1} \ba-\frac{1}{2} \frac{\sum_{i,j}w_iw_jp_i(x)p_j(x) \vert (\mu_i-\mu_j)^{\top}\Sigma^{-1}\ba \vert^2}{(\sum_{i=1}^n w_ip_i(x))^2}
    \\
    \le &\ba^{\top} \Sigma^{-1} \ba \le \frac{1}{\sigma^{-2}},
\end{align*}
and 
\begin{align*}
    \ba^{\top}   \nabla^2 V \ba \le  &\ba^{\top} \Sigma^{-1} \ba-\frac{1}{2} \frac{\sum_{i,j}w_iw_jp_i(x)p_j(x) \vert (\mu_i-\mu_j)^{\top}\Sigma^{-1}\ba \vert^2}{(\sum_{i=1}^n w_ip_i(x))^2}
    \\
    \ge &-\frac{1}{2} \frac{\sum_{i,j}w_iw_jp_i(x)p_j(x) \vert (\mu_i-\mu_j)^{\top}\Sigma^{-1}\ba \vert^2}{(\sum_{i=1}^n w_ip_i(x))^2} \ge -\frac{\max_{i,j} \Vert\mu_i-\mu_j\Vert^2}{\sigma^{-4}}.
\end{align*}
The proof is completed.
\end{proof}
\subsection{Proof of sampling from the power posterior
distribution of symmetric Gaussian mixtures}
\label{appen: wenlong}

To begin with, define the expected potential function $\bar{V}$ as
\begin{equation*}
    \bar{ V}(\theta):=\beta \mathbb{E}_{X} \log \left(\frac{1}{2} \varphi\left(\theta-X\right)+\frac{1}{2} \varphi\left(\theta+X\right)\right)+\log \lambda(\theta).
\end{equation*}
We further define the corresponding distribution as $\bar{\pi} \propto e^{-\bar{V}}$.

The following lemmas will be needed in the proof.

\begin{lemma}[Theorem 2, \cite{mou2019sampling}] \label{lem: wenlong_1} If $A,M\ge \Vert 
\Vert \theta_0 \Vert \Vert +1$, we have $\bar{\pi}|_{R_1}$ satisfies $\PI(\Theta(1/A^4M^2))$.
\end{lemma}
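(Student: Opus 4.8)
This is Theorem~2 of \cite{mou2019sampling}; the plan is to recall the argument, which reduces the Poincar\'e inequality on the convex box $R_1=[0,A]\times\mathcal B(0,M)$ to a one-dimensional estimate along the coordinate $\theta_1$ together with a log-concave estimate in the remaining directions. The first step is to write $\bar V$ in closed form. Using $\tfrac12\varphi(\theta-X)+\tfrac12\varphi(\theta+X)=(2\pi)^{-d/2}e^{-(\|\theta\|^2+\|X\|^2)/2}\cosh\langle\theta,X\rangle$ and $\mathbb{E}_{X\sim f_{\theta_0}}[X]=0$, one gets up to an additive constant $\bar V(\theta)=\tfrac{\beta}{2}\|\theta\|^2-\beta\,\mathbb{E}_X[\log\cosh\langle\theta,X\rangle]$, hence $\nabla^2\bar V(\theta)=\beta I_d-\beta\,\mathbb{E}_X[\operatorname{sech}^2(\langle\theta,X\rangle)\,XX^\top]$. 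Because $\operatorname{sech}^2\in(0,1]$ and $\mathbb{E}_X[XX^\top]=I_d+\theta_0\theta_0^\top$ with $\theta_0\parallel e_1$, this gives two structural facts: (i) $\nabla^2\bar V\succeq-\beta\|\theta_0\|^2 I_d$, so the non-convexity is bounded; and (ii) $v^\top\nabla^2\bar V(\theta)v\ge\beta\|v\|^2-\beta\,\mathbb{E}_X[\langle v,X\rangle^2]=0$ for every $v\perp e_1$, so $\bar V$ is convex along each slice $\{\theta_1=\mathrm{const}\}$.

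From (ii), for each fixed $\theta_1\in[0,A]$ the conditional law $\bar\pi|_{R_1}(\cdot\mid\theta_1)$ is log-concave on the ball $\mathcal B(0,M)$, so it obeys a Poincar\'e inequality with constant $\Omega(1/M^2)$ (Payne--Weinberger / KLS on a convex set of diameter $2M$). It then remains to establish a Poincar\'e inequality on $[0,A]$ for the marginal $m(\theta_1)\propto\int_{\mathcal B(0,M)}e^{-\bar V(\theta_1,\theta_\perp)}\,d\theta_\perp$. Here I would use the one-sidedness of $R_1$: on $\{\theta_1\ge0\}$ the identity $\tfrac12\varphi(\theta-X)+\tfrac12\varphi(\theta+X)=\tfrac12\varphi(\theta-X)(1+e^{-2\langle\theta,X\rangle})$ gives $\bar V=\tfrac{\beta}{2}\|\theta\|^2+\mathrm{const}-\beta\,\mathbb{E}_X\log(1+e^{-2\langle\theta,X\rangle})$, and the $\theta_1$-curvature $\beta-\beta\,\mathbb{E}_X[\operatorname{sech}^2(\langle\theta,X\rangle)X_1^2]$ is negative only for $\theta_1\lesssim1/\|\theta_0\|$, an interval on which $\bar V$ moves by only $O(\beta)$; since $A,M\ge\|\theta_0\|+1$ the mode ($\approx\theta_0$) lies in the interior of the box, and a Holley--Stroock comparison against the truncated Gaussian $\mathcal N(\theta_0,\beta^{-1}I)|_{[0,A]}$ yields a polynomial-in-$A$ Poincar\'e constant for $m$. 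Finally I would glue the pieces together with $\Var_{\bar\pi|_{R_1}}[g]\le\mathbb{E}_{\theta_1}[\Var(g\mid\theta_1)]+\Var_{\theta_1}[\mathbb{E}(g\mid\theta_1)]$, bounding the first term by the conditional inequality and the second by the marginal inequality, and controlling the $\theta_1$-sensitivity of $\theta_\perp\mapsto\mathbb{E}(g\mid\theta_1)$ with the smoothness bound $\|\nabla^2\bar V\|\le\beta(1+\|\theta_0\|^2)\le\beta A^2$; carrying the diameters ($A$ for the interval, $M$ for the ball) and this smoothness factor through the decomposition produces the stated constant $\Theta(1/(A^4M^2))$.

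The main obstacle is the marginal step. Since $\bar V$ is genuinely non-convex near the origin (curvature $-\beta\|\theta_0\|^2$ along $e_1$ there) and $\bar V$ ranges over an interval of length $\Theta(\beta(A^2+M^2))$ on $R_1$, a crude oscillation/Holley--Stroock bound applied directly to $R_1$ would be exponentially small. A polynomial constant genuinely requires exploiting that the non-convexity is rank-one and confined to a bounded neighborhood of $0$, and that on $R_1$ the distribution is unimodal along $\theta_1$ with no low-density bottleneck separating mass, so that the far tail must be compared against the Gaussian components $\mathcal N(\pm\theta_0,I)$ rather than bounded by hand; this is the technical core of the cited result.
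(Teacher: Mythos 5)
There is no proof of this lemma in the paper to compare against: the statement is imported verbatim as Theorem~2 of \cite{mou2019sampling}, and the paper's ``proof'' is the citation itself. So your proposal should be judged as a standalone reconstruction of that external result. The parts you carry out explicitly are correct: the closed form $\bar V(\theta)=\tfrac{\beta}{2}\|\theta\|^2-\beta\,\mathbb{E}_X[\log\cosh\langle\theta,X\rangle]+\mathrm{const}$, the Hessian identity $\nabla^2\bar V=\beta I_d-\beta\,\mathbb{E}_X[\operatorname{sech}^2(\langle\theta,X\rangle)XX^\top]$, the lower bound $\nabla^2\bar V\succeq-\beta\|\theta_0\|^2 I_d$, and the observation that $v^\top\nabla^2\bar V\,v\ge 0$ for $v\perp e_1$, which makes each conditional law on a slice $\{\theta_1=c\}\cap\mathcal B(0,M)$ log-concave on a convex body of diameter $2M$ and hence Poincar\'e with constant $\Omega(1/M^2)$. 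The variance decomposition $\Var[g]=\mathbb{E}[\Var(g\mid\theta_1)]+\Var[\mathbb{E}(g\mid\theta_1)]$ is also the right gluing device.

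The genuine gap is the marginal step, and you have in effect conceded it yourself. You assert that a Holley--Stroock comparison of the $\theta_1$-marginal against the truncated Gaussian $\mathcal N(\theta_0,\beta^{-1}I)|_{[0,A]}$ ``yields a polynomial-in-$A$ Poincar\'e constant,'' but you give no bound on the oscillation of the relevant log-density ratio over $[0,A]$ (after integrating out $\theta_\perp$, the marginal potential differs from any fixed quadratic by terms that are not obviously $O(1)$ over the whole interval), and two sentences later you state that a crude oscillation bound would be exponentially small and that obtaining a polynomial constant ``is the technical core of the cited result.'' That core is exactly what a proof must supply; asserting the conclusion of the hard step and deferring its mechanism to the paper you are trying to reprove leaves the argument circular at the decisive point. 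A secondary, smaller gap is the gluing: bounding $\Var_{\theta_1}[\mathbb{E}(g\mid\theta_1)]$ by the marginal Poincar\'e inequality requires controlling $\tfrac{d}{d\theta_1}\mathbb{E}(g\mid\theta_1)$, which contains a covariance term between $g$ and $\partial_1\bar V$ under the conditional law; you wave at this with ``the smoothness bound $\|\nabla^2\bar V\|\le\beta A^2$'' but do not carry out the bookkeeping, so the claimed exponents in $\Theta(1/(A^4M^2))$ are not actually derived. As an outline of the right strategy the proposal is sound; as a proof it is incomplete precisely where the difficulty lives.
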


\begin{lemma}[Lemma 4, \cite{mou2019sampling}]
\label{lem: wenlong_2}
    If $A,M\ge \Vert \theta_0 \Vert  +1$, we have with  probability at least $1-\delta$,
    \begin{equation*}
        \sup_{\theta\in [0,A]\times \mathcal{B}(0,M)} \vert V(\theta)-\bar V(\theta) \vert \le \mathcal{O}( (1+A+M) \sqrt{\frac{d}{n} \log \frac{n(A+M+d)}{\delta}}).
    \end{equation*}
\end{lemma}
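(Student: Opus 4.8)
\textbf{Proof proposal for Lemma \ref{lem: wenlong_2}.} The plan is to reduce the uniform-in-$\theta$ deviation of the empirical potential to a single, well-behaved empirical process by exploiting the algebraic structure of the symmetric two-component density. Writing $\varphi(y) = (2\pi)^{-d/2}e^{-\|y\|^2/2}$ and using $\|\theta \pm X\|^2 = \|\theta\|^2 + \|X\|^2 \pm 2\langle \theta, X\rangle$, one obtains the identity
\begin{equation*}
\log\Big(\tfrac12\varphi(\theta - X) + \tfrac12\varphi(\theta + X)\Big) = -\tfrac{d}{2}\log(2\pi) - \tfrac{\|\theta\|^2}{2} - \tfrac{\|X\|^2}{2} + \log\cosh(\langle\theta, X\rangle).
\end{equation*}
With $\lambda \equiv 1$ the first two terms are deterministic and cancel in $V(\theta) - \bar V(\theta)$; the term $-\|X\|^2/2$ is random but \emph{independent of $\theta$}; and only $\log\cosh(\langle\theta, X\rangle)$ couples $\theta$ and the data. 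Thus $V(\theta) - \bar V(\theta) = \beta\,(C_n + \Psi(\theta))$, where $C_n = -\tfrac12\big(\tfrac1n\sum_i\|X_i\|^2 - \mathbb{E}\|X\|^2\big)$ is a $\theta$-free random constant and $\Psi(\theta) = \tfrac1n\sum_i\log\cosh(\langle\theta, X_i\rangle) - \mathbb{E}\log\cosh(\langle\theta, X\rangle)$. It therefore suffices to control $|C_n|$ and $\sup_{\theta\in K}|\Psi(\theta)|$ separately over $K = [0,A]\times\mathcal{B}(0,M)$, and to absorb the factor $\beta$ into the $\mathcal{O}(\cdot)$.

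For $C_n$: under the model each $X_i$ is drawn from $\tfrac12\mathcal{N}(\theta_0, \Id_d) + \tfrac12\mathcal{N}(-\theta_0, \Id_d)$, so $\|X_i\|^2$ is sub-exponential with mean $d + \|\theta_0\|^2$. Bernstein's inequality then gives $|C_n| \lesssim \sqrt{\tfrac{d}{n}\log(1/\delta)} + \tfrac{d}{n}\log(1/\delta)$ with probability $1-\delta$, and in the regime $n \gtrsim d\log(1/\delta)$ the first term dominates; this already lies within the claimed bound.

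For $\sup_{\theta\in K}|\Psi(\theta)|$ I would use a net-plus-Lipschitz argument. Fix $\theta\in K$. Since $\langle\theta,\cdot\rangle$ is $\|\theta\|$-Lipschitz and $\log\cosh$ is $1$-Lipschitz, conditional on each mixture component $\log\cosh(\langle\theta,X\rangle)$ is a $\|\theta\|$-Lipschitz function of a standard Gaussian, hence sub-Gaussian with proxy $\|\theta\|^2$; crucially, because $\log\cosh$ is \emph{even}, the map $X\mapsto -X$ shows the two components induce the same law for $\log\cosh(\langle\theta,X\rangle)$, so no mean-separation term appears and the proxy is $\|\theta\|^2 = \mathcal{O}((A+M)^2)$. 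Pointwise Hoeffding then gives $|\Psi(\theta)| \lesssim (A+M)\sqrt{\log(1/\delta')/n}$. Moreover $|\Psi(\theta_1) - \Psi(\theta_2)| \le \big(\tfrac1n\sum_i\|X_i\| + \mathbb{E}\|X\|\big)\|\theta_1 - \theta_2\|$, and a separate concentration of $\tfrac1n\sum_i\|X_i\|$ bounds the empirical Lipschitz constant by $L \lesssim \sqrt d + \|\theta_0\| \le \sqrt d + A + M$ with high probability. Taking a $\gamma$-net of $K$ of cardinality $(\mathcal{O}((A+M)/\gamma))^d$, union-bounding the pointwise estimate over the net, and adding the discretization error $L\gamma$, I would choose $\gamma = 1/n$ so that $L\gamma$ is lower order and the metric-entropy log becomes $\mathcal{O}(d\log(n(A+M)))$. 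This yields $\sup_{\theta}|\Psi(\theta)| \lesssim (A+M)\sqrt{\tfrac{d}{n}\log\tfrac{n(A+M+d)}{\delta}}$; combining with the $|C_n|$ bound and the factor $\beta$ gives the lemma.

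The main obstacle is the uniform step: one must balance the net resolution $\gamma$ against the random Lipschitz constant $L$ and the $d$-dimensional metric entropy so that the discretization error $L\gamma$ stays negligible while the log factor remains $\mathcal{O}(\log(n(A+M+d)))$, and one must first establish the high-probability control of $\tfrac1n\sum_i\|X_i\|$ that governs $L$. The pointwise sub-Gaussian estimate and the $C_n$ bound are comparatively routine once the $\cosh$ decomposition and the evenness observation are in hand.
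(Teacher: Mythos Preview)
The paper does not give its own proof of this lemma: it is quoted verbatim as ``Lemma~4, \cite{mou2019sampling}'' and used as a black box. So there is no in-paper argument to compare against.

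Your proposal is a correct self-contained proof. The key algebraic step---rewriting $\log\big(\tfrac12\varphi(\theta-X)+\tfrac12\varphi(\theta+X)\big)$ as $-\tfrac{d}{2}\log(2\pi)-\tfrac{\|\theta\|^2}{2}-\tfrac{\|X\|^2}{2}+\log\cosh\langle\theta,X\rangle$---cleanly isolates a $\theta$-free term $C_n$ and a single empirical process $\Psi(\theta)$. Your use of the evenness of $\log\cosh$ to collapse the two mixture components to a single law (hence avoiding any mean-separation contribution to the sub-Gaussian proxy) is exactly the right observation, and the Gaussian--Lipschitz concentration, $\epsilon$-net with $\gamma=1/n$, and union bound then combine to the stated rate. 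The only minor imprecision is in the $C_n$ bound, where the variance proxy is $\mathcal O(d+\|\theta_0\|^2)$ rather than $\mathcal O(d)$; but since $\|\theta_0\|\le A$, this extra piece is already dominated by the $(1+A+M)$ prefactor in the final bound, so the conclusion is unaffected.
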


\begin{lemma}
\label{lem: bounded}
    Suppose $V$ obeys strong dissipativity, i.e., $\nabla V$ is $L$-lipschitz and $\langle \nabla V(x),x\rangle \ge m\Vert x\Vert^2-b$. Then, running LMC with $h<\frac{1}{16L}$ and $h \leq \frac{8m}{4b+32d}$, we have
    \begin{align*}
    \E_{x_k\sim \pi_k}{e^{\frac{1}{4m} \lrn{x_k}^2}} \leq 32 \cdot \exp\lrp{\frac{1}{4m} \lrp{8b + 64 d}}.
\end{align*}
\end{lemma}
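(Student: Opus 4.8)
The plan is to prove a one-step drift inequality for the exponential Lyapunov function $\Phi(x) := \exp\!\lrp{\tfrac{1}{4m}\lrn{x}^2}$ and then iterate it. Writing a single LMC step as $x_{k+1} = y_k + \sqrt{2h}\,\xi_k$ with $y_k := x_k - h\nabla V(x_k)$ and $\xi_k \sim \mathcal{N}(0, \mathbb{I}_d)$ drawn independently of $x_k$, I aim to establish
\begin{align*}
\E\!\lrb{\Phi(x_{k+1}) \mid x_k} \;\le\; \tfrac12\,\Phi(x_k) + 16\exp\!\lrp{\tfrac{1}{4m}(8b+64d)}
\end{align*}
for every $k$. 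Given this, the lemma follows by induction on $k$: the Gaussian initialization makes $\E\Phi(x_0)$ an explicit finite constant that the step-size hypotheses keep below $32\exp\!\lrp{\tfrac{1}{4m}(8b+64d)}$, and if $\E\Phi(x_k) \le 32\exp(\cdots)$ then taking expectations above yields $\E\Phi(x_{k+1}) \le \tfrac12 \cdot 32\exp(\cdots) + 16\exp(\cdots) = 32\exp(\cdots)$.

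The core step is the one-step inequality. Conditioning on $x_k$ and integrating out $\xi_k$ with the Gaussian identity $\E_{\xi}\exp\!\lrp{a\lrn{\xi}^2 + \langle v, \xi\rangle} = (1-2a)^{-d/2}\exp\!\lrp{\tfrac{\lrn{v}^2}{2(1-2a)}}$, taken with $a = \tfrac{h}{2m}$ and $v = \tfrac{\sqrt{2h}}{2m}\,y_k$, gives the closed form
\begin{align*}
\E\!\lrb{\Phi(x_{k+1}) \mid x_k} \;=\; \lrp{1 - \tfrac{h}{m}}^{-d/2}\exp\!\lrp{\frac{\lrn{y_k}^2}{4(m-h)}},
\end{align*}
which is legitimate because $h \le \tfrac{8m}{4b+32d}$ forces $\tfrac{h}{m} \le \tfrac14$, so $2a < 1$ and also $\lrp{1-\tfrac{h}{m}}^{-d/2} \le \exp\!\lrp{\mathcal{O}(dh/m)}$. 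Next I bound $\lrn{y_k}^2 = \lrn{x_k}^2 - 2h\langle \nabla V(x_k), x_k\rangle + h^2\lrn{\nabla V(x_k)}^2$: strong dissipativity gives $-2h\langle\nabla V(x_k), x_k\rangle \le -2hm\lrn{x_k}^2 + 2hb$; and since dissipativity makes $V$ coercive it has a critical point $x^\star$ with $\lrn{x^\star}^2 \le b/m$, so $L$-Lipschitzness of $\nabla V$ gives $\lrn{\nabla V(x_k)}^2 \le 2L^2\lrn{x_k}^2 + 2L^2 b/m$, and the condition $h < \tfrac{1}{16L}$ lets me absorb the $h^2\lrn{\nabla V(x_k)}^2$ term, leaving an additive term that the step-size conditions keep of order $h(b+d)$. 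Substituting the resulting estimate $\lrn{y_k}^2 \le (1-hm)\lrn{x_k}^2 + \mathcal{O}(h(b+d))$ into the exponent and comparing $\tfrac{\lrn{y_k}^2}{4(m-h)}$ with $\tfrac{1}{4m}\lrn{x_k}^2$, the coefficient of $\lrn{x_k}^2$ does not increase, so $\E[\Phi(x_{k+1})\mid x_k] \le \exp\!\lrp{-c\lrn{x_k}^2 + C'}\Phi(x_k)$ for some $c > 0$; splitting according to whether $\lrn{x_k}$ exceeds the radius at which $\exp(-c\lrn{x_k}^2 + C') \le \tfrac12$, and bounding the whole expression on the complementary ball by $16\exp\!\lrp{\tfrac1{4m}(8b+64d)}$, produces the drift inequality.

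The main obstacle is the constant bookkeeping in this last step: one must apportion the $-2hm\lrn{x_k}^2$ budget between cancelling the discretization term $h^2\lrn{\nabla V(x_k)}^2$ and leaving a strictly negative residual drift, while at the same time keeping the $d$-dependent prefactor $(1-h/m)^{-d/2}$ and the additive term small enough that they collapse exactly onto $\tfrac12$ and $16\exp(\tfrac1{4m}(8b+64d))$ — so that the threshold radius comes out to $\sqrt{8b+64d}$. This is precisely where the two step-size hypotheses are used in tandem: $h < \tfrac1{16L}$ to tame the gradient/discretization contribution, and $h \le \tfrac{8m}{4b+32d}$ to tame the Gaussian-integration prefactor and the size of the exceptional ball. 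Checking that after integrating out $\xi_k$ the quadratic coefficient genuinely does not grow — which constrains $m$ relative to $h$, $L$, $b$, $d$ — is the delicate point; the Gaussian integral identity, the dissipativity manipulations, the bound $\lrn{x^\star}^2 \le b/m$, and the closing induction are all routine.
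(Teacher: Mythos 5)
Your overall strategy is the same as the paper's: take the exponential Lyapunov function $\Phi(x)=\exp\lrp{\tfrac1{4m}\lrn{x}^2}$, prove a one-step drift bound by integrating out the Gaussian noise, use dissipativity to make the drift negative outside a ball, and split on whether $\lrn{x_k}^2$ exceeds $8b+64d$. The only structural difference is how the recursion is closed: the paper stops at the one-step inequality $\E\Phi(x_{k+1})\le(1-\gamma)\E\Phi(x_k)+32\gamma\exp\lrp{\tfrac1{4m}(8b+64d)}$ with $\gamma=\tfrac{h(4b+32d)}{32m}$ and then \emph{assumes $x_k$ is drawn from the invariant distribution} to solve $\E\Phi(x_k)=\E\Phi(x_{k+1})$ for the bound, whereas you iterate from the initialization. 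Your route is arguably preferable, since the lemma is applied to the finite-time law $\pi_k$ (via Lemma \ref{lem: bound_R3}), and the iteration $\E\Phi(x_k)\le(1-\gamma)^k\E\Phi(x_0)+32\exp(\cdots)$ delivers exactly that, provided $\E\Phi(x_0)$ is controlled. Your use of the exact Gaussian MGF in place of the paper's Cauchy--Schwarz plus sub-exponential bounds is cosmetic; and the implicit requirement that $m$ not be too small (so that the quadratic coefficient does not grow after convolving with the noise), which you correctly flag as the delicate point, is a constraint the paper's own computation shares (their step absorbing $\tfrac{h}{m^2}\lrn{x_k}^2$ into $\tfrac{h}{8m}\lrn{x_k}^2$ needs $m\ge 8$), so it is not a defect specific to your argument.

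The one claim in your plan that would fail as written is the form of the drift inequality: $\E\lrb{\Phi(x_{k+1})\mid x_k}\le\tfrac12\Phi(x_k)+16\exp\lrp{\tfrac1{4m}(8b+64d)}$. A single LMC step of size $h$ can only contract $\Phi$ by a factor $1-\Theta(h\cdot(b+d)/m)$, never by an $h$-independent constant such as $\tfrac12$: the one-step bound has the form $\E\lrb{\Phi(x_{k+1})\mid x_k}\le\exp\lrp{-\tfrac{h}{8m}(\lrn{x_k}^2-4b-32d)}\Phi(x_k)$, and forcing the multiplicative factor down to $\tfrac12$ requires $\lrn{x_k}^2\gtrsim m/h$, so the "threshold radius" cannot come out to $\sqrt{8b+64d}$ and the supremum of $\Phi$ on the complementary ball would blow up like $\exp(\Theta(1/h))$ rather than being $16\exp\lrp{\tfrac1{4m}(8b+64d)}$. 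The repair is routine and does not change your conclusion: split at $\lrn{x_k}^2= 8b+64d$ to get the Foster--Lyapunov recursion $\E\Phi(x_{k+1})\le(1-\gamma)\E\Phi(x_k)+C\gamma\exp\lrp{\tfrac1{4m}(8b+64d)}$ with $\gamma=\Theta(h(b+d)/m)$ (this is precisely the paper's displayed recursion), whose iteration telescopes to $\E\Phi(x_k)\le(1-\gamma)^k\E\Phi(x_0)+C\exp\lrp{\tfrac1{4m}(8b+64d)}$ because $\sum_j(1-\gamma)^j\gamma\le1$; the fixed point, and hence the final constant, is unchanged. With that substitution your proof goes through and in fact yields the statement at every finite $k$.
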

\begin{proof}
    Define $f(x) = e^{\frac{1}{4m} \lrn{x}^2}$. Assume wlog that $\nabla V(0) = 0$.

Then,
\begin{align*}
    f(x_{k+1}) 
    =& \exp\lrp{\frac{1}{4m} \lrn{x_k}^2 + \frac{1}{2m} \lin{-h \nabla V(x_k) + \sqrt{2h} \xi_k, x_k} + \frac{1}{4m} \lrn{-h \nabla V(x_k) + \sqrt{2h} \xi_k}^2 }\\
    \leq& \exp\lrp{\frac{1}{4m} \lrn{x_k}^2 - \frac{h}{2m} \lrp{\lrn{x_k}^2 -b} + \frac{\sqrt{2h}}{2m} \lin{\xi_k,x_k} + \frac{1}{2m}\lrn{h^2 \nabla V(x_k)}^2 + \frac{h}{m} \lrn{\xi_k}^2 } \\
    \leq& \exp\lrp{\frac{1}{4m} \lrn{x_k}^2 - \frac{h}{4m} \lrp{\lrn{x_k}^2 -2b} + \frac{\sqrt{2h}}{2m} \lin{\xi_k,x_k} +  \frac{h}{m} \lrn{\xi_k}^2}.
\end{align*}

Let $\E_k$ denote expectation wrt $\xi_k$. Then

\begin{align*}
    \E_k\lrb{f(x_{k+1})} 
    \leq& \exp\lrp{\frac{1}{4m} \lrn{x_k}^2 - \frac{h}{4m} \lrp{\lrn{x_k}^2 -2b}} \cdot \E_k\lrb{ \frac{\sqrt{2h}}{2m} \lin{\xi_k,x_k} +  \frac{h}{m} \lrn{\xi_k}^2}\\
    \leq& \exp\lrp{\frac{1}{4m} \lrn{x_k}^2 - \frac{h}{4m} \lrp{\lrn{x_k}^2 -2b}} \cdot \E_k\lrb{ \frac{\sqrt{2h}}{m} \lin{\xi_k,x_k}}^{1/2} \cdot \E_k\lrb{ \frac{2h}{m} \lrn{\xi_k}^2}^{1/2}
\end{align*}

Using the fact that $\chi^2$ variable is sub-exponential, 
\begin{align*}
    \E_k\lrb{\exp\lrp{\frac{2h}{m} \lrn{\xi_k}^2}} \leq \exp\lrp{\frac{4h d}{m}}
\end{align*}

On the other hand, notice that $\lin{\xi_k, x_k} \sim \mathcal{N}(0,\lrn{x_k}^2)$ is a 1-dimensional gaussian random variable. It can be shown that
\begin{align}
    \E_k\lrb{\exp\lrp{\frac{\sqrt{2h}}{m}\lin{\xi_k,x_k}}} \leq \exp\lrp{\frac{2h}{m^2} \lrn{x_k}^2}.
\end{align}

Combining the above, 
\begin{align*}
    \E_k\lrb{f(x_{k+1})} 
    \leq& \exp\lrp{\frac{1}{4m} \lrn{x_k}^2 - \frac{h}{4m} \lrp{\lrn{x_k}^2 -2b} + \frac{2h d}{m} + \frac{h}{m^2} \lrn{x_k}^2}\\
    \leq& \exp\lrp{\frac{1}{4m} \lrn{x_k}^2 - \frac{h}{8m} \lrp{\lrn{x_k}^2 -4b - 32 d}}
\end{align*}

Let $\E$ denote expectation wrt all randomness. Then
\begin{align*}
    &\E\lrb{f(x_{k+1})}\\
    \leq& \E\lrb{\exp\lrp{\frac{1}{4m} \lrn{x_k}^2 - \frac{h}{8m} \lrp{\lrn{x_k}^2 -4b - 32 d}}}\\
    =& \E\lrb{\exp\lrp{\frac{1}{4m} \lrn{x_k}^2 - \frac{h}{8m} \lrp{\lrn{x_k}^2 -4b - 32 d}} 1\lrbb{\lrn{x_k}^2 \geq 8b + 64 d}}\\
    &\quad + \E\lrb{\exp\lrp{\frac{1}{4m} \lrn{x_k}^2 - \frac{h}{8m} \lrp{\lrn{x_k}^2 -4b - 32 d}} 1\lrbb{\lrn{x_k}^2 \leq 8b + 64 d}}\\
    \leq& \E\lrb{\exp\lrp{\frac{1}{4m} \lrn{x_k}^2 - \frac{h}{16m} \lrp{4b + 32 d}} 1\lrbb{\lrn{x_k}^2 \geq 8b + 64 d}}\\
    &\quad + \E\lrb{\exp\lrp{\frac{1}{4m} \lrn{x_k}^2 + \frac{h}{8m} \lrp{4b + 32 d}} 1\lrbb{\lrn{x_k}^2 \leq 8b + 64 d}}\\
    \leq& \E\lrb{\exp\lrp{\frac{1}{4m} \lrn{x_k}^2} 1\lrbb{\lrn{x_k}^2 \geq 8b + 64 d} \cdot \lrp{1- \frac{h}{32m} \lrp{4b + 32 d}}}\\
    &\quad + \E\lrb{\exp\lrp{\frac{1}{4m} \lrn{x_k}^2} 1\lrbb{\lrn{x_k}^2 \leq 8b + 64 d} \cdot\lrp{1+ \frac{h}{4m} \lrp{4b + 32 d}}}\\
    =& \E\lrb{\exp\lrp{\frac{1}{4m} \lrn{x_k}^2}} - \frac{h}{32m} \lrp{4b + 32 d} \E\lrb{\exp\lrp{\frac{1}{4m} \lrn{x_k}^2} 1\lrbb{\lrn{x_k}^2 \geq 8b + 64 d} }\\
    &\quad + \frac{h}{4m} \lrp{4b + 32 d}\E\lrb{\exp\lrp{\frac{1}{4m} \lrn{x_k}^2} 1\lrbb{\lrn{x_k}^2 \leq 8b + 64 d}} \\
    \leq& \E\lrb{\exp\lrp{\frac{1}{4m} \lrn{x_k}^2}} - \frac{h}{32m} \lrp{4b + 32 d} \E\lrb{\exp\lrp{\frac{1}{4m} \lrn{x_k}^2}}\\
    &\quad + \frac{h}{m} \lrp{4b + 32 d}\cdot \exp\lrp{\frac{1}{4m} \lrp{8b + 64 d}} \\
    =& \E\lrb{f(x_k)} - \frac{h}{32m} \lrp{4b + 32 d} \E\lrb{f(x_k)} + \frac{h}{m} \lrp{4b + 32 d}\cdot \exp\lrp{\frac{1}{4m} \lrp{8b + 64 d}}. 
\end{align*}

Suppose that $x_k$ is drawn from the invariant distribution under LMC. Then we know that $\E{f(x_k)} = \E{f(x_{k+1})}$. In this case,

\begin{align*}
    0 \leq - \frac{h}{32m} \lrp{4b + 32 d} \E\lrb{f(x_k)} + \frac{h}{m} \lrp{4b + 32 d}\cdot \exp\lrp{\frac{1}{4m} \lrp{8b + 64 d}}.
\end{align*}

Moving things around gives
\begin{align*}
    \E{f(x_k)} \leq 32 \cdot \exp\lrp{\frac{1}{4m} \lrp{8b + 64 d}}.
\end{align*}
\end{proof}

We are now ready to prove the lemmas in the main text.
\begin{proof}[Proof of Lemma \ref{lem: sampling_local_pi}]
    Based on Lemma \ref{lem: wenlong_2}, if $n\ge \tilde{\Theta}((A+M)^2d\log (1/\delta))$, we have with  probability at least $1-\delta$,
    \begin{equation*}
        \sup_{\theta\in R_1} \vert V(\theta)-\bar V(\theta) \vert \le \mathcal{O}(1).
    \end{equation*}
As a result, we have $
    \frac{\bar{\pi}|R_1}{\pi|R_1}(\theta) =\Theta(1)$,
    and by Lemma \ref{lem: holley_strock} and Lemma \ref{lem: wenlong_1}, the proof is completed.
\end{proof}

\begin{proof}[Proof of Lemma \ref{lem: ver_diss}] To begin with, set $\xi\sim \mathcal{N}(0,\mathbb{I}_d)$, we have
\begin{align*}
    \left\langle\nabla \bar{V}(\theta), \theta\right\rangle=&\beta\|\theta\|^2+\beta \mathbb{E}\left(\frac{-\varphi\left(\Vert \theta_0 \Vert  e_1+\xi-\theta\right)+\varphi\left(\Vert \theta_0 \Vert  e_1+\xi+\theta\right)}{\varphi\left(\Vert \theta_0 \Vert  e_1+\xi-\theta\right)+\varphi\left(\Vert \theta_0 \Vert  e_1+\xi+\theta\right)} \theta^{\top}\left(\Vert \theta_0 \Vert  e_1+\xi\right)\right)
    \\
    &\ge \frac{\beta}{2}\|\theta\|^2-\beta\left(\left\|\theta_0\right\|^2+1\right).
\end{align*}
    We obtain 
    \begin{align*}
    \left\langle\nabla V(\theta), \theta\right\rangle\ge \frac{\beta}{2}\|\theta\|^2-2\beta\left(\left\|\theta_0\right\|^2+1\right)
\end{align*}
following a standard empirical process argument due to $n\ge \tilde{\Theta}((A+M)^2d\log (1/\delta))$ (see Lemma 6, \cite{mou2019sampling} as an example).

Meanwhile, denote $\tau_i=1$ if $X_i$ is sampled from $\mathcal{N}(\theta_0, \Id_d)$ and $\tau_i=-1$ else-wise. We have 
\begin{align*}
    \nabla^2 V(\theta)=&
\beta \mathbb{I}_d-\frac{\Vert \theta_0 \Vert ^2 \beta}{n}
\sum_{i=1}^n\left(\frac{4 \varphi(X_i-\theta) \varphi(X_i+\theta)}{(\varphi(X_i-\theta)+\varphi(X+\theta))^2}\right) e_1 e_1^{\top} 
\\
&-\frac{\beta}{n}\sum_{i=1}^n \left(\frac{4 \varphi(X_i-\theta) \varphi(X_i+\theta)}{(\varphi(X_i-\theta)+\varphi(X_i+\theta))^2} (X_i-\tau_i\Vert \theta_0 \Vert e_1)(X_i-\tau_i \Vert \theta_0 \Vert e_1)^{\top} \right),
\end{align*}
and thus $ (\beta-\Vert\theta_0\Vert^2\beta-\frac{\beta}{n}\sum_{i=1}^n \Vert X_i-\tau_i \Vert \theta_0 \Vert e_1\Vert^2)  \mathbb{I}_d\le \nabla^2 V(\theta)\le \beta \mathbb{I}_d$. As $X_i-\tau_i \Vert \theta_0 \Vert e_1\sim \mathcal{N}(0,\Id_d)$, by concentration inequality of chi-square variable and since $n\ge \tilde{\Theta}((A+M)^2d\log (1/\delta))$, we have with probability at least $1-\delta$,
\begin{equation*}
    \Vert \nabla^2 V (\theta) \Vert \le 2\beta(1+\Vert \theta_0\Vert^2 ).
\end{equation*}
The claim for $\Vert \nabla^3 V \Vert $ can be calculated following the similar routine. The proof is completed.
\end{proof}

\begin{proof}[Proof of Lemma \ref{lem: bound_R3}]
    The claim directly follows by applying Markov's inequality to Lemma \ref{lem: bounded}. 
\end{proof}

\section{Proof of Theorem~\ref{thm:gibbs-conditional}}\label{app:proof-gibbs}

\begin{proof}
    We consider the variational form of spectral gap. We first define the Dirichlet form and the variance
    \begin{align*}
        \mathcal{E}_P(\phi, \phi) &= \inp*{\phi}{(I - P)\phi}_\pi = \sum_{x, y} \pi(x) \phi(x) (\Id(x, y) - P(x, y))\phi(y), \\
        \Var_\pi[\phi] &= \sum_x \pi(x) (\phi(x) - \mathbb{E}_\pi [\phi])^2
    \end{align*}
    Then by the fact that $P$ is lazy, and hence $P = \frac{1}{2}(I + \hat{P})$ for some reversible transition $\hat{P}$, we have
    \begin{align*}
         \Var_\pi[P\phi] \le  \Var_\pi[\phi] - \mathcal{E}_P(\phi, \phi), 
    \end{align*}
    By nonnegativity $\mathcal{E}_P(\phi, \phi) \ge 0,$, we have
    \begin{align} \label{var-ep}
        \sum_{t \le T} \mathcal{E}_P\left(\frac{\mu_t}{\pi}, \frac{\mu_t}{\pi}\right) \le  \Var_\pi\left[\frac{\mu_0}{\pi}\right]  .
    \end{align}

    By the variational form of spectral gap,
    \begin{align*}    
         \alpha \le \inf_{\phi \ \text{non-constant}} \frac{\mathcal{E}_P(\phi, \phi)}{\Var_\pi[\phi]}.
    \end{align*}
    Therefore, applying the Markov chain generated by $P_i$ we have that for any $i \le m,$
    \begin{align}\label{vari-epi}
        \sum_t \Var_{\pi_i}\left[\frac{\mu_t | \mathcal{X}_i}{\pi | \cX_i}\right] \le  \frac{1}{\alpha} \sum_t \mathcal{E}_{P_i} \left(\frac{\mu_t | \mathcal{X}_i}{\pi | \cX_i}, \frac{\mu_t | \mathcal{X}_i}{\pi | \cX_i} \right) 
    \end{align}

    We then note that
    \begin{align*}
        \mathcal{E}_P\rbr{\frac{\mu_t}{\pi}, \frac{\mu_t}{\pi}} &= \frac{1}{2} \sum_{x, y} \pi(x) P(x, y) (\frac{\mu_t}{\pi}(x) - \frac{\mu_t}{\pi}(y))^2 \\
        &\ge   \frac{1}{2} \sum_{x, y \in \mathcal{X}_i} \pi(x) P(x, y) (\frac{\mu_t}{\pi}(x) - \frac{\mu_t}{\pi}(y))^2 \\
        &=  \frac{1}{2} \sum_{x, y \in \mathcal{X}_i} \pi(x) P_i(x, y) (\frac{\mu_t}{\pi}(x) - \frac{\mu_t}{\pi}(y))^2  \\
        &+  \frac{1}{2} \sum_{x, y \in \mathcal{X}_i} \pi(x) (P(x, y) -  P_i(x, y)) (\frac{\mu_t}{\pi}(x) - \frac{\mu_t}{\pi}(y))^2 
    \end{align*}
    We note by ~\eqref{eq: pi} that for any $x,y \in \mathcal{X}_i, x \neq y, $ $P(x, y) =  P_i(x, y)$ Therefore, the second term is zero. Hence,
    \begin{align*}
        \mathcal{E}_P\rbr{\frac{\mu_t}{\pi}, \frac{\mu_t}{\pi}} \ge  \frac{\pi(\cX_i)^2}{2 \mu_t(\cX_i)^2} \mathcal{E}_{P_i}\rbr{\frac{\mu_t | \cX_i}{\pi | \cX_i}, \frac{\mu_t | \cX_i}{\pi | \cX_i}}
    \end{align*}
    Combining with ~\eqref{var-ep} and ~\eqref{vari-epi} we get,
    \begin{align}
        \frac{1}{T} \sum_t \Var_{\pi_i}\left[\frac{\mu_t | \mathcal{X}_i}{\pi | \cX_i}\right] / \mu_t(\cX_i)^2 \le \frac{2 }{\pi(\cX_i)^2\alpha T} \Var_\pi\left[\frac{\mu_0}{\pi}\right] 
    \end{align}

    The claim then follows by discussing if for all $t$, $\mu_t(\cX_i) \le \pi(\cX_i) T^{-1/4}$, then
    \begin{align}
        \frac{1}{T} \sum_t \Var_{\pi_i}\left[\frac{\mu_t | \mathcal{X}_i}{\pi | \cX_i}\right] \le \frac{2 }{\alpha T^{1/2}} \Var_\pi\left[\frac{\mu_0}{\pi} . \right] 
    \end{align}
\end{proof}

\section{Proof for Theorem~\ref{thm:quasi-convex}}\label{app:proof-quasi}

\begin{proof}

Given the subgraph $V$, we have that conditioned transition kernel is 
\begin{align*}
    p'(x, y) = p(x, y) + \mathbbm{1}\cbr{x = y} P(x, (V')^c).
\end{align*}

We analyze the conductance on graph $V'$ induced by $p'$,
\begin{align*}
    \Phi = \min_{V_1 \subset V'} \frac{\sum_{x \in V_1} \pi(x)p'(x, V_1^c) }{\min\{\pi(V_1), 1 - \pi(V_1)\}}  
\end{align*}

For any partition $V_1, V_2$ of $V'$, without loss of generality assume the local min of $f$ is in $V_2$, $v^* \in V_2$. 
For simplicity, for any $x \in V_1$, we denote $\tau(x)$  be the last element in its shortest path to $v^*$.  We note that 
\begin{align*}
    d(x, \tau(x)) \le D, \pi(x) \le \pi(\tau(x)).
\end{align*}
Denote $\tau(V_1)$ be the set of such elements in $V_1$. Then we have 
\begin{align}
    \sum_{x \in V_1} \pi(x)p'(x, V_1^c)  \ge \frac{\sum_{x \in \tau(V_1)} \pi(x)}{4d}.
\end{align}

Further denote $\tau(x, r)$ for some $r \in \{0, 1,...,d\}$ as the set of elements at distance $r$ from $\tau(x)$ along the descending shortest paths. We provide an illustrative figure in \ref{fig:proof}.
Then we have that 
\begin{align}
    \pi(V_1) = \sum_{\tau(x) \in \tau(V_1)} \sum_{r=1}^D \tau(x, r).
\end{align}
We note by reversibility that 
\begin{align}
    \pi(\tau(x, r)) p'(\tau(x, r), \tau(x, r+1)) = \tau(x, r+1)p'(\tau(x, r+1), \tau(x, r)). 
\end{align}
Further by the fact that the function along the path is descending, we get that 
\begin{align}
    p'(\tau(x, r), \tau(x, r+1)) \ge p'(\tau(x, r+1), \tau(x, r)).
\end{align}
Therefore, 
\begin{align}
    \pi(\tau(x, r)) \ge \pi(\tau(x, r+1)).
\end{align}
Hence we have 
\begin{align}
    \pi(V_1) = \sum_{\tau(x) \in \tau(V_1)} \sum_{r=1}^D \tau(x, r)\le D \pi(\tau(V_1))
\end{align}

Therefore, when $\pi(V_1) \le 1/2$ we have 
\begin{align*}
   \frac{\sum_{x \in V_1} \pi(x)p'(x, V_1^c) }{\min\{\pi(V_1), 1 - \pi(V_1)\}}  \ge \frac{1}{4dD}.
\end{align*}
when $\pi(V_1) \ge 1/2$ we have 
\begin{align*}
   \frac{\sum_{x \in V_1} \pi(x)p'(x, V_1^c) }{\min\{\pi(V_1), 1 - \pi(V_1)\}}  &\ge  2\sum_{x \in V_1} \pi(x)p'(x, V_1^c) \\
   &\ge  \frac{\sum_{x \in \tau(V_1)} \pi(x)}{2d}  \ge  \frac{\pi(\tau(V_1)) }{2dD} \ge \frac{1}{4dD}.
\end{align*}

The theorem then follows by Cheeger's inequality.

\begin{figure}[h]
\centering
\includegraphics[width=0.3\textwidth]{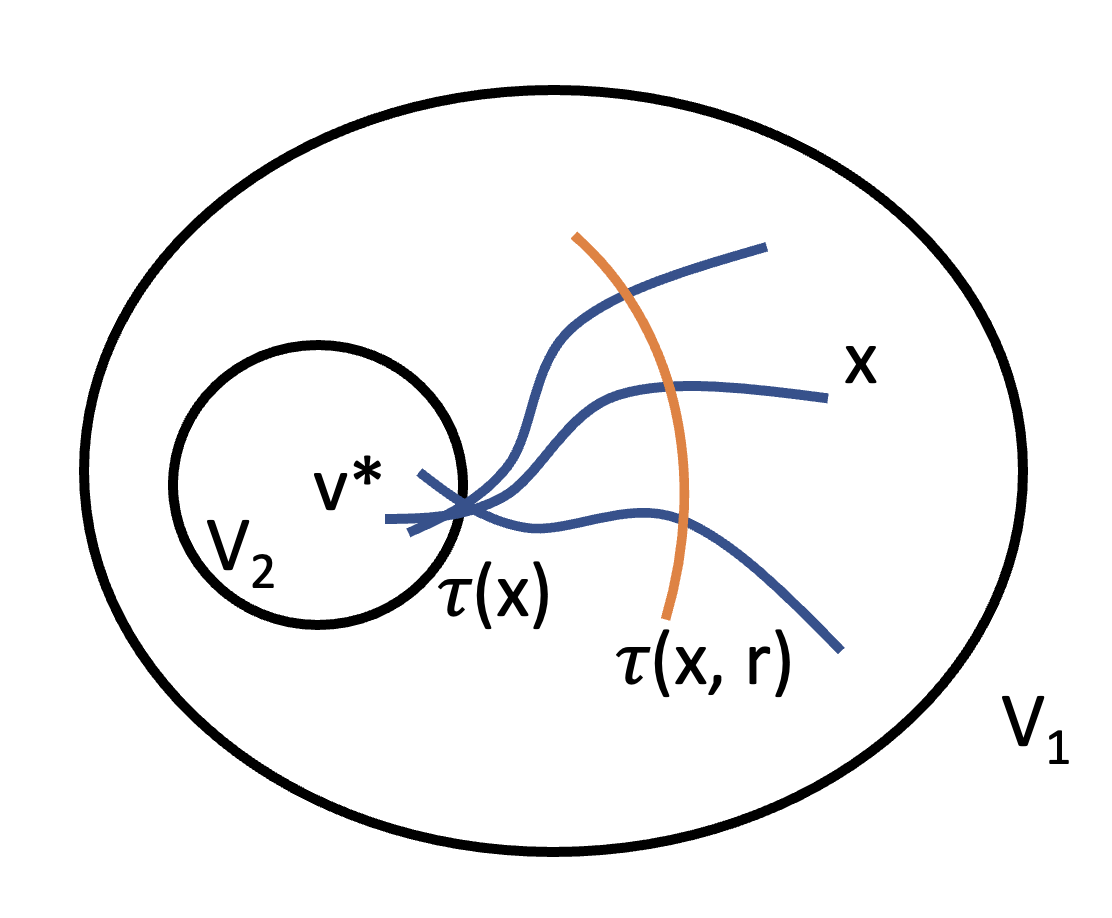}
\caption{An illustration for the definitions of $\tau(x)$and $\tau(x, r)$.}\label{fig:proof}
\end{figure}
    
\end{proof}

\section{Additional Experiments}
\label{exper: restart}

We still use $\pi_3$ in \ref{exper: mixture} as our target distribution. We initially set $n = 1, 10, 2000$ particles to run the LMC sampling algorithm respectively. We collect the locations of these particles after $T = 1000$ iterations as valid samples. The target distribution is shown in Figure \ref{fig: target}; empirical distributions using $n = 1, 10, 2000$ particles are shown in Figure \ref{fig: 1particle}, 
\ref{fig: 10particles}, and \ref{fig: 2000particles}.

\begin{figure}[h]
    \centering
        \begin{subfigure}[b]{0.235\textwidth}
        \includegraphics[width = \textwidth]{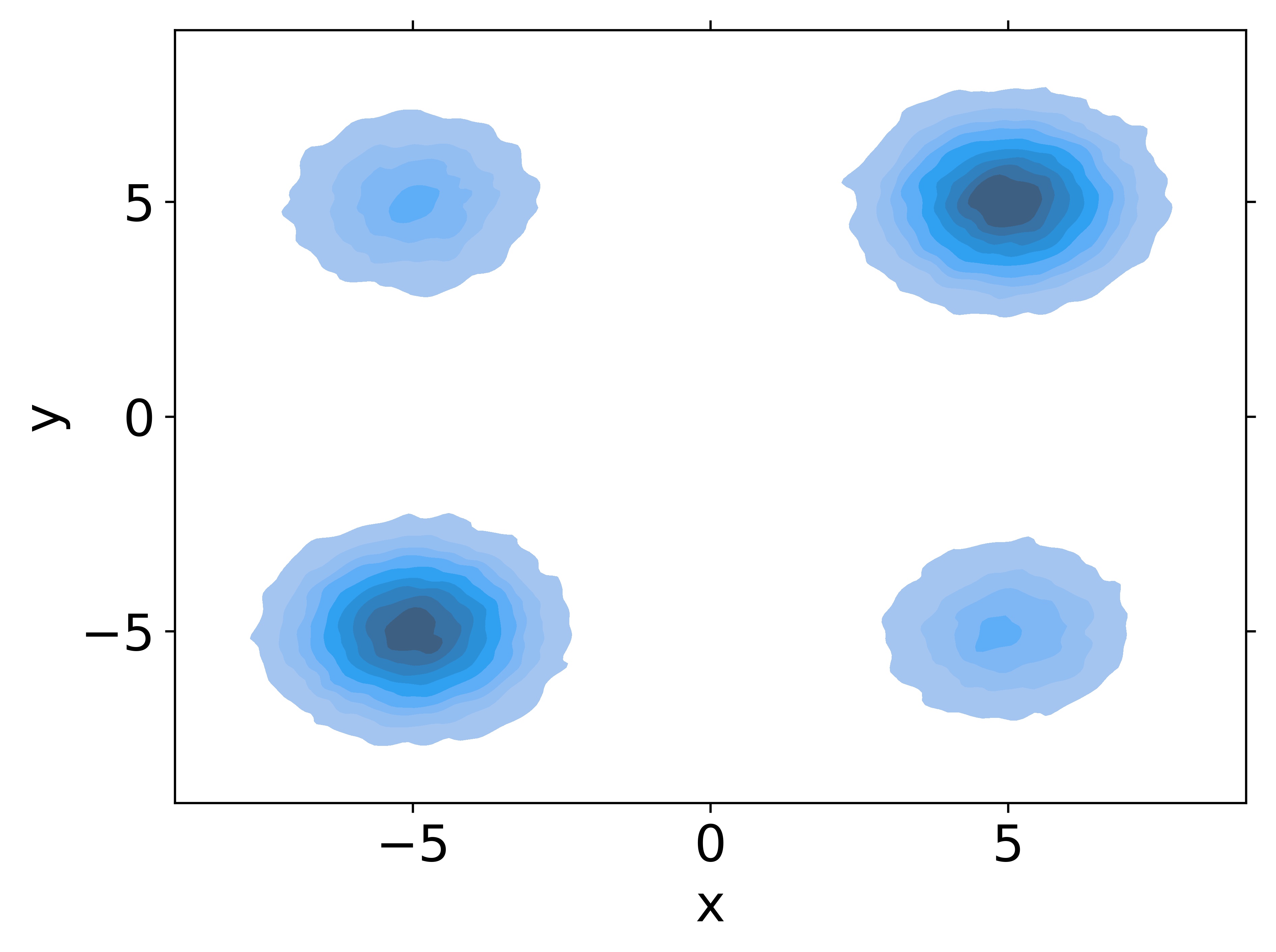}
        \caption{target distribution}
        \label{fig: target}
    \end{subfigure}
    \begin{subfigure}[b]{0.235\textwidth}
        \centering
        \includegraphics[width = \textwidth]{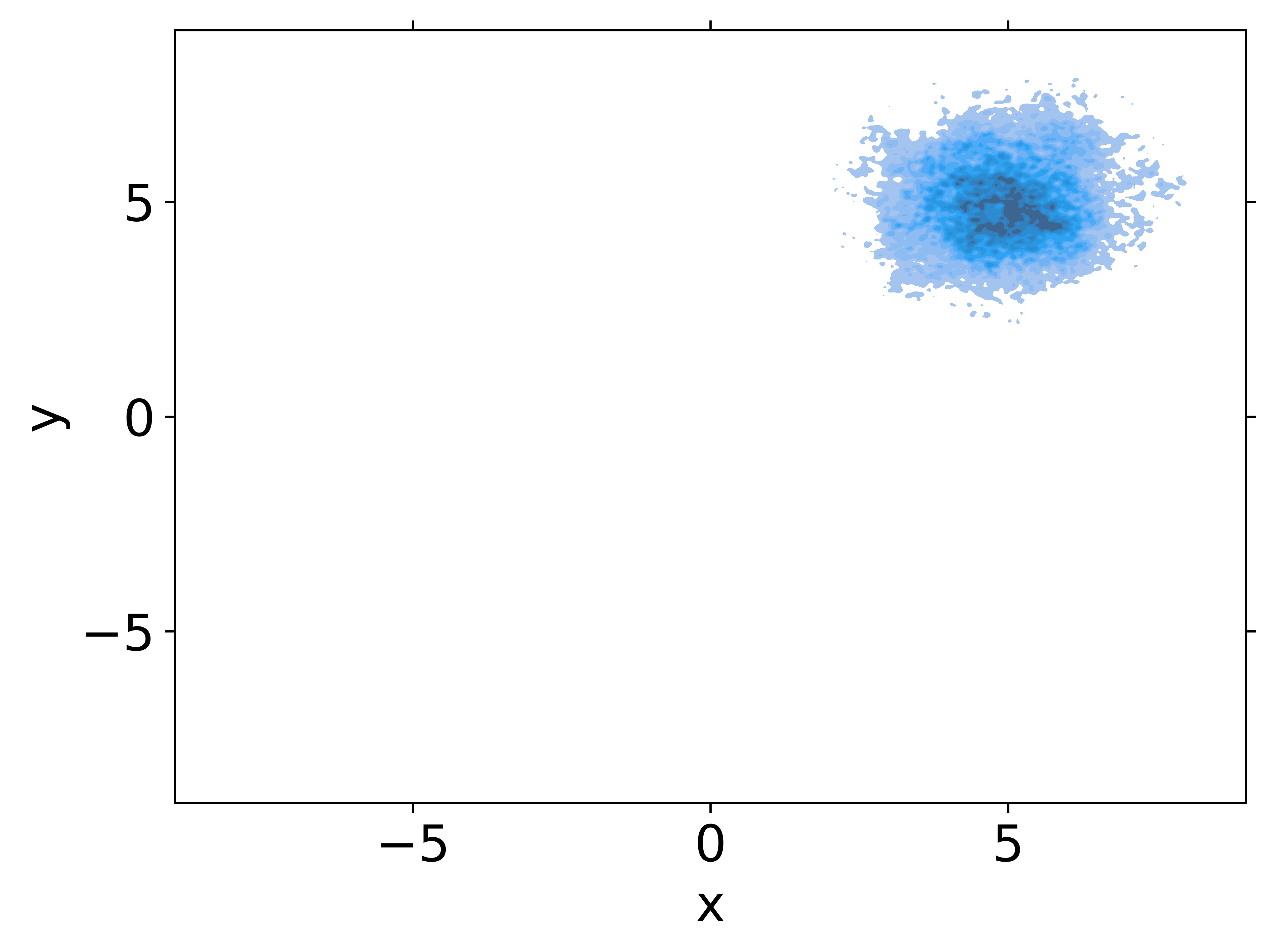}
        \caption{1 particle}
        \label{fig: 1particle}
    \end{subfigure}
    \begin{subfigure}[b]{0.235\textwidth}
        \centering
        \includegraphics[width = \textwidth]{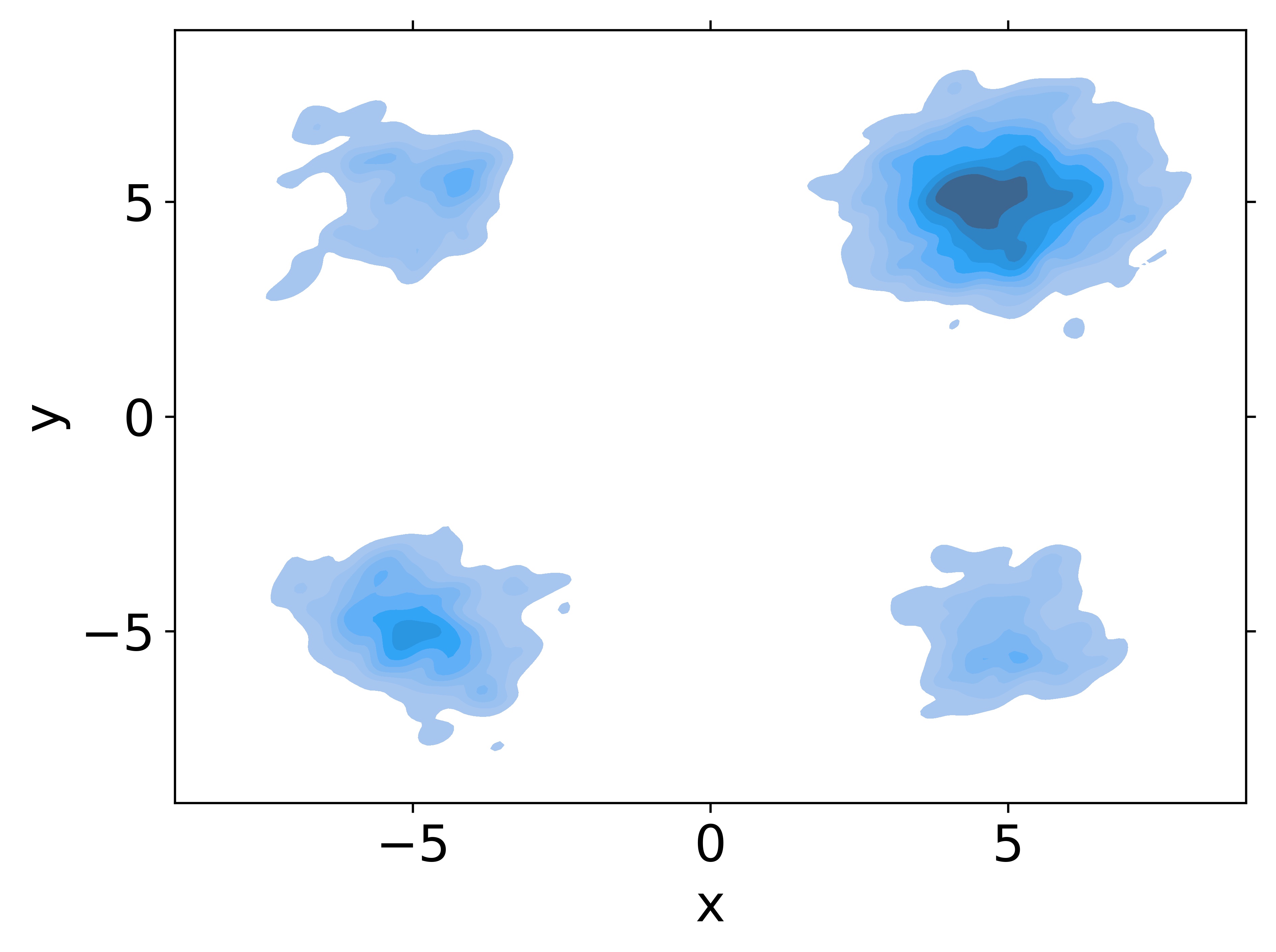}
        \caption{10 particles}
        \label{fig: 10particles}
    \end{subfigure}
    \begin{subfigure}[b]{0.235\textwidth}
        \centering
        \includegraphics[width = \textwidth]{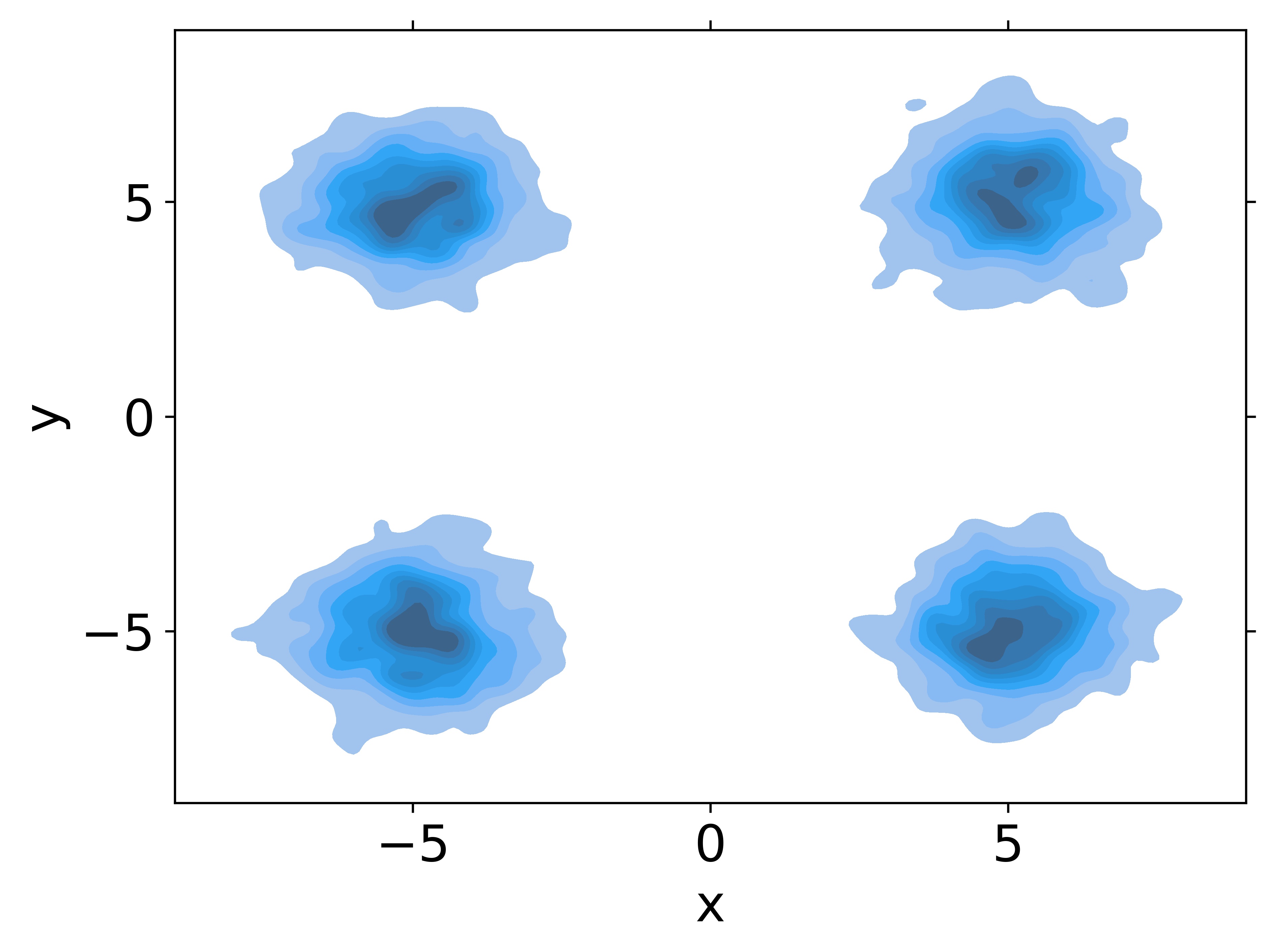}
        \caption{2000 particles}
        \label{fig: 2000particles}
    \end{subfigure}

    \label{fig: experiment}
\end{figure}
We highlight two observations from the experiments. \emph{First} when we use only one particle, it is always trapped in one Gaussian component and we never get samples from other modes. It has very bad global convergence, but still gets good conditional convergence: The convergence result conditioned on the 1st quadrant is good, and the sample after $T$ iterations has no probability distributed on the other 3 quadrants.
\emph{Second}, when the number of particles grows, the sample after $T$ iterations has non-negligible probabilities distributed on all of the 4 quadrants, which means we could capture all of the Gaussian components. The results indicate that although restarting LMC may not directly lead to global convergence, it may help us capture the features in multi-modal distributions, which further implies we may empirically eliminate the "small probability mass" condition in Corollary \ref{coro: local_lsi} and mitigate the gap between sampling distribution and target distribution.

A broader implication could be on ensemble and stochastic averaging in neural network training, where ensemble follows the restart procedure where as stochastic averaging is closer to sampling from a single trajectory. Our theory and experiment suggest that the two can have very different distributions in the end. 
\newpage


\end{document}